\documentclass[]{fairmeta}

\usepackage{wrapfig}
\usepackage{tabularx}
\usepackage{textcomp}
\usepackage{stfloats}
\usepackage{url}
\usepackage{verbatim}
\usepackage{titlesec}
\usepackage{tocloft}
\usepackage{adjustbox}
\usepackage{multirow}
\usepackage{pifont}
\usepackage{tikz}
\usepackage{comment}
\usepackage{amsmath,amssymb} 
\usepackage{colortbl}  
\usepackage{color}
\usepackage{booktabs} 
\usepackage{hyperref}
\usepackage{graphicx}    
\usepackage{subcaption} 
\usepackage{multirow} 
\usepackage{booktabs} 
\usepackage{subcaption} 
\RequirePackage{xspace}
\makeatletter
\DeclareRobustCommand\onedot{\futurelet\@let@token\@onedot}
\def\@onedot{\ifx\@let@token.\else.\null\fi\xspace}

\makeatother

\definecolor{adptorange}{RGB}{248, 205, 172}
\definecolor{cmpblue}{RGB}{189, 215, 238}
\definecolor{cmpblue}{RGB}{189, 215, 238}

\definecolor{our_red}{RGB}{232,157,160}
\definecolor{our_blue}{RGB}{136,206,230}
\definecolor{our_orange}{RGB}{246,200,168}
\definecolor{our_green}{RGB}{178,211,164}

\definecolor{attn_code0}{RGB}{247,215,200}
\definecolor{attn_code1}{RGB}{238,169,139}
\definecolor{mlp_code0}{RGB}{204,201,221}
\definecolor{mlp_code1}{RGB}{102,95,153}

\definecolor{token_blue}{RGB}{84, 120, 140}

\usepackage{pifont}       
\usepackage{bbding}       
\usepackage{fontawesome}
\usepackage{xspace}

\usepackage{float}

\newlength\savewidth

\newcolumntype{x}[1]{>{\centering\arraybackslash}p{#1pt}}
\newcolumntype{y}[1]{>{\raggedright\arraybackslash}p{#1pt}}
\newcolumntype{z}[1]{>{\raggedleft\arraybackslash}p{#1pt}}

\renewcommand{\paragraph}[1]{\vspace{1mm}\noindent\textbf{#1}}
\usepackage{colortbl}
\usepackage{xcolor}
\usepackage{wrapfig}

\renewcommand{\paragraph}[1]{\vspace{1.25mm}\noindent\textbf{#1}}

\usepackage{algorithm}
\usepackage{listings}

\definecolor{codeblue}{rgb}{0.25, 0.5, 0.5}
\definecolor{codekw}{rgb}{0.35, 0.35, 0.75}
\lstdefinestyle{Pytorch}{
    language = Python,
    backgroundcolor = \color{white},
    basicstyle = \fontsize{9pt}{8pt}\selectfont\ttfamily\bfseries,
    columns = fullflexible,
    aboveskip=1pt,
    belowskip=1pt,
    breaklines = true,
    captionpos = b,
    commentstyle = \color{codeblue},
    keywordstyle = \color{codekw},
}

\definecolor{green}{HTML}{009000}
\definecolor{red}{HTML}{ea4335}

\usepackage{longtable}
\usepackage{array}
\usepackage{amsthm}
\usepackage{capt-of}
\usepackage{algorithmic}
\newcommand{\suppref}[1]{\cref{#1}}
\newcommand{\Suppref}[1]{\Cref{#1}}
\newcommand{\appref}[1]{Appendix~\ref{#1}}
\newcommand{\ours}{CASE}
\newcommand{\vtheta}{\bm{\theta}}

\newcommand{\ind}{\bm{1}}
\DeclareMathOperator{\KL}{D_{\mathrm{KL}}}
\DeclareMathOperator{\sg}{sg}
\DeclareMathOperator{\TopK}{TopK}
\DeclareMathOperator{\clip}{clip}

\newcommand{\artifact}[1]{\path{#1}}

\newcommand{\ablateon}{\ensuremath{\bullet}}
\newcommand{\ablateoff}{\ensuremath{\circ}}
\newcolumntype{Y}{>{\raggedright\arraybackslash}X}
\newtheorem{proposition}{Proposition}
\theoremstyle{remark}
\newtheorem{remark}{Remark}
\newcommand{\contributorslist}{}
\newcommand{\contributors}[1]{\gdef\contributorslist{{\small #1}}}
\newcommand{\coreauthorlabel}{}
\title{From Given to Gathered Evidence:\\Agentic Learning for Longitudinal Medical Reasoning}

\author[1, 2]{Core Authors:\: Minye Shao}
\author[* 2, 3]{Chaohui Yu}
\author[2]{Yixuan Wu}
\author[2]{Fan Wang}
\author[4]{Ling Shao}
\author[* 1]{Yang Long}

\contributors{Contributors: Jing Wang\textsuperscript{2,3}, Hangjie Yuan\textsuperscript{2,3}, Shang Liu\textsuperscript{2,3}}

\affiliation[1]{Durham University\\}
\affiliation[2]{DAMO Academy, Alibaba Group\\}
\affiliation[3]{Hupan Laboratory\\}
\affiliation[4]{University of the Chinese Academy of Sciences}

\contribution[*]{Corresponding authors}

\abstract{
Foundation models can serve as clinical agents through tool-use harnesses. However, conventional medical benchmarks assess reasoning over preselected evidence rather than the ability to seek it across clinical records and longitudinal imaging.
We propose \textbf{CASE}: a series of role-specific \textbf{C}linical \textbf{A}gents for \textbf{S}eeking \textbf{E}vidence, together with a tool-use harness and an agentic post-training framework for compact vision--language policy models.
We further introduce a longitudinal multimodal benchmark built on UK Biobank, comprising 50,401 clinical questions derived from real-world ICD-10-coded diagnoses of 4,739 participants. Each question links to a patient-specific environment containing clinical context and multi-sequence MRI from baseline and follow-up visits, where agents autonomously select which visits, organs, modalities, slices, and specialist tools to inspect and compare.
Supervised fine-tuning transfers evidence-seeking workflows from 14,734 frontier-model interaction trajectories, followed by agentic reinforcement learning on the learner's own environment interactions. Privileged on-policy self-distillation and rubric-based LLM feedback refine evidence-to-conclusion reasoning without prescribing tool sequences.
Experiments show that CASE moves beyond \textbf{question--answer imitation} toward \textbf{transferable investigation policies}, strengthening evidence-grounded longitudinal reasoning. Under matched evaluation conditions, our Qwen3-VL-8B based agent achieves over 16\% and 10\% relative improvements in answer accuracy over GPT-5.4 and Claude Opus 4.8.
}

\date{\today}

\begin{document}
\thispagestyle{firstheader}
\maketitle
\pagestyle{empty}

\begin{figure}[t]
\centering
\includegraphics[width=\linewidth]{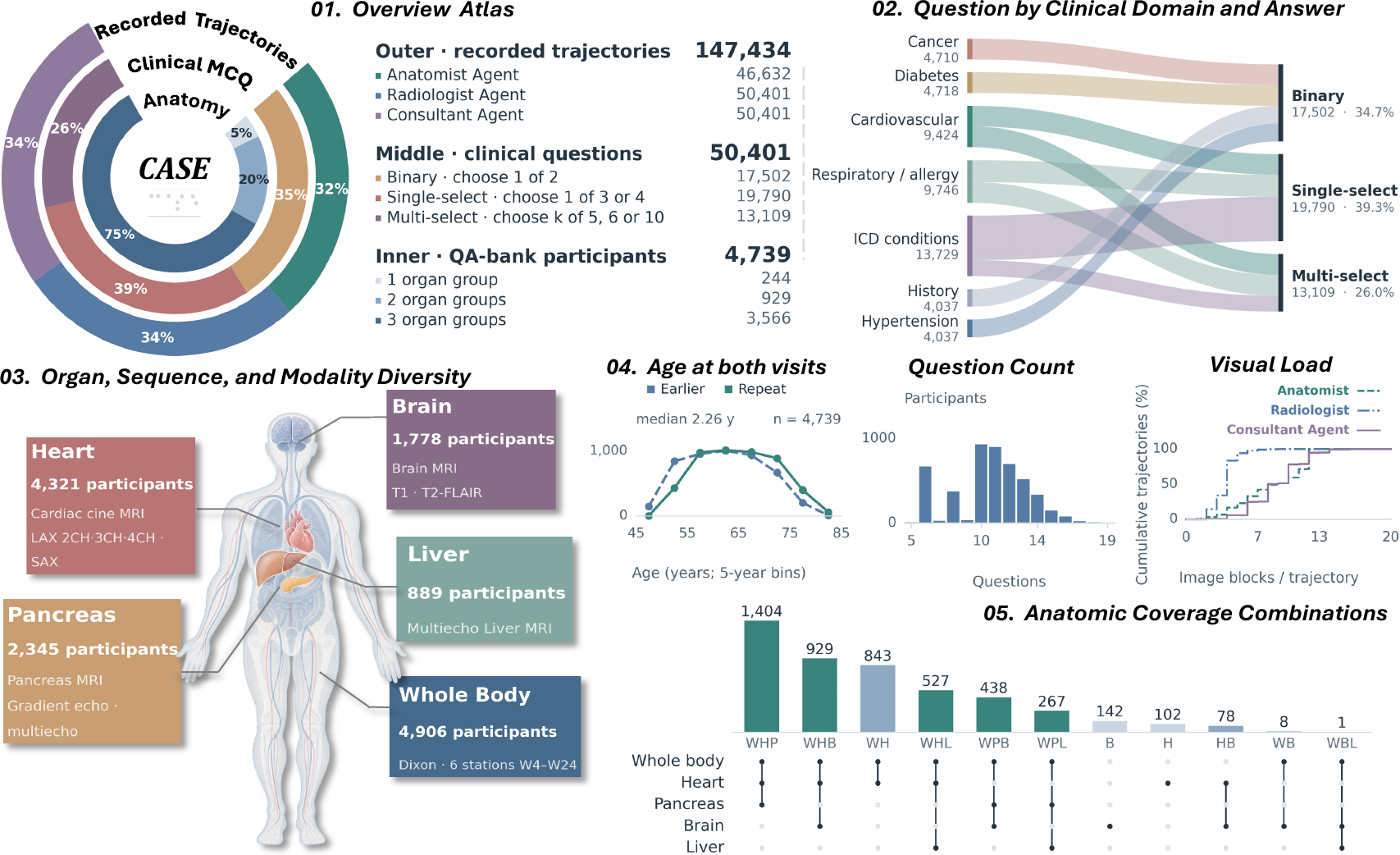}
\caption{\textbf{The CASE longitudinal evidence-seeking benchmark.} Fixed clinical targets are paired with heterogeneous patient environments. (1) Cohort, task, and raw-recording composition. (2) Clinical domains and answer formats. (3) Five organ-specific MRI protocols. (4) Paired-visit age, task density, and agent visual load. (5) Multi-organ coverage. Raw recordings precede training selection; W/H/P/B/L denote whole body, heart, pancreas, brain, and liver. Zoom in for details.}
\label{fig:benchmark_overview}
\end{figure}

\section{Introduction}
\label{sec:intro}

Foundation models paired with tool-use harnesses show promise as clinical agents~\citep{wang2026medagentpro,zhang2026radagents}. Yet longitudinal medical reasoning extends beyond medical knowledge: evidence is distributed across clinical records, imaging sequences, measurements, and follow-up visits, requiring agents to decide what to inspect before reaching a conclusion. Clinical reasoning therefore involves not only interpreting evidence, but also determining how to acquire it.

This distinction matters for evaluation. Fixed-context medical QA assesses reasoning over preselected evidence, while interactive benchmarks introduce information gathering and sequential decision-making~\citep{liu2025medchain,chiu2025vivabench}. Longitudinal multimodal reasoning further requires comparing alternative paths through the same patient environment, motivating standardized tasks and evidence access while leaving acquisition to the policy. A central question is whether compact models can bootstrap clinical investigation from frontier-model trajectories and move beyond imitation through autonomous interaction. Demonstrations cover only teacher-chosen paths, whereas autonomous interaction exposes the learner to different combinations of evidence and requires it to reason over what it actually gathers.

We investigate this through CASE, where a compact vision--language policy acts as role-specific clinical agents within a shared tool-use harness. On our longitudinal multimodal benchmark built on UK Biobank~\citep{sudlow2015ukbiobank,littlejohns2020ukbimaging} (\cref{fig:benchmark_overview}), supervised fine-tuning (SFT) initializes the investigation policy from recorded frontier-model trajectories, after which agentic reinforcement learning (RL) optimizes the policy on its own environment interactions. Privileged on-policy self-distillation, with its self-teacher continually refreshed as the policy evolves, together with rubric-based large language model (LLM) feedback, progressively refines evidence-to-conclusion reasoning without prescribing tool sequences, allowing the agent to learn both what evidence to seek and how to reason from what it finds. Together, these designs support the following contributions:
\begin{enumerate}

\item \textbf{Rethinking longitudinal medical reasoning as evidence seeking.}
By recasting longitudinal reasoning as an interactive evidence-seeking problem, we move beyond fixed-context evaluation and require agents to determine what patient evidence to inspect before reaching a conclusion.

\item \textbf{Revisiting clinical agents as evidence-seeking investigators.}
CASE operationalizes this view through role-specific \textbf{C}linical \textbf{A}gents for \textbf{S}eeking \textbf{E}vidence, a patient-level tool-use harness, and a \emph{longitudinal multimodal benchmark}, requiring agents to navigate heterogeneous evidence and invoke specialist tools when needed.

\item \textbf{Exploring learning beyond teacher imitation.}
Starting from action-level teacher trajectories, we optimize the compact policy on its own \emph{environment interactions without direct answer supervision}. Rubric-based LLM feedback and privileged self-distillation further strengthen evidence-to-conclusion synthesis, and the resulting agent surpasses frontier-model baselines under matched evaluation conditions; controlled ablations isolate the contribution of each component.

\end{enumerate}

\section{Related Work}
\label{sec:related}

\paragraph{Medical benchmarks and interactive evaluation.}
Static question--answer (QA) benchmarks, however complex, give diminishing signal about real task completion, especially in data-scarce clinical agent settings that demand a realistic environment. Medical QA supplies text or images~\citep{jin2020medqa,zuo2025medxpertqa}; agentic benchmarks add record operations, simulated encounters, and active information gathering~\citep{jiang2025medagentbench,schmidgall2024agentclinic,liu2025medchain,chiu2025vivabench} or widen specialty, longitudinal, and diagnostic-stage coverage~\citep{yan2025clinicallab,vasilev2025mtbbench,lu2026clinenv}. Ours grounds each question in recorded diagnoses and makes the agent gather the determining cross-visit magnetic resonance imaging (MRI), measurements, and reports (\appref{app:benchmark_landscape}, \suppref{tab:benchmark_paradigms}).

\paragraph{Medical agents and policy learning.}
Medical agents adapt collaboration to task complexity~\citep{kim2024mdagents}, coordinate tools and verification~\citep{fallahpour2025medrax,wang2026medagentpro,zhang2026radagents}, or learn from memory and feedback~\citep{almansoori2025medagentsim,liu2025medchain}, though MedAgentBoard finds such gains task-dependent~\citep{zhu2026medagentboard}; parameter learning spans supervised tool invocation~\citep{li2024mmedagent} to RL~\citep{jiang2026medvr,jiang2026ophiuchus,fan2026macro}. CASE instead trains one shared compact policy that learns acquisition and synthesis jointly rather than orchestrating fixed specialists (\appref{app:medical_agents}, \suppref{tab:medical_workflows,tab:medical_policy_learning}).

\paragraph{On-policy and privileged distillation.}
Prior distillation transfers teacher or privileged signals via guidance on student-generated sequences~\citep{agarwal2024gkd,zhang2026rlad,liu2026tgpo,lin2026opdvr}, privileged self-teachers~\citep{zhao2026opsd,hubotter2026sdpo}, selective reasoning supervision under Group Relative Policy Optimization (GRPO)~\citep{shao2024deepseekmath,wang2026trace,tan2026ssopd}, or turn-, token-, and step-level feedback~\citep{tian2026pbsd,zhang2026adrs,yang2026ocsd,wu2026sspo}. CASE instead turns the divergence between ordinary and answer-privileged views of the same frozen policy into a \emph{detached trajectory-level reward penalty} rather than a differentiable distillation loss; entering the return before group normalization, it reorders whole trajectories by synthesis quality, whereas OCSD, SSPO, PBSD, and ADRS adjust token-, step-, or turn-level weights without reordering whole trajectories, so synthesis feedback shapes acquisition without tool-specific credit assignment or an auxiliary loss (\appref{app:distillation_landscape}, \suppref{tab:distillation_landscape}).

\section{Preliminaries}
\label{sec:prelim}

\paragraph{Tool-interacting policies.}
Following the standard agentic-RL formulation~\citep{yang2026ocsd}, a task comprises a prompt $x$ (role, clinical question, options, initial patient context), a participant-specific environment $\mathcal E_p$, and a reference answer $y^\star$ withheld during interaction. At round $k$ the policy $\pi_{\vtheta}$ emits $a_k\sim\pi_{\vtheta}(\cdot\mid H_k)$ from history $H_k$ ($H_1=x$); a tool request yields $o_k=\mathcal T_p(a_k)=(e_k,X_k)$, where $\mathcal T_p$ returns text $e_k$ and images $X_k$; and $H_{k+1}=H_k\oplus(a_k,o_k)$ appends them in order. An answer-terminated episode of $M$ assistant turns is $\tau=(x,a_1,o_1,\ldots,a_{M-1},o_{M-1},a_M)$ with parsed answer $\hat y=\hat y(\tau)$.

\paragraph{Group-relative optimization.}
For each $x$, Group Relative Policy Optimization (GRPO)~\citep{shao2024deepseekmath} samples $G$ trajectories with scalar returns $R_j$; writing $\bar R=G^{-1}\sum_jR_j$ and $\operatorname{sd}(R_{1:G})$ for the within-prompt dispersion, the group-relative advantage is
\begin{equation}
 A_j=\sg\!\left[\frac{R_j-\bar R}{\operatorname{sd}(R_{1:G})+\epsilon}\right],
\label{eq:advantage}
\end{equation}
where $\sg$ denotes stop-gradient and $\epsilon>0$ stabilizes normalization. The same $A_j$ weights all assistant-generated tokens, while tool observations condition subsequent actions without carrying gradient targets and padding is masked; our return therefore assumes no tool-specific credit assignment.

\section{A Longitudinal Evidence-Seeking Benchmark}
\label{sec:benchmark}

CASE makes \textbf{evidence acquisition itself the task}. Each patient is asked several questions, each a multiple-choice question (MCQ), together spanning several clinical domains; given only the narrative and one such question, the agent interleaves tool calls with reasoning to decide which longitudinal, multi-sequence studies to inspect, with each returned observation conditioning the next decision and the final answer. Accuracy thus reflects \textbf{what the agent chooses to gather} from a scarce, heterogeneous imaging environment rather than evidence fixed in advance. \Cref{fig:benchmark_overview} overviews the imaging, the questions derived from International Classification of Diseases, 10th revision (ICD-10) codes, and the recorded trajectories.

\paragraph{Participants and question construction.}
Built on UK Biobank~\citep{sudlow2015ukbiobank}, CASE contains 50,401 questions from 4,739 participants, including 4,850 held-out questions from 455. Questions derive from doctor-diagnosed conditions and ICD-10 records, pair each recorded target with cohort-derived distractors that exclude the participant's own diagnoses, and span seven clinical families and ten task--format strata over binary, single-select, and multi-select items (panel~2 of \cref{fig:benchmark_overview}); \appref{app:data} details distractor sampling, task expansion, and label correction.

\paragraph{Patient evidence.}
Each participant's environment pairs \textbf{baseline and follow-up MRI of the same subject} from the UK Biobank imaging visits~\citep{littlejohns2020ukbimaging,ukbiobankimagingdata}, a median of 2.26 years apart, so within-subject interval comparison is available for every question; panel~4 of \cref{fig:benchmark_overview} adds the resulting workload of just over ten questions per participant and roughly eight images per resolved trajectory. The imaging is scarce and heterogeneous, covering five organ regions (heart, brain, pancreas, liver, and whole body) across seven sub-modalities, with most participants contributing two or three organ groups (panels~3 and~5). Tools expose MRI views, Vista3D-derived measurements and overlays~\citep{he2025vista3d}, and MedGemma-27B reports~\citep{sellergren2025medgemma}; the agent chooses which tool, organ, sequence, visit, and frame to retrieve and whether to compare across visits, so the evidence path is policy-controlled, while diagnosis records supply the answer labels (\appref{app:environment}).

\paragraph{Recording.}
\label{sec:recording}
We use Qwen3.8-Max~\citep{qwen2026qwen38max} to record role-conditioned trajectories across the three agents in LangGraph environments~\citep{langchain2024langgraph}, withholding reference answers until first-pass completion; each trajectory interleaves tool calls with roughly eight to ten paired-visit images, yielding 147,434 recordings, of which the screening rule of \cref{eq:rsft} retains approximately 140,000 demonstrations for initialization from training participants (\appref{app:training}).

\paragraph{Evaluation metrics and protocol.}
Participants are disjoint across splits. The primary metric is first-pass exact answer-set accuracy: $a_i\in\{0,1\}$ marks a valid first pass whose canonicalized answer set equals the reference exactly (all and only the correct options, single- and multi-select alike), with missing, failed, or unparseable episodes scoring $0$. Over $N$ questions in ten strata $\{\mathcal D_t\}_{t=1}^{10}$ with $N_t=|\mathcal D_t|$ and $A_t=N_t^{-1}\sum_{i\in\mathcal D_t}a_i$, we report
\begin{equation}
 A_{\mathrm{micro}}=\frac1N\sum_{i=1}^{N}a_i=\sum_{t=1}^{10}\frac{N_t}{N}\,A_t,
 \qquad
 A_{\mathrm{macro}}=\frac1{10}\sum_{t=1}^{10}A_t.
\label{eq:metrics}
\end{equation}
\Cref{tab:main_accuracy} reports per-stratum accuracy alongside the two aggregates: Micro weights questions equally, whereas Macro weights the ten strata equally and is thus sensitive to small strata. Each role is evaluated separately under matched evidence access, tool budgets, and a skill memory frozen across runs but live within a run, without test-answer feedback. A version-pinned judge held out from the training reward supplies a fixed-judge insight score that complements accuracy and is cross-checked across judges for rank stability (\appref{app:evaluation}).

\section{Learning to Acquire and Synthesize Evidence}
\label{sec:method}

\begin{figure}[t]
\centering
\includegraphics[width=\linewidth]{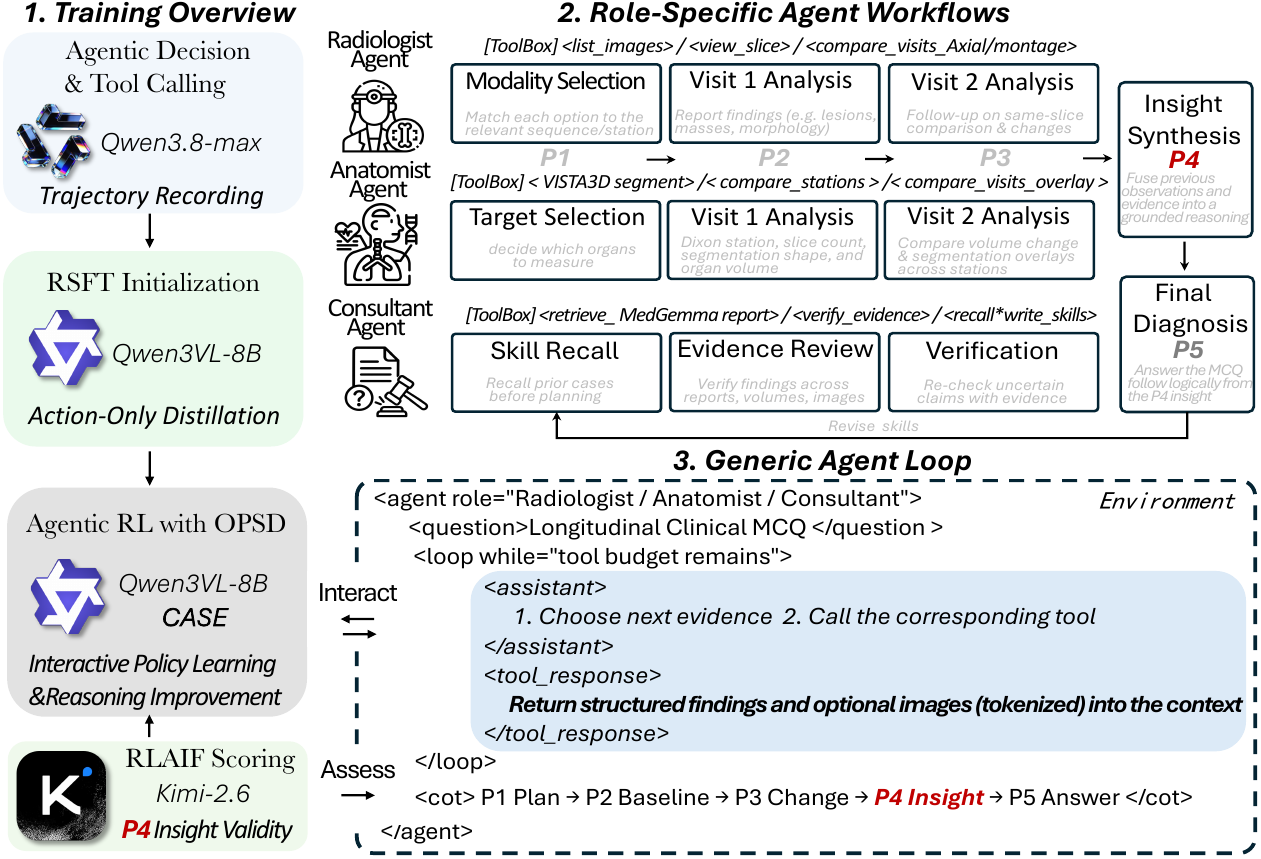}
\caption{\textbf{CASE training and execution.} (1) Two-stage training: RSFT distills screened teacher trajectories into Qwen3-VL-8B-Instruct, then agentic RL refines the policy on its own trajectories via outcome reward, OPSD, and RLAIF from an external non-ancestor judge. (2) One shared policy instantiates three evidence-seeking workflows through role prompts and tool registries. (3) A common contract alternates evidence decisions and tool observations until the P1--P5 synthesis. Privileged information scores completed training trajectories only and never determines the tool sequence.}

\label{fig:overview}
\end{figure}

\paragraph{Method overview.}
Panel~1 of \cref{fig:overview} outlines our two-stage pipeline. We first record role-conditioned teacher trajectories and distill a quality-filtered subset into the Qwen3-VL-8B-Instruct base~\citep{bai2025qwen3vl} by rejection-sampling fine-tuning (RSFT), which rapidly initializes \emph{how to act as an agent} (calling tools, alternating evidence decisions with observations, and following the synthesis contract). Agentic RL then optimizes the policy on its \emph{own} trajectories, combining outcome reward, privileged on-policy self-distillation (OPSD) of the synthesis step, and rubric feedback from an external non-ancestor judge (Kimi K2.6~\citep{moonshot2026kimik26}) that scores the logical core of each chain of thought, so the policy is rewarded for an answer justified by sound Phase-4 reasoning rather than correctness alone; both stages train the full vision--language model end to end, including the visual pathway. We initialize from the Instruct checkpoint rather than a thinking variant to avoid starting from a model already specialized for explicit chain-of-thought output; this does not imply that the Instruct checkpoint received no vendor-side reinforcement learning~\citep{bai2025qwen3vl}. Efficiency and kernel-stability considerations appear in \appref{app:training}. RSFT only imitates demonstrations and thus teaches the \emph{form} of acting, whereas \textbf{the genuine gains in reasoning and accuracy come from agentic RL}, in which the base model explores the harness on its own and can surpass its teacher. Because the reference answer is fixed by a participant's evidence and train/test participants are disjoint, \textbf{the answer is never a direct function of the question}, so the policy must seek the determining evidence rather than memorize question--answer pairs.

\subsection{CASE role-conditioned workflows}
\label{sec:interaction}

CASE instantiates three roles through one shared policy (\cref{fig:overview}, panel~2). The Radiologist selects acquisitions, inspects baseline and follow-up images, and compares matched views. The Anatomist selects anatomical targets and uses Vista3D-derived measurements and overlays to assess longitudinal change. The Consultant combines these sources with MedGemma reports and skill retrieval (a Consultant-only reusable-guidance memory, detailed in \appref{app:environment}), revisiting evidence to assess conflicting claims. Role prompts organize these workflows; the agents are not independently trained models or a communicating committee. Per-role tool registries and call budgets appear in \cref{tab:appendix_registry}; organ-name resolution, the rollout loop, and the evaluation harness are detailed in \appref{app:environment}; and a Consultant evaluation trajectory covering all callable tools is given in \appref{app:example_trajectory}.

\paragraph{Generic agent loop.}
The Anatomist, Radiologist, and Consultant run one common interaction loop (panel~3 of \cref{fig:overview}), differing only in role prompt and tool registry, where each assistant turn requests evidence or ends the investigation and returned text and images enter the context before the next decision. Recording, inference, and RL share this contract and a common five-phase terminal response (plan, multi-source review, cross-visit comparison, a \emph{key insight} distilled from the gathered evidence and preceding reasoning, and an answer derived \emph{strictly} from it); these phases structure the final reasoning, not a prescribed tool order. LangGraph supports recording and evaluation and an extended \textsc{verl} AgentLoop~\citep{sheng2025hybridflow} supports RL. The corpus pools all three roles, $\mathcal D_{\mathrm{rec}}=\bigcup_r\mathcal D_r$ for $r\in\{\mathrm{anat},\mathrm{rad},\mathrm{cons}\}$, so SFT mixes their retained trajectories and RL their task prompts while one shared policy is updated and every example retains its role instruction.

\subsection{Evidence-linkage rejection sampling for policy initialization}

To provide an efficient cold start for tool use and multi-step agentic reasoning, we apply rejection-sampling fine-tuning (RSFT). Executable demonstrations need not explain how evidence supports an answer, so we screen recorded trajectories that contain at least one tool call and a parseable Phase-4 insight. Kimi K2.6~\citep{moonshot2026kimik26} scores the question, options, reference and predicted answers, and extracted insight $I_d$ on specific evidence, logical progression, answer support, and completeness, giving $s_d^{\mathrm{off}}\in[0,1]$; writing $n_{\mathrm{tool}}(d)$ for the number of assistant tool calls, we retain
\begin{equation}
 \mathcal D_{\mathrm{RS}}=\left\{d\in\mathcal D_{\mathrm{rec}}:
 n_{\mathrm{tool}}(d)\geq1,\ s_d^{\mathrm{off}}\geq0.4\right\}.
\label{eq:rsft}
\end{equation}
This pass imposes no exact-answer gate and does not regenerate rejected trajectories; the text-only judge scores the stated argument, not the visual validity of its findings. Rubric, parsing, provenance, data-materialization, and hyperparameter details are in \appref{app:training} (\suppref{tab:appendix_training}).

Each retained trajectory is serialized as one multimodal sequence and trained with next-token cross-entropy masked at all non-assistant positions, so patient information and tool-returned text and images condition the model while only assistant reasoning, tool calls, and the final synthesis (including the answer) are prediction targets. With $w_{d,t}$ the token at position $t$, $g_{d,t}$ its preceding multimodal context, and $m_{d,t}=1$ only at assistant positions, mixed-role SFT minimizes
\begin{equation}
 \mathcal L_{\mathrm{SFT}}(\vtheta)=
 -\frac{\sum_{d\in\mathcal D_{\mathrm{RS}}}\sum_t m_{d,t}
 \log\pi_{\vtheta}(w_{d,t}\mid g_{d,t})}
 {\sum_{d\in\mathcal D_{\mathrm{RS}}}\sum_t m_{d,t}}.
\label{eq:sft}
\end{equation}
This initializes an executable policy on demonstrator-selected histories; RL subsequently trains on policy-selected histories.

\subsection{On-policy interaction with a replayable evidence environment}
\label{sec:onpolicy_environment}

Only two source artifacts are prepared once per participant, and they are the sole outputs that require model inference, namely Vista3D segmentation masks of the whole-body Dixon MRI alone, not the organ-specific sequences, across its six neck-to-ankle stations (4, 8, 12, 16, 20, 24; some participants have fewer)~\citep{he2025vista3d}, and MedGemma-27B report objects~\citep{sellergren2025medgemma}. Agentic RL never re-runs either model; its mock tools replay observations precomputed from these two sources. Every other observation follows from the policy's own interaction-time decisions over tool, organ (resolved to a segmentation label through the Vista3D organ-index map), sequence, sub-modality, visit, frame, and cross-visit pair. Recording and evaluation instead render each requested observation on demand; the RL replay is keyed by an identical (participant, tool, argument) tuple, which removes tool-induced waiting and GPU demand without changing what the agent may request or observe.

At update $n$, policy $\pi_{\bar{\vtheta}_n}$ generates $G=8$ fresh interactions per prompt from clinical text, without reference answers. A tool request $a_k$ retrieves an observation for participant $p$ through $o_k=\mathcal T_p(a_k)=\mathcal C_p(\kappa(a_k))$, where $\kappa$ indexes the tool and its requested arguments (organ, sequence, sub-modality, visit, frame) and $\mathcal C_p$ stores the corresponding observations. Only evidence is replayed; tool selection, argument choice, retrieval order, and stopping remain policy-generated.

Let $u_{j,1:N_j}$ serialize rollout $j$'s assistant messages and tool observations, with $\mathcal A_j$ indexing assistant tokens. The causal prefix $g_{j,t}$ contains the initial prompt $x$, preceding tokens $u_{j,<t}$, and previously retrieved images. Returned pixels pass through the policy's vision encoder and merger, replacing aligned image-placeholder embeddings, and these representations are recomputed during training even though the source evidence stays fixed. Only assistant positions contribute to the policy loss, whose gradients also update the visual modules through subsequent predictions; evidence preparation, mock-tool execution, and interaction budgets are detailed in \appref{app:environment}.

\subsection{On-policy privileged self-distillation for synthesis}
\label{sec:opsd}

In mid-to-late RL the policy stalls, prolonging Phase~4 by restating findings and repeating discourse connectives instead of committing to a conclusion; this repetitive mode coincides with a sudden reward drop and arises because a compact policy learns the surface form of an explanation before it reliably resolves the answer. We therefore apply privileged on-policy self-distillation (OPSD), whose training-only privileged information provides an answer-informed reference for synthesis on the policy's own evidence history while leaving tool decisions autonomous. Because the privileged view conditions on a resolved answer, at a fixed synthesis prefix it shifts probability mass away from further restatement and toward advancing or terminating, so aligning the ordinary view with it discourages repetitive continuations without dictating their content. \appref{app:repetition} gives a formal event bound and \cref{sec:experiments} tests the effect through an 8-gram repetition diagnostic (\suppref{eq:repetition_metric}) and the late-training completion behavior of the ablation variants in \cref{tab:learning_ablation}; top-128 truncation is justified by measured next-token coverage (\cref{fig:bucket128}) and the two-forward scoring alignment is tabulated in \suppref{tab:app-forwards}.

For every rollout $j=1,\ldots,G$, we score $[x;u_j]$ and $[x;z_j^\star;u_j]$ under the same fixed snapshot $\bar{\vtheta}_n=\sg(\vtheta_n)$, where the hint $z_j^\star$ supplies the reference answer, resolvable option text, and an optional recorded insight. Both inputs retain all realized tool calls, tool-result text and images, intermediate reasoning, and any P1--P5 synthesis; only the privileged input contains the hint and neither branch generates a replacement continuation, so every rollout is scored under two matched conditions irrespective of correctness or judge rank, batched into microbatches.

The two forwards align logits predicting the same continuation token $u_{j,t}$, accounting for the inserted hint. Let $g^+_{j,t}$ be $g_{j,t}$ with $z_j^\star$ inserted after $x$. Causal attention restricts both views to preceding tokens and observations, even though the complete trajectory is forwarded at once. Over vocabulary $\mathcal V$, the two forwards give full-softmax distributions $q_{n,j,t}(v)=\pi_{\bar{\vtheta}_n}(v\mid g_{j,t})$ (ordinary) and $p_{n,j,t}(v)=\pi_{\bar{\vtheta}_n}(v\mid g^+_{j,t})$ (answer-privileged), whose conceptual full reverse divergence is $d^{\mathrm{full}}_{n,j,t}=\KL(q_{n,j,t}\Vert p_{n,j,t})=\sum_{v\in\mathcal V}q_{n,j,t}(v)\log\big(q_{n,j,t}(v)/p_{n,j,t}(v)\big)$.
The implementation retains $\mathcal S_{n,j,t}=\TopK(q_{n,j,t},128)$ (the 128 highest-probability student tokens), gathers teacher probabilities at those indices, and aggregates the remaining mass into one residual event,
\begin{equation}
 q_{n,j,t}^{\perp}=1-\sum_{v\in\mathcal S_{n,j,t}}q_{n,j,t}(v),\qquad
 p_{n,j,t}^{\perp}=1-\sum_{v\in\mathcal S_{n,j,t}}p_{n,j,t}(v).
\label{eq:partition}
\end{equation}
Let $\mathcal I_j$ index the estimated Phase-4 region and $C_j=\max(1,|\mathcal I_j|)$. The implemented Causal-KL score is
\begin{equation}
 D^{\mathrm{CKL}}_{n,j}=\frac1{C_j}\sum_{t\in\mathcal I_j}\!\left[
 \sum_{v\in\mathcal S_{n,j,t}}q_{n,j,t}(v)
 \log\frac{q_{n,j,t}(v)}{p_{n,j,t}(v)}
 +q_{n,j,t}^{\perp}\log\frac{q_{n,j,t}^{\perp}}{p_{n,j,t}^{\perp}}
 \right].
\label{eq:bucket}
\end{equation}
Both normalizers span the full vocabulary and the top-128 probabilities are not renormalized, so only the divergence is coarsened to 129 events. The scoring region targets synthesis rather than the answer, but its character-based span estimate does not guarantee exact Phase-4 isolation (\appref{app:implementation}). The reverse KL divergence weights alternatives by the student's probabilities without prescribing a teacher rationale. Refreshing $\bar{\vtheta}_n$ after policy updates lets the privileged reference evolve with the learner; the relevant mathematical proofs are provided in the supplementary material.

\subsection{Group-relative rubric feedback and policy optimization}
\label{sec:reward}

During RL, Kimi K2.6 scores each Phase-4 insight against the question, options, reference answer, and Phase-5 prediction on a fixed rubric of evidence specificity, logical progression, answer support, and completeness, yielding $s_j\in[0,1]$; this text-only rubric-based AI feedback (RLAIF) signal follows the direct-feedback setting of \citet{lee2024rlaif} while using our task-specific continuous rubric to evaluate the stated argument, complementing OPSD's distributional comparison, with the four criteria and allocations listed in \suppref{tab:appendix_rubric}.

\emph{Our goal is not merely a correct answer but one reached through a sound Phase-4 key insight.} Outcome and rubric rewards alone cannot separate a guessed correct answer from a fluent insight that entails the wrong one, so the linkage reward $L_j$ closes this gap by rewarding a highly rated insight only when the answer is also correct and penalizing it otherwise, suppressing fluent-but-unfounded synthesis that would game the text-only judge. Within each question's $G=8$ rollouts the two highest-scoring eligible insights are flagged $b_j=1$ (safeguards in \appref{app:judge}); the rest keep $b_j=0$ and earn no linkage while retaining $s_j$ (set to $0$ only if unscorable), so selection adapts per group while the rubric stays fixed.

Two answer terms enter the return, both derived from a single exact-match test. The graded reward $r_j^{\mathrm{ans}}$, added directly to the base return $B_j$, takes $+1$ when the predicted answer set equals the reference exactly, $-1$ on a wrong answer, and $-2$ when no answer is parsed (\appref{app:implementation}). The binary flag $c_j=\ind[\hat y_j=y^\star]\in\{0,1\}$ is that test's pass/fail result, equal to $1$ exactly when $r_j^{\mathrm{ans}}$ is maximal; it is not a separate score and enters only the linkage $L_j$, where it sets the sign. Exactness follows \cref{sec:benchmark}, so after canonicalizing order and duplicates any out-of-set letter, extra selection, or missing parse gives $c_j=0$ (\appref{app:evaluation}). The format term $r_j^{\mathrm{fmt}}$ is maximal only when all five response phases are present (plus the Consultant's trailing skill-writing step), and the coefficients scale all terms to comparable magnitudes. The composite reward is
\begin{equation}
\thickmuskip=3mu\medmuskip=2mu\relax
 L_j=b_j\bigl[0.5c_j-0.3(1-c_j)\bigr],\;
 B_j=r_j^{\mathrm{ans}}+0.01r_j^{\mathrm{fmt}}+0.6s_j+L_j,\;
 R_{n,j}=\sg\!\left[B_j-2D^{\mathrm{CKL}}_{n,j}\right].
\label{eq:reward}
\end{equation}
Here $\sg$ denotes stop-gradient, so the return is constant during the policy update. Top-2 selection controls only $L_j$ and all rollouts enter GRPO; the trainer subtracts each rollout's Causal-KL penalty before the group-relative advantage weights the assistant-token likelihood gradients, which do not propagate through the two scoring forwards.

Policy optimization uses the standard GRPO clipped surrogate with the trajectory-level advantage $A_j$ (objective, clipping, and the reference-KL term in \appref{app:implementation}); because reasoning, tool calls, and synthesis share $A_j$, synthesis feedback trains acquisition without tool-specific credit assignment.

\section{Experiments}
\label{sec:experiments}

We report quantitative results (clinical accuracy and interaction efficiency), an ablation study that isolates each learning component and the dependence on acquired evidence, and a qualitative walkthrough of one complete agent trajectory.

\begin{table}[t]
\centering
\footnotesize
\setlength{\tabcolsep}{1pt}
\renewcommand{\arraystretch}{1.0}
\resizebox{\linewidth}{!}{%
\begin{tabular}{l*{12}{c}}
\toprule
\shortstack{Domain\\{}} & \multicolumn{1}{c}{\shortstack{Oncology\\{}}} & \multicolumn{1}{c}{\shortstack{Endocrine/\\metabolic}} & \multicolumn{3}{c}{\shortstack{Cardiovascular\\{}}} & \multicolumn{2}{c}{\shortstack{Respiratory/\\allergy}} & \multicolumn{2}{c}{\shortstack{Multisystem\\{}}} & \multicolumn{1}{c}{\shortstack{Medical\\history}} & \multicolumn{2}{c}{\shortstack{Overall (\%) $\uparrow$\\{}}} \\
\cmidrule(lr){1-1}\cmidrule(lr){2-2}\cmidrule(lr){3-3}\cmidrule(lr){4-6}\cmidrule(lr){7-8}\cmidrule(lr){9-10}\cmidrule(lr){11-11}\cmidrule(lr){12-13}
Method / Question & \shortstack{Cancer\\(B)} & \shortstack{Diabetes\\(B)} & \shortstack{Condition\\(S)} & \shortstack{Condition\\set (M)} & \shortstack{Hyper-\\tension\\(B)} & \shortstack{Condition\\(S)} & \shortstack{Condition\\set (M)} & \shortstack{ICD-10\\(S)} & \shortstack{Disease\\systems\\(M)} & \shortstack{History\\(B)} & Micro & Macro \\
\midrule
\multicolumn{13}{@{}l}{\textit{Base model}} \\
\cmidrule(lr){1-13}
Qwen3-VL-8B & 67.7 & 46.1 & 72.9 & 70.0 & 38.4 & 52.5 & 49.3 & 26.9 & 5.8 & 41.1 & 46.0 & 47.1 \\
\addlinespace[2pt]
\multicolumn{13}{@{}l}{\textit{Medical models}} \\
\cmidrule(lr){1-13}
Lingshu-32B & 85.0 & 70.4 & 55.0 & 54.6 & 48.8 & 58.9 & 51.6 & 35.2 & 2.9 & 40.5 & 49.9 & 50.3 \\
Lingshu-7B & 65.4 & 83.1 & 48.6 & 40.3 & 65.8 & 50.7 & 38.6 & 28.5 & 3.3 & 42.8 & 45.5 & 46.7 \\
MedGemma-27B & 42.2 & 34.5 & 18.1 & 17.6 & 30.5 & 25.7 & 24.2 & 29.1 & 3.9 & 48.9 & 27.8 & 27.5 \\
MedGemma-1.5-4B & 55.0 & 51.8 & 37.5 & 65.1 & 48.4 & 40.7 & 44.0 & 26.9 & 3.4 & 40.9 & 40.5 & 41.4 \\
\addlinespace[2pt]
\multicolumn{13}{@{}l}{\textit{General multimodal foundation models}} \\
\cmidrule(lr){1-13}
Kimi K2.6 & 48.4 & 41.5 & 41.8 & 46.5 & 41.9 & 47.8 & 42.9 & 44.2 & 4.6 & \textbf{60.1} & 42.8 & 42.0 \\
Kimi K3 & 86.8 & 91.9 & 85.4 & 91.3 & 64.3 & 91.0 & 66.9 & 59.3 & 6.4 & 51.3 & 70.2 & 69.5 \\
Qwen3.7-Plus & 87.0 & 87.5 & 89.6 & 92.2 & 72.7 & 81.2 & 67.2 & 45.1 & 6.1 & 48.2 & 67.0 & 67.7 \\
Qwen3.8-Max & 75.8 & 66.4 & 71.5 & 78.0 & 38.4 & 72.4 & 59.5 & 57.8 & 5.6 & 57.4 & 59.7 & 58.3 \\
Gemini-3.5-Flash & 71.7 & 77.7 & 65.9 & 73.3 & 53.7 & 79.7 & 53.8 & 42.2 & 3.0 & 42.2 & 56.3 & 56.3 \\
GPT-5.4-0305 & 73.0 & 89.5 & 73.4 & 81.2 & \textbf{76.8} & 84.3 & 63.3 & 49.8 & 5.4 & 59.8 & 65.3 & 65.7 \\
Claude Opus 4.8 & 73.4 & 80.2 & 77.7 & 82.2 & 57.5 & 85.9 & 68.1 & \textbf{76.3} & 8.0 & 49.0 & 68.6 & 65.8 \\
\addlinespace[2pt]
\textbf{CASE (8B, ours)} & \textbf{87.8} & \textbf{94.4} & \textbf{94.0} & \textbf{92.5} & 75.7 & \textbf{94.3} & \textbf{71.8} & 69.6 & \textbf{10.9} & 59.7 & \textbf{76.1} & \textbf{75.1} \\
\bottomrule
\end{tabular}}
\caption{Clinical accuracy (\%) averaged over the three roles across the benchmark's domains and question types, comparing medical specialist models, frontier general-purpose multimodal models, and CASE; ``base'' is the untrained Qwen3-VL-8B. Columns are question types (B, S, M = binary, single-select, multi-select) under six domains, with Micro and Macro at right.}
\label{tab:main_accuracy}
\end{table}

\subsection{Evaluation protocol}
\label{sec:protocol}
Anatomist, Radiologist, and Consultant form separate comparison panels, each with a fixed question manifest and tool registry, and methods within a panel share prompts, patient evidence, execution limits, and parsing rules. Cross-role differences are descriptive because evidence eligibility and interfaces differ, with runtime controls in \appref{app:evaluation}; the RL rollout loop and the evaluation harness share tool names, prompts, and caches despite different implementations (\appref{app:environment}).

\subsection{Quantitative results}
\label{sec:quantitative}
\paragraph{Clinical accuracy.}
\Cref{tab:main_accuracy} compares CASE, averaged over the three roles, against the Qwen3-VL-8B-Instruct base~\citep{bai2025qwen3vl}, medical models (Lingshu-32B and Lingshu-7B~\citep{xu2026lingshu}, MedGemma-27B~\citep{sellergren2025medgemma}, and MedGemma-1.5-4B~\citep{sellergren2026medgemma15}), and general models (Kimi K2.6~\citep{moonshot2026kimik26}, Kimi K3~\citep{moonshot2026kimik3}, Qwen3.7-Plus~\citep{qwen2026qwen37plus}, the Qwen3.8-Max teacher~\citep{qwen2026qwen38max}, Gemini-3.5-Flash~\citep{google2026gemini35}, GPT-5.4-0305~\citep{openai2026gpt54}, and Claude Opus 4.8~\citep{anthropic2026opus48}), with the per-role breakdown in \suppref{tab:perrole_accuracy}. All methods receive the same tools (\appref{app:evaluation}), so nominal tool availability does not explain the gap; different abilities to use the shared evidence-seeking harness remain a possible explanation. Ten task--format strata over six clinical domains (\suppref{tab:appendix_types}) complement overall Micro and Macro accuracy (\cref{eq:metrics}), and \suppref{tab:paired_comparisons} gives paired gains and participant-bootstrap intervals against the teacher, testing improvement beyond demonstration imitation.

\paragraph{Interaction efficiency.}
\Suppref{tab:interaction_cost} reports, for the tool-enabled Consultant, turns per episode, the fraction issuing at least one tool call, the answer-cutoff rate, and the invalid-JSON rate. CASE finishes the longitudinal panels in few turns with a low cutoff and negligible invalid rate, so its accuracy is not bought with a larger interaction budget or malformed outputs, matching the tool-cost term in \cref{eq:reward} and the format guard of \appref{app:implementation}.

\subsection{Ablation study}
\label{sec:ablation}
\paragraph{Learning components.}
\Cref{tab:learning_ablation} compares RSFT alone with GRPO using neither addition, rubric feedback alone, OPSD alone, or both; all RL variants retain correctness and format rewards and the same reference-policy regularizer, and rubric feedback adds the continuous insight score and top-2 linkage. Evaluation instead scores each insight with a fixed external judge without top-2 selection, reporting three-role means of accuracy and insight quality on questions scorable for every variant, with judge controls, coverage, and role-specific results in \appref{app:judge_evaluation} and \cref{tab:role_ablation}. The training traces make the complementarity explicit (\cref{fig:ablation_zoom}; full epoch in \cref{fig:ablation_curves}), where without OPSD, GRPO+RLAIF optimizes the rubric with no brake on discourse and restates earlier findings until the correctness reward collapses over the final third of the epoch (highlighted band), whereas CASE-8B and GRPO+OPSD avoid this degeneration and CASE-8B additionally converts the rubric signal into higher endpoint accuracy. Remarkably, even though the rubric is a black-box signal from an external judge, it still drives a steady rise in judged insight quality throughout training, consistently under three different judge models (\cref{fig:judge_sources}; \appref{app:judge_evaluation}).

\begin{figure}[t]\centering

\begin{minipage}[t]{0.49\textwidth}
\vspace{4mm}
\centering
\footnotesize
\resizebox{\linewidth}{!}{%
\setlength{\tabcolsep}{4pt}
\renewcommand{\arraystretch}{1.05}
\begin{tabular}{ccc ccc}
\toprule
\multicolumn{3}{c}{Components} & \multicolumn{3}{c}{Role-averaged} \\
\cmidrule(r){1-3}\cmidrule(l){4-6}
GRPO & RLAIF & OPSD & Micro & Macro & Insight \\
\midrule
\ablateoff & \ablateoff & \ablateoff & 65.7 & 65.0 & 0.43 \\
\ablateon  & \ablateoff & \ablateoff & 69.3 & 68.2 & 0.47 \\
\ablateon  & \ablateon  & \ablateoff & 70.6 & 69.5 & 0.63 \\
\ablateon  & \ablateoff & \ablateon  & 73.8 & 72.4 & 0.57 \\
\addlinespace[2pt]
\ablateon  & \ablateon  & \ablateon  & \textbf{76.1} & \textbf{75.1} & \textbf{0.68} \\
\bottomrule
\end{tabular}}
\captionof{table}{Post-training ablations from a shared RSFT initialization, from RSFT alone (first row) to full CASE-8B (last). Micro and Macro are role-averaged accuracy (\%); Insight is a $[0,1]$ score, from a fixed external judge, of the logical link between the Phase-4 insight and the final answer; RLAIF is defined in \appref{app:judge_evaluation}.}
\label{tab:learning_ablation}
\end{minipage}\hfill
\begin{minipage}[t]{0.49\textwidth}
\vspace{0pt}
\centering
\includegraphics[width=\linewidth]{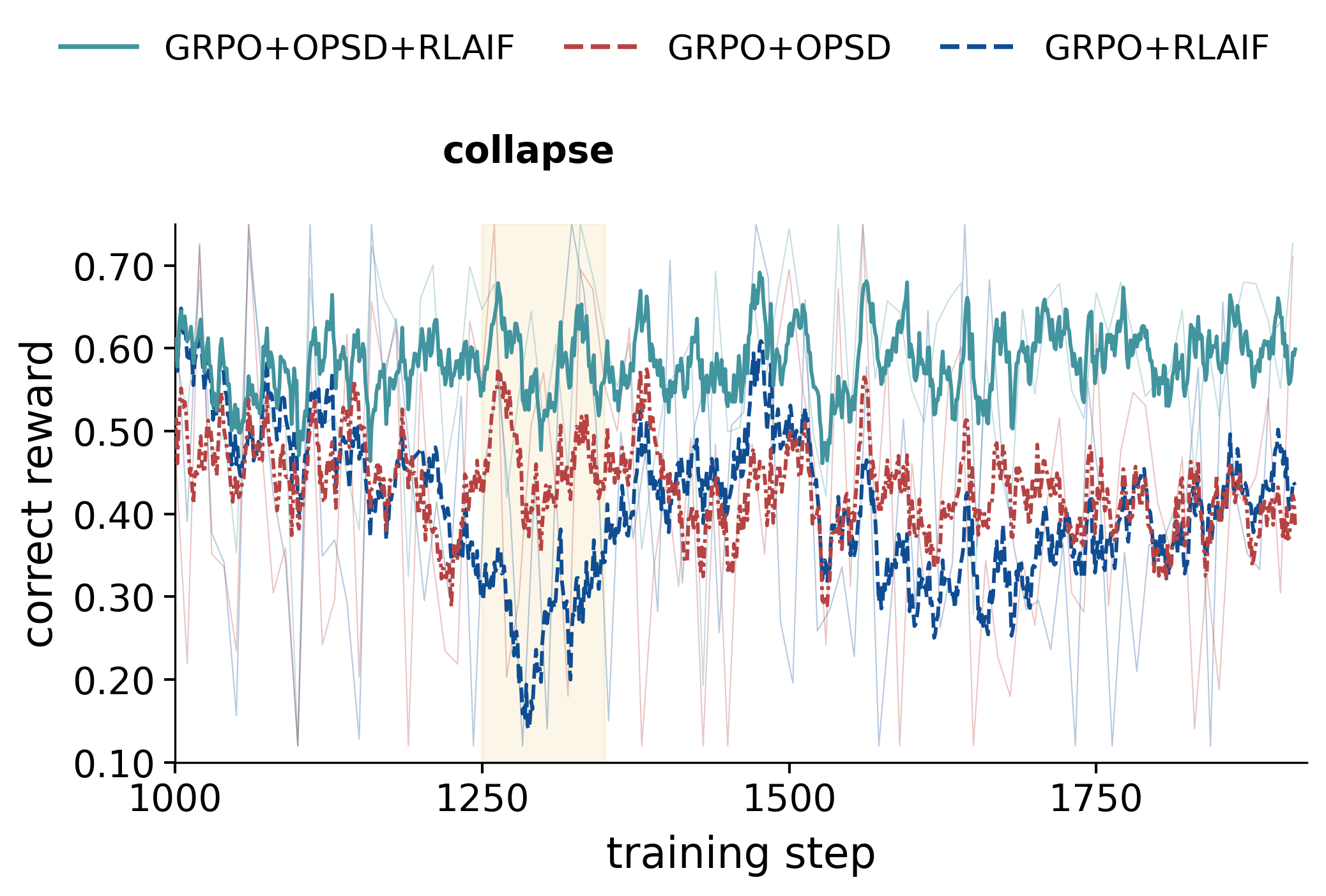}
\captionof{figure}{Correctness reward, steps 1{,}000--1{,}912, for CASE-8B (teal), GRPO+OPSD (red), and GRPO+RLAIF (blue). The band marks GRPO+RLAIF's late collapse without the OPSD brake; full epoch in \cref{fig:ablation_curves}.}
\label{fig:ablation_zoom}
\end{minipage}
\end{figure}

\paragraph{Per-role ablation.}
\Suppref{tab:role_ablation} repeats the component ablation per role, so each ingredient can be read where it acts, from RSFT over the base and GRPO over RSFT to OPSD and rubric feedback on top. Its lower block splits the insight score into the Phase-4 statement and the answer-anchored logic that OPSD and the linkage protect, and reports the fraction of Phase-4 evidence reused in the answer, locating the gain in answer-anchored reasoning and evidence reuse rather than plausibility.

\paragraph{Evidence dependence.}
To confirm the policy uses acquired evidence rather than priors, \suppref{tab:evidence_controls} varies what the agent may obtain (question-only, clinical-context-only, fixed-evidence) and removes each source (MRI, follow-up, reports) in turn (denominators in \appref{app:evaluation}). Accuracy drops whenever acquisition is withheld or a source removed, separating evidence-seeking from answer priors and from format or length artifacts; these ablate the \emph{evidence channel}, not the learning components. Historical pilots remain in \appref{app:historical}.

\subsection{Qualitative analysis}
\label{sec:qualitative}
\appref{app:example_trajectory} renders one complete Consultant rollout showing the behaviour the method targets. The agent recalls a written skill, cross-checks segmentation, cross-visit comparison, and draft reports, and, crucially, doubts an over-calling draft impression, requesting the missing arterial and washout evidence before rejecting it. The Phase-4 insight and final answer agree, tool calls stay within budget, and a post-answer review writes the conclusion back to skill memory, closing the acquisition--synthesis--memory loop of \cref{sec:method}; the trajectory thus makes the aggregate gains legible as doubt resolved against acquired evidence.

\section{Conclusion and Limitations}
\label{sec:conclusion}

\ours{} establishes a general framework for both evaluating and learning clinical investigation, making evidence acquisition measurable while keeping investigation policy-driven and evidence synthesis trainable. Although real clinical practice spans broader toolsets and specialty-specific diagnostic pathways, the same evaluation and training paradigm can be instantiated by specifying the available evidence, tools, and decision contract, providing a principled route from fixed-context assessment and workflow engineering toward learnable clinical investigation. Several limitations remain. Clinical labels may be unobservable from the available images, and generated evidence and judges remain fallible; establishing utility beyond this cohort, including generalization and efficiency, remains future work.

\clearpage
\section*{Ethics Statement}
This study used data from the UK Biobank Resource under Application Number 603483 as part of an approved research project. UK Biobank received ethical approval from the NHS North West Research Ethics Committee (11/NW/0382; 16/NW/0274), and no additional ethics review was required for this study. All analyses followed the applicable UK Biobank data-access and research-governance policies.

\section*{Reproducibility Statement}
We will release the implementation, recorded trajectories, benchmark construction and evaluation code, prompts, and training configurations needed to reproduce our experiments. UK Biobank participant-level data cannot be redistributed directly; access remains subject to UK Biobank approval and policy. Following agreement with UK Biobank, the CASE Benchmark will be returned to UK Biobank and made available to approved researchers through the UK Biobank Returned Datasets Catalogue and Research Analysis Platform (UKB-RAP). Publicly releasable code, benchmark specifications, and supporting artifacts will be hosted on Hugging Face and GitHub.

\clearpage

\bibliographystyle{assets/plainnat}
\bibliography{paper}

@article{jin2020medqa,
  title = {What Disease Does This Patient Have? A Large-scale Open Domain Question Answering Dataset from Medical Exams},
  author = {Jin, Di and Pan, Eileen and Oufattole, Nassim and Weng, Wei-Hung and Fang, Hanyi and Szolovits, Peter},
  journal = {arXiv preprint arXiv:2009.13081},
  year = {2020},
}

@inproceedings{pal2022medmcqa,
  title = {{MedMCQA}: A Large-scale Multi-Subject Multi-Choice Dataset for Medical Domain Question Answering},
  author = {Pal, Ankit and Umapathi, Logesh Kumar and Sankarasubbu, Malaikannan},
  booktitle = {Proceedings of the Conference on Health, Inference, and Learning},
  series = {Proceedings of Machine Learning Research},
  volume = {174},
  pages = {248--260},
  year = {2022},
}

@inproceedings{jin2019pubmedqa,
  title = {{PubMedQA}: A Dataset for Biomedical Research Question Answering},
  author = {Jin, Qiao and Dhingra, Bhuwan and Liu, Zhengping and Cohen, William and Lu, Xinghua},
  booktitle = {Proceedings of the 2019 Conference on Empirical Methods in Natural Language Processing and the 9th International Joint Conference on Natural Language Processing (EMNLP-IJCNLP)},
  pages = {2567--2577},
  year = {2019},
  doi = {10.18653/v1/D19-1259},
}

@inproceedings{zuo2025medxpertqa,
  title = {{MedXpertQA}: Benchmarking Expert-Level Medical Reasoning and Understanding},
  author = {Zuo, Yuxin and Qu, Shang and Li, Yifei and Chen, Zhang-Ren and Zhu, Xuekai and Hua, Ermo and Zhang, Kaiyan and Ding, Ning and Zhou, Bowen},
  booktitle = {Proceedings of the 42nd International Conference on Machine Learning},
  series = {Proceedings of Machine Learning Research},
  volume = {267},
  pages = {80961--80990},
  year = {2025},
}

@inproceedings{butsanets2026radimagenetvqa,
  title = {{RadImageNet-VQA}: A Large-Scale {CT} and {MRI} Dataset for Radiologic Visual Question Answering},
  author = {Butsanets, L{\'e}o and Corbi{\`e}re, Charles and Khlaut, Julien and Manceron, Pierre and Dancette, Corentin},
  booktitle = {Proceedings of the 9th International Conference on Medical Imaging with Deep Learning},
  series = {Proceedings of Machine Learning Research},
  volume = {315},
  pages = {3036--3068},
  year = {2026},
}

@article{bourigault2025ukbob,
  title = {{UKBOB}: One Billion {MRI} Labeled Masks for Generalizable {3D} Medical Image Segmentation},
  author = {Bourigault, Emmanuelle and Jamaludin, Amir and Hamdi, Abdullah},
  journal = {arXiv preprint arXiv:2504.06908},
  year = {2025},
}

@article{lau2018vqarad,
  title = {A Dataset of Clinically Generated Visual Questions and Answers about Radiology Images},
  author = {Lau, Jason J. and Gayen, Soumya and Ben Abacha, Asma and Demner-Fushman, Dina},
  journal = {Scientific Data},
  volume = {5},
  pages = {180251},
  year = {2018},
  doi = {10.1038/sdata.2018.251},
}

@inproceedings{liu2021slake,
  title = {{SLAKE}: A Semantically-Labeled Knowledge-Enhanced Dataset for Medical Visual Question Answering},
  author = {Liu, Bo and Zhan, Li-Ming and Xu, Li and Ma, Lin and Yang, Yan and Wu, Xiao-Ming},
  booktitle = {IEEE International Symposium on Biomedical Imaging (ISBI)},
  year = {2021},
}

@inproceedings{li2023llavamed,
  title = {{LLaVA-Med}: Training a Large Language-and-Vision Assistant for Biomedicine in One Day},
  author = {Li, Chunyuan and Wong, Cliff and Zhang, Sheng and Usuyama, Naoto and Liu, Haotian and Yang, Jianwei and Naumann, Tristan and Poon, Hoifung and Gao, Jianfeng},
  booktitle = {Advances in Neural Information Processing Systems},
  volume = {36},
  year = {2023},
}

@article{jiang2025medagentbench,
  title = {{MedAgentBench}: A Realistic Virtual {EHR} Environment to Benchmark Medical {LLM} Agents},
  author = {Jiang, Yixing and Black, Kameron C. and Geng, Gloria and Park, Danny and Zou, James and Ng, Andrew Y. and Chen, Jonathan H.},
  journal = {arXiv preprint arXiv:2501.14654},
  year = {2025},
}

@article{schmidgall2024agentclinic,
  title = {{AgentClinic}: A Multimodal Agent Benchmark to Evaluate {AI} in Simulated Clinical Environments},
  author = {Schmidgall, Samuel and Ziaei, Rojin and Harris, Carl and Reis, Eduardo and Jopling, Jeffrey and Moor, Michael},
  journal = {arXiv preprint arXiv:2405.07960},
  year = {2024},
}

@article{qiu2025medrbench,
  title = {Quantifying the Reasoning Abilities of {LLMs} on Real-world Clinical Cases},
  author = {Qiu, Pengcheng and Wu, Chaoyi and Liu, Shuyu and Zhao, Weike and Chen, Zhuoxia and Gu, Hongfei and Peng, Chuanjin and Zhang, Ya and Wang, Yanfeng and Xie, Weidi},
  journal = {arXiv preprint arXiv:2503.04691},
  year = {2025},
}

@inproceedings{vasilev2025mtbbench,
  title = {{MTBBench}: A Multimodal Sequential Clinical Decision-Making Benchmark in Oncology},
  author = {Vasilev, Kiril and Misrahi, Alexandre and Jain, Eeshaan and Cheng, Phil F. and Liakopoulos, Petros and Michielin, Olivier and Moor, Michael and Bunne, Charlotte},
  booktitle = {Advances in Neural Information Processing Systems},
  volume = {38},
  year = {2025},
}

@article{lu2026clinenv,
  title = {{ClinEnv}: An Interactive Multi-Stage Long Horizon {EHR} Environment for Agents},
  author = {Lu, Yuxing and Lin, Yushuhong and Shi, Wenqi and Tamo, J. Ben and Zhao, Xukai and Wang, Jinzhuo and Wang, May Dongmei},
  journal = {arXiv preprint arXiv:2606.02568},
  year = {2026},
}

@inproceedings{yao2023react,
  title = {{ReAct}: Synergizing Reasoning and Acting in Language Models},
  author = {Yao, Shunyu and Zhao, Jeffrey and Yu, Dian and Du, Nan and Shafran, Izhak and Narasimhan, Karthik and Cao, Yuan},
  booktitle = {International Conference on Learning Representations},
  year = {2023},
}

@inproceedings{schick2023toolformer,
  title = {{Toolformer}: Language Models Can Teach Themselves to Use Tools},
  author = {Schick, Timo and Dwivedi-Yu, Jane and Dess{\`i}, Roberto and Raileanu, Roberta and Lomeli, Maria and Hambro, Eric and Zettlemoyer, Luke and Cancedda, Nicola and Scialom, Thomas},
  booktitle = {Advances in Neural Information Processing Systems},
  volume = {36},
  year = {2023},
}

@inproceedings{agarwal2024gkd,
  title = {On-Policy Distillation of Language Models: Learning from Self-Generated Mistakes},
  author = {Agarwal, Rishabh and Vieillard, Nino and Zhou, Yongchao and Stanczyk, Piotr and Ramos, Sabela and Geist, Matthieu and Bachem, Olivier},
  booktitle = {International Conference on Learning Representations},
  year = {2024},
}

@inproceedings{gu2024minillm,
  title = {{MiniLLM}: Knowledge Distillation of Large Language Models},
  author = {Gu, Yuxian and Dong, Li and Wei, Furu and Huang, Minlie},
  booktitle = {International Conference on Learning Representations},
  year = {2024},
}

@article{zhao2026opsd,
  title = {Self-Distilled Reasoner: On-Policy Self-Distillation for Large Language Models},
  author = {Zhao, Siyan and Xie, Zhihui and Liu, Mengchen and Huang, Jing and Pang, Guan and Chen, Feiyu and Grover, Aditya},
  journal = {arXiv preprint arXiv:2601.18734},
  year = {2026},
}

@article{hubotter2026sdpo,
  title = {Reinforcement Learning via Self-Distillation},
  author = {H{\"u}botter, Jonas and L{\"u}beck, Frederike and Behric, Lejs and Baumann, Anton and Bagatella, Marco and Marta, Daniel and Hakimi, Ido and Shenfeld, Idan and Kleine Buening, Thomas and Guestrin, Carlos and Krause, Andreas},
  journal = {arXiv preprint arXiv:2601.20802},
  year = {2026},
}

@article{tian2026pbsd,
  title = {{PBSD}: Privileged Bayesian Self-Distillation for Long-Horizon Credit Assignment},
  author = {Tian, Yang and Wang, Rui and Wen, Xumeng and Li, Junjie and Sun, Shizhao and Song, Lei and Bian, Jiang and Zhao, Bo},
  journal = {arXiv preprint arXiv:2606.09348},
  year = {2026},
}

@article{zhang2026adrs,
  title = {Agentic Reinforcement Learning with Self-Distilled Reward Shaping},
  author = {Zhang, Ranxu and Chen, Guinan and {Chenshaodong} and Lin, Jinghao and Xu, Xiaozhou and {Sunzhe} and Zhang, Yanyong and Wang, Chao},
  journal = {arXiv preprint arXiv:2608.03223},
  year = {2026},
}

@article{zhu2026manyfaces,
  title = {The Many Faces of On-Policy Distillation: Pitfalls, Mechanisms, and Fixes},
  author = {Zhu, Siqi and Ye, Xuyan and Lu, Hongyu and Shi, Weiye and Liu, Ge},
  journal = {arXiv preprint arXiv:2605.11182},
  year = {2026},
}

@article{schulman2017ppo,
  title = {Proximal Policy Optimization Algorithms},
  author = {Schulman, John and Wolski, Filip and Dhariwal, Prafulla and Radford, Alec and Klimov, Oleg},
  journal = {arXiv preprint arXiv:1707.06347},
  year = {2017},
}

@article{shao2024deepseekmath,
  title = {{DeepSeekMath}: Pushing the Limits of Mathematical Reasoning in Open Language Models},
  author = {Shao, Zhihong and Wang, Peiyi and Zhu, Qihao and Xu, Runxin and Song, Junxiao and Bi, Xiao and Zhang, Haowei and Zhang, Mingchuan and Li, Y. K. and Wu, Y. and Guo, Daya},
  journal = {arXiv preprint arXiv:2402.03300},
  year = {2024},
}

@inproceedings{zheng2023judge,
  title = {Judging {LLM}-as-a-Judge with {MT-Bench} and Chatbot Arena},
  author = {Zheng, Lianmin and Chiang, Wei-Lin and Sheng, Ying and Zhuang, Siyuan and Wu, Zhanghao and Zhuang, Yonghao and Lin, Zi and Li, Zhuohan and Li, Dacheng and Xing, Eric P. and Zhang, Hao and Gonzalez, Joseph E. and Stoica, Ion},
  booktitle = {Advances in Neural Information Processing Systems},
  volume = {36},
  year = {2023},
}

@article{zhang2026rlad,
  title = {Reinforcement-aware Knowledge Distillation for {LLM} Reasoning},
  author = {Zhang, Zhaoyang and Jiang, Shuli and Shen, Yantao and Zhang, Yuting and Ram, Dhananjay and Yang, Shuo and Tu, Zhuowen and Xia, Wei and Soatto, Stefano},
  journal = {arXiv preprint arXiv:2602.22495},
  year = {2026},
}

@article{wang2026trace,
  title = {{TRACE}: Distilling Where It Matters via Token-Routed Self On-Policy Alignment},
  author = {Wang, Jiaxuan and Ouyang, Xuan and Chen, Zhiyu and Hu, Yulan and Pan, Zheng and Li, Xin and Guo, Lan-Zhe},
  journal = {arXiv preprint arXiv:2605.10194},
  year = {2026},
}

@article{tan2026ssopd,
  title = {Self-Supervised On-Policy Distillation for Reasoning Language Models},
  author = {Tan, Zhiquan and Hong, Yinrong},
  journal = {arXiv preprint arXiv:2605.17497},
  year = {2026},
}

@article{liu2026tgpo,
  title = {Teacher-Guided Policy Optimization for On-Policy Reasoning Distillation under Large Policy Divergence},
  author = {Liu, Xinyu and Jiao, Kechen and Xiao, Chunyang and Zhao, Runsong and Ruan, Junhao and Li, Bei and Liu, Jiahao and Wang, Qifan and Chen, Xin and Wang, Jingang and Wang, Chenglong and Xiao, Tong and Zhu, Jingbo},
  journal = {arXiv preprint arXiv:2605.13230},
  year = {2026},
}

@article{yang2026ocsd,
  title = {Agentic Reinforcement Learning with Observation-Calibrated Self-Distillation},
  author = {Yang, Yi and Qin, Cong and Liu, Xiaodan and Chen, Chishui and Dong, Qing and Zhang, Yan and Liu, Cao and Yang, Zhao and Pan, Lu and Lin, Jiaye and Feng, Yi},
  journal = {arXiv preprint arXiv:2608.04788},
  year = {2026},
}

@article{wu2026sspo,
  title = {Beyond Outcome Rewards: Step-Level Self-Distilled Policy Optimization for Deep Search Agents},
  author = {Wu, Haoze and Kuang, Chuqiao and Zhuang, Tianyi and Li, Xiaoguang},
  journal = {arXiv preprint arXiv:2608.12764},
  year = {2026},
}

@article{lin2026opdvr,
  title = {On-policy Distillation with Verifiable Reward},
  author = {Lin, Wenze and Zhao, Jiale and Jiang, Xitai and Rao, Songde and Li, Yining and Wang, Shenzhi and He, Bingxiang and Huang, Gao},
  journal = {arXiv preprint arXiv:2608.24696},
  year = {2026},
}

@inproceedings{kim2024mdagents,
  title = {{MDAgents}: An Adaptive Collaboration of {LLMs} for Medical Decision-Making},
  author = {Kim, Yubin and Park, Chanwoo and Jeong, Hyewon and Chan, Yik Siu and Xu, Xuhai and McDuff, Daniel and Lee, Hyeonhoon and Ghassemi, Marzyeh and Breazeal, Cynthia and Park, Hae Won},
  booktitle = {Advances in Neural Information Processing Systems},
  volume = {37},
  year = {2024},
}

@inproceedings{li2024mmedagent,
  title = {{MMedAgent}: Learning to Use Medical Tools with Multi-modal Agent},
  author = {Li, Binxu and Yan, Tiankai and Pan, Yuanting and Luo, Jie and Ji, Ruiyang and Ding, Jiayuan and Xu, Zhe and Liu, Shilong and Dong, Haoyu and Lin, Zihao and Wang, Yixin},
  booktitle = {Findings of the Association for Computational Linguistics: EMNLP 2024},
  pages = {8745--8760},
  year = {2024},
  doi = {10.18653/v1/2024.findings-emnlp.510},
}

@inproceedings{fallahpour2025medrax,
  title = {{MedRAX}: Medical Reasoning Agent for Chest X-ray},
  author = {Fallahpour, Adibvafa and Ma, Jun and Munim, Alif and Lyu, Hongwei and Wang, Bo},
  booktitle = {Proceedings of the 42nd International Conference on Machine Learning},
  series = {Proceedings of Machine Learning Research},
  volume = {267},
  pages = {15661--15676},
  year = {2025},
}

@inproceedings{almansoori2025medagentsim,
  title = {{MedAgentSim}: Self-Evolving Multi-Agent Simulations for Realistic Clinical Interactions},
  author = {Almansoori, Mohammad and Kumar, Komal and Cholakkal, Hisham},
  booktitle = {Medical Image Computing and Computer Assisted Intervention -- MICCAI 2025},
  series = {Lecture Notes in Computer Science},
  volume = {15968},
  pages = {362--372},
  publisher = {Springer Nature Switzerland},
  year = {2025},
  doi = {10.1007/978-3-032-05114-1_35},
}

@inproceedings{liu2025medchain,
  title = {{MedChain}: Bridging the Gap Between {LLM} Agents and Clinical Practice with Interactive Sequence},
  author = {Liu, Jie and Wang, Wenxuan and Ma, Zizhan and Huang, Guolin and Su, Yihang and Chang, Kao-Jung and Li, Haoliang and Shen, Linlin and Lyu, Michael R. and Chen, Wenting},
  booktitle = {Advances in Neural Information Processing Systems},
  volume = {38},
  year = {2025},
}

@inproceedings{chiu2025vivabench,
  title = {Simulating Viva Voce Examinations to Evaluate Clinical Reasoning in Large Language Models},
  author = {Chiu, Christopher and Pitis, Silviu and van der Schaar, Mihaela},
  booktitle = {Advances in Neural Information Processing Systems},
  volume = {38},
  year = {2025},
}

@inproceedings{zhu2026medagentboard,
  title = {{MedAgentBoard}: Benchmarking multi-agent collaboration with conventional methods for diverse medical tasks},
  author = {Zhu, Yinghao and He, Ziyi and Hu, Haoran and Zheng, Xiaochen and Zhang, Xichen and Wang, Jiyao and Gao, Junyi and Ma, Liantao and Yu, Lequan},
  booktitle = {Advances in Neural Information Processing Systems},
  volume = {38},
  year = {2025},
  doi = {10.52202/085713-4851},
}

@inproceedings{yan2025clinicallab,
  title = {{ClinicalLab}: Aligning Agents for Multi-Departmental Clinical Diagnostics in the Real World},
  author = {Yan, Weixiang and Liu, Haitian and Wu, Tengxiao and Chen, Qian and Wang, Wen and Chai, Haoyuan and Wang, Jiayi},
  booktitle = {Advances in Neural Information Processing Systems},
  volume = {38},
  year = {2025},
}

@inproceedings{wang2026medagentpro,
  title = {{MedAgent-Pro}: Towards Evidence-based Multi-modal Medical Diagnosis via Reasoning Agentic Workflow},
  author = {Wang, Ziyue and Wu, Junde and Cai, Linghan and Low, Chang Han and Yang, Xihong and Li, Qiaxuan and Jin, Yueming},
  booktitle = {The Fourteenth International Conference on Learning Representations},
  year = {2026},
}

@inproceedings{zhang2026radagents,
  title = {{RadAgents}: Multimodal Agentic Reasoning for Chest X-ray Interpretation with Radiologist-like Workflows},
  author = {Zhang, Kai and Barrett, Corey D. and Kim, Jangwon and Sun, Lichao and Taghavi, Tara and Kenthapadi, Krishnaram},
  booktitle = {Proceedings of the 9th International Conference on Medical Imaging with Deep Learning},
  series = {Proceedings of Machine Learning Research},
  volume = {315},
  pages = {3496--3519},
  year = {2026},
}

@inproceedings{jiang2026medvr,
  title = {{MedVR}: Annotation-Free Medical Visual Reasoning via Agentic Reinforcement Learning},
  author = {Jiang, Zheng and Guo, Heng and Fang, Chengyu and Xiao, Changchen and Hu, Xinyang and Sun, Lifeng and Xu, Minfeng},
  booktitle = {The Fourteenth International Conference on Learning Representations},
  year = {2026},
}

@article{jiang2026ophiuchus,
  title = {{Ophiuchus}: Incentivizing Tool-augmented ``Think with Images'' for Joint Medical Segmentation, Understanding and Reasoning},
  author = {Jiang, Yankai and Zhang, Yujie and Zhang, Peng and Li, Wenjie and Li, Yichen and Chen, Jintai and Shi, Xiaoming and Zhen, Shihui},
  journal = {arXiv preprint arXiv:2512.14157},
  year = {2026},
  note = {Version 2, July 2026; first submitted December 2025},
}

@article{fan2026macro,
  title = {Evolving Medical Imaging Agents via Experience-driven Self-skill Discovery},
  author = {Fan, Lin and Dai, Pengyu and Deng, Zhipeng and Wang, Haolin and Gong, Xun and Zheng, Yefeng and Ou, Yafei},
  journal = {arXiv preprint arXiv:2603.05860},
  year = {2026},
}

@article{sudlow2015ukbiobank,
  title = {{UK Biobank}: An Open Access Resource for Identifying the Causes of a Wide Range of Complex Diseases of Middle and Old Age},
  author = {Sudlow, Cathie and Gallacher, John and Allen, Naomi and Beral, Valerie and Burton, Paul and Danesh, John and Downey, Paul and Elliott, Paul and Green, Jane and Landray, Martin and Liu, Bette and Matthews, Paul and Ong, Giok and Pell, Jill and Silman, Alan and Young, Alan and Sprosen, Tim and Peakman, Tim and Collins, Rory},
  journal = {PLoS Medicine},
  volume = {12},
  number = {3},
  pages = {e1001779},
  year = {2015},
  doi = {10.1371/journal.pmed.1001779},
}

@misc{qwen2026qwen38max,
  title = {{Qwen3.8-Max}: A New Bar for Coding and Cowork},
  author = {{Qwen Team}},
  year = {2026},
  month = {August},
  url = {https://qwen.ai/blog?id=qwen3.8},
}

@misc{qwen2026qwen37plus,
  title = {{Qwen3.7-Plus}: Multimodal Agent Intelligence},
  author = {{Qwen Team}},
  year = {2026},
  month = {May},
  url = {https://qwen.ai/blog?id=qwen3.7-plus},
}

@article{deepseekai2026v4,
  title = {{DeepSeek-V4}: Towards Highly Efficient Million-Token Context Intelligence},
  author = {{DeepSeek-AI} and Xu, Anyi and Lin, Bangcai and Xue, Bing and Wang, Bingxuan and others},
  journal = {arXiv preprint arXiv:2606.19348},
  year = {2026},
  url = {https://arxiv.org/abs/2606.19348},
}

@misc{qwen2026qwen35,
  title = {{Qwen3.5}: Towards Native Multimodal Agents},
  author = {{Qwen Team}},
  year = {2026},
  month = {February},
  url = {https://qwen.ai/blog?id=qwen3.5},
}

@misc{moonshot2026kimik26,
  title = {{Kimi K2.6}: Advancing Open-Source Coding},
  author = {{Moonshot AI}},
  year = {2026},
  howpublished = {\href{https://www.kimi.ai/blog/kimi-k2-6}{Official technical blog}},
}

@misc{anthropic2026opus48,
  title = {Introducing {Claude Opus 4.8}},
  author = {{Anthropic}},
  year = {2026},
  howpublished = {\href{https://www.anthropic.com/news/claude-opus-4-8}{Official model release}},
}

@misc{openai2026gpt54,
  title = {{GPT-5.4} Model},
  author = {{OpenAI}},
  year = {2026},
  howpublished = {\href{https://developers.openai.com/api/docs/models/gpt-5.4}{Official model documentation}},
}

@misc{google2026gemini35,
  title = {{Gemini 3.5 Flash}},
  author = {{Google DeepMind}},
  year = {2026},
  howpublished = {\href{https://ai.google.dev/gemini-api/docs/models/gemini-3.5-flash}{Official model documentation}},
}

@misc{moonshot2026kimik3,
  title = {{Kimi K3}: Open Frontier Intelligence},
  author = {{Moonshot AI}},
  year = {2026},
  howpublished = {\href{https://www.kimi.ai/blog/kimi-k3}{Official technical blog}},
}

@misc{langchain2024langgraph,
  title = {{LangGraph}},
  author = {{LangChain Team}},
  year = {2024},
  howpublished = {\href{https://github.com/langchain-ai/langgraph}{Software framework}},
}

@article{bai2025qwen3vl,
  title = {{Qwen3-VL} Technical Report},
  author = {Bai, Shuai and Cai, Yuxuan and Chen, Ruizhe and Chen, Keqin and Chen, Xionghui and others},
  journal = {arXiv preprint arXiv:2511.21631},
  year = {2025},
  url = {https://arxiv.org/abs/2511.21631},
}

@article{xu2026lingshu,
  title={Lingshu: Generalist Foundation Model for Unified Multimodal Medical Understanding and Reasoning},
  author={Xu, Weiwen and Chan, Hou Pong and Li, Long and Aljunied, Mahani and Yuan, Ruifeng and Wang, Jianyu and Xiao, Chenghao and Chen, Guizhen and Liu, Chaoqun and Li, Zhaodonghui and others},
  journal={IEEE Transactions on Pattern Analysis and Machine Intelligence},
  year={2026},
  publisher={IEEE}
}

@inproceedings{welleck-etal-2020-consistency,
  title = {Consistency of a Recurrent Language Model With Respect to Incomplete Decoding},
  author = {Welleck, Sean and Kulikov, Ilia and Kim, Jaedeok and Pang, Richard Yuanzhe and Cho, Kyunghyun},
  booktitle = {Proceedings of the 2020 Conference on Empirical Methods in Natural Language Processing (EMNLP)},
  year = {2020},
  publisher = {Association for Computational Linguistics},
  pages = {5553--5568},
  doi = {10.18653/v1/2020.emnlp-main.448},
  url = {https://aclanthology.org/2020.emnlp-main.448/}
}

@inproceedings{he2025vista3d,
  title = {{VISTA3D}: A Unified Segmentation Foundation Model for {3D} Medical Imaging},
  author = {He, Yufan and Guo, Pengfei and Tang, Yucheng and Myronenko, Andriy and Nath, Vishwesh and Xu, Ziyue and Yang, Dong and Zhao, Can and Simon, Benjamin and Belue, Mason and Harmon, Stephanie and Turkbey, Baris and Xu, Daguang and Li, Wenqi},
  booktitle = {Proceedings of the IEEE/CVF Conference on Computer Vision and Pattern Recognition},
  pages = {20863--20873},
  year = {2025},
}

@article{sellergren2025medgemma,
  title = {{MedGemma} Technical Report},
  author = {Sellergren, Andrew and Kazemzadeh, Sahar and Jaroensri, Tiam and Kiraly, Atilla and Traverse, Madeleine and Kohlberger, Timo and Xu, Shawn and Jamil, Fayaz and Hughes, C{\'i}an and Lau, Charles and others},
  journal = {arXiv preprint arXiv:2507.05201},
  year = {2025},
}

@article{sellergren2026medgemma15,
  title = {{MedGemma 1.5} Technical Report},
  author = {Sellergren, Andrew and Gao, Chufan and Mahvar, Fereshteh and Kohlberger, Timo and Jamil, Fayaz and others},
  journal = {arXiv preprint arXiv:2604.05081},
  year = {2026},
  url = {https://arxiv.org/abs/2604.05081},
}

@article{littlejohns2020ukbimaging,
  title = {The {UK Biobank} Imaging Enhancement of 100,000 Participants: Rationale, Data Collection, Management and Future Directions},
  author = {Littlejohns, Thomas J. and Holliday, Jo and Gibson, Lorna M. and Garratt, Steve and Oesingmann, Niels and others},
  journal = {Nature Communications},
  volume = {11},
  pages = {2624},
  year = {2020},
  doi = {10.1038/s41467-020-15948-9},
}

@misc{ukbiobankimagingdata,
  title = {Imaging Data},
  author = {{UK Biobank}},
  howpublished = {\href{https://community.ukbiobank.ac.uk/hc/en-gb/articles/24618819821981-Imaging-Data}{UK Biobank documentation}},
  note = {Accessed 2026-09-28},
}

@inproceedings{lee2024rlaif,
  title = {{RLAIF} vs. {RLHF}: Scaling Reinforcement Learning from Human Feedback with {AI} Feedback},
  author = {Lee, Harrison and Phatale, Samrat and Mansoor, Hassan and Mesnard, Thomas and Ferret, Johan and Lu, Kellie Ren and Bishop, Colton and Hall, Ethan and Carbune, Victor and Rastogi, Abhinav and Prakash, Sushant},
  booktitle = {Proceedings of the 41st International Conference on Machine Learning},
  series = {Proceedings of Machine Learning Research},
  volume = {235},
  pages = {26874--26901},
  year = {2024},
  url = {https://proceedings.mlr.press/v235/lee24t.html},
}

@inproceedings{kwon2023vllm,
  title = {Efficient Memory Management for Large Language Model Serving with {PagedAttention}},
  author = {Kwon, Woosuk and Li, Zhuohan and Zhuang, Siyuan and Sheng, Ying and Zheng, Lianmin and Yu, Cody Hao and Gonzalez, Joseph E. and Zhang, Hao and Stoica, Ion},
  booktitle = {Proceedings of the 29th Symposium on Operating Systems Principles},
  pages = {611--626},
  year = {2023},
  doi = {10.1145/3600006.3613165},
}

@inproceedings{sheng2025hybridflow,
  title = {{HybridFlow}: A Flexible and Efficient {RLHF} Framework},
  author = {Sheng, Guangming and Zhang, Chi and Ye, Zilingfeng and Wu, Xibin and Zhang, Wang and Zhang, Ru and Peng, Yanghua and Lin, Haibin and Wu, Chuan},
  booktitle = {Proceedings of the Twentieth European Conference on Computer Systems},
  year = {2025},
  doi = {10.1145/3689031.3696075},
}

\newpage
\beginappendix
\section{Benchmark Construction and Information Boundaries}
\label{app:data}

\paragraph{Cohort and temporal scope.}
The preparation inventory contains 5,265 imaging participants; the corrected question bank contains 50,401 questions from 4,739 participants, including 4,850 held-out questions from 455 participants. These are distinct construction stages. Binary, other single-select, and multi-select questions account for 17,502, 19,790, and 13,109 items. Within the question-bank cohort, 3,566 participants have three organ groups, 929 have two, and 244 have one. Baseline and follow-up denote the earlier and later imaging visits (UK Biobank Instances 2 and 3)~\citep{ukbiobankimagingdata,littlejohns2020ukbimaging}, not recruitment and incident diagnosis. Participants are aged 45--85 at baseline, imaged from 2014 with a median inter-visit interval of 2.26 years, and average roughly ten questions each; organ-specific availability is recorded separately.

\paragraph{Diagnosis-derived question construction.}
The source loader joins participant identifiers to five clinical fields in the participant-level diagnosis table. Cancer status uses Instance 3; diabetes, vascular/heart diagnoses, and the mixed respiratory/allergy diagnosis list use Instance 2. ICD-based targets use the consolidated diagnosis-code list. The original question generator constructs five base tasks: two binary diagnosis questions and three diagnosis-set questions covering ICD-10 conditions, cardiovascular conditions, and respiratory/allergy conditions.

The source generator uses random seed 42. Diagnosis vocabularies collect distinct values across the loaded cohort. Distractors exclude each participant's reference conditions. ICD-10 items preferentially sample codes sharing the recorded codes' initial-letter chapter, supplementing from other chapters to reach four distractors. Cardiovascular and respiratory/allergy items sample $\max(2,4-n_+)$ distractors, where $n_+$ counts recorded positive conditions, and append ``None of the above''; that option is correct when no positive condition remains. Nonbinary options are shuffled. These are source-generation rules; final options are defined by the corrected bank.

Task expansion adds single-condition variants, ICD-derived disease-system selection, personal/family-history targets from Z codes, and essential hypertension from I10. Subsequent correction adds ``None of the above'' when a single-select reference diagnosis is absent from the options and removes cases with multiple valid selections. Together these stages produce the ten strata in \suppref{tab:appendix_types}, rather than a direct five-task export. Reproduction requires the expansion mappings, exact history-code predicate, correction counts, and source-to-final question identifiers.

The original binary rule maps values beginning with ``yes'' to Yes and all others to No; missing and nonresponse values therefore require explicit validation. The source vocabulary pools the full loaded cohort, not only training participants. Vocabulary provenance, missing-value treatment, and split membership must accompany the released bank.

\begin{table}[htbp]
\centering\small
\begin{tabularx}{\textwidth}{@{}p{.22\textwidth}Xlr@{}}
\toprule
Clinical area & Target & Form & Questions \\
\midrule
Oncology & Doctor-diagnosed cancer & B & 4,710 \\
Endocrine/metabolic & Doctor-diagnosed diabetes & B & 4,718 \\
Cardiovascular & Condition identification & S & 4,685 \\
Cardiovascular & Condition set & M & 4,739 \\
Cardiovascular & Essential hypertension (I10) & B & 4,037 \\
Respiratory/allergy & Condition identification & S & 5,013 \\
Respiratory/allergy & Condition set & M & 4,733 \\
Multisystem disease & ICD-10 condition identification & S & 10,092 \\
Multisystem disease & Affected disease systems & M & 3,637 \\
Medical history & Personal/family history or Z-code & B & 4,037 \\
\bottomrule
\end{tabularx}
\caption{Clinical organization of the ten task--format strata. B, S, and M denote binary, single-select, and multi-select; multi-select cardinality varies by question. Hypertension is grouped with cardiovascular medicine, while its source label remains a distinct target family.}
\label{tab:appendix_types}
\end{table}

\paragraph{Composition and difficulty.}
Panel~2 of \cref{fig:benchmark_overview} flows all 50{,}401 questions from seven clinical families to three answer formats. The families are Cancer (4{,}710), Diabetes (4{,}718), Cardiovascular (9{,}424), Respiratory/allergy (9{,}746), ICD (13{,}729), History (4{,}037), and Hypertension (4{,}037), aggregating the ten strata of \suppref{tab:appendix_types}; the formats are Binary (17{,}502, 34.7\%), Single-select (19{,}790, 39.3\%), and Multi-select (13{,}109, 26.0\%). Panel~5 partitions the 4{,}739 participants into eleven mutually exclusive organ/protocol combinations. Each code lists the imaging sequences a participant has, where W denotes whole-body Dixon, H cardiac cine, P pancreas GE, B brain T1--T2/FLAIR, and L liver IDEAL. The counts are WHP 1{,}404, WHB 929, WH 843, WHL 527, WPB 438, WPL 267, B 142, H 102, HB 78, WB 8, and WBL 1, which sum to the 3{,}566 / 929 / 244 participants holding three / two / one organ groups. \Cref{fig:benchmark_stats} then quantifies difficulty and interaction depth across the three roles.

\begin{figure}[htbp]
\centering
\includegraphics[width=\linewidth]{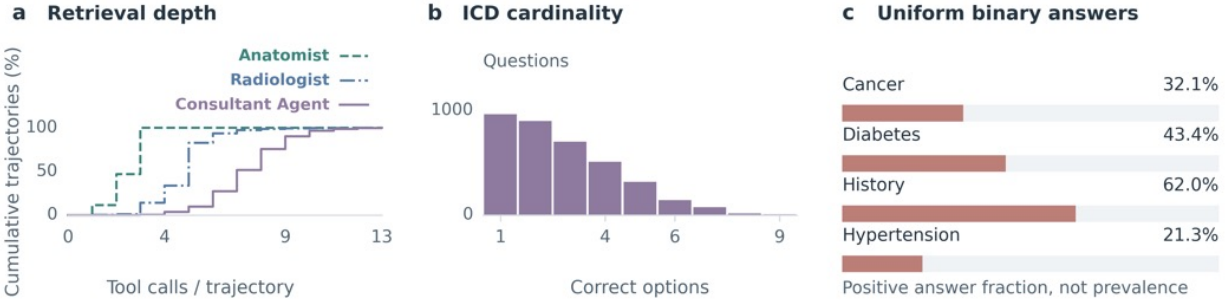}
\caption{\textbf{Benchmark difficulty and interaction depth.} (a) First-pass tool-call depth per role over 46{,}613 / 48{,}587 / 49{,}827 auditable Anatomist / Radiologist / Consultant trajectories; the median is 3 calls for Anatomist (almost all $\le4$), 5 for Radiologist (90\% $\le6$), and 7 for Consultant (90\% $\le9$), a hierarchy of evidence-acquisition complexity. (b) Answer cardinality of the 3{,}637 ICD multi-select items; the number of correct options $k$ spans 1--9 and concentrates at $k\le4$ (965 / 899 / 702 / 511 for $k{=}1$--$4$, ${\approx}85\%$), so exact-set scoring must recover a few correct options out of nine. (c) Positive-class proportion of the four binary families, Cancer 32.1\%, Diabetes 43.4\%, History 62.0\%, and Hypertension 21.3\% over 4{,}710 / 4{,}718 / 4{,}037 / 4{,}037 items; none is near 0 or 1, so always answering the majority class cannot inflate accuracy.}
\label{fig:benchmark_stats}
\end{figure}

\paragraph{Vista3D-derived anatomy.}
Vista3D processes whole-body Dixon MRI offline, producing station-level masks across the six principal neck-to-ankle stations (4, 8, 12, 16, 20, 24; some participants have fewer). For a station mask $S$, anatomical label $\ell$, and voxel spacing $(s_x,s_y,s_z)$ in millimetres read from the NIfTI header, the organ's voxel count $N_\ell=\sum_v\ind[S(v)=\ell]$ (a per-voxel equality test against the label) times the voxel volume gives the station volume in millilitres,
\begin{equation}
 V_{\ell}=10^{-3}s_xs_ys_z\,N_\ell=10^{-3}s_xs_ys_z\sum_v\ind[S(v)=\ell]\quad\text{mL}.
 \label{eq:appendix_volume}
\end{equation}
A visit-level volume sums the stations $\mathcal S_\ell$ whose mask contains the label, and the two-visit comparison takes the absolute change,
\begin{equation}
 V_\ell^{\mathrm{visit}}=\sum_{s\in\mathcal S_\ell}V_{\ell,s},\qquad
 \Delta V_\ell=V_\ell^{(2)}-V_\ell^{(1)},
 \label{eq:appendix_volume_visit}
\end{equation}
with relative change $100\,\Delta V_\ell/V_\ell^{(1)}$ percent recorded as $0$ when $V_\ell^{(1)}=0$. Per-slice counts additionally yield the slices containing the organ, their number, the maximal-voxel slice $z^\star$, and its cross-sectional area $s_xs_y\,N_\ell(z^\star)$ in mm$^2$. Volumes and percentages are rounded to one decimal place, and these rounded values are exactly what tool responses surface to the model. Visit-level totals sum per-station volumes without cross-station deduplication, i.e.\ we assume each station's mask covers a disjoint body segment, so an organ may span stations but no voxel is counted twice. Tools derive station and visit comparisons and render maximal-slice overlays. These measurements inherit segmentation and coverage error and do not define disease labels.

\paragraph{MedGemma report preparation.}
MedGemma-27B processes fixed MRI montages, orthogonal views, representative slices, and cardiac overviews offline. The preparation archive contains 42,717 findings, impression, and follow-up objects for 14,239 organ--participant pairs. A manifest indexes each report and its source images by participant, anatomy, sequence, and visit. Retrieval exposes at most 400 characters from the relevant summary field and masks ICD-like strings without rerunning the generator. Generation prompts, model and decoding versions, and manifest hashes specify provenance.

\paragraph{Participant partition.}
For participant $p$, define $H_s(p)=\operatorname{int}(\operatorname{MD5}(s\Vert p))\bmod100$. The preparation code reserves test participants with $H_{\mathrm{test}:s}(p)<100\rho$, $\rho=0.1$, then selects training participants using an independent $\mathrm{train}:s'$ salt. All derived questions, visits, roles, demonstrations, and memory sources must respect participant membership. 


\subsection{Benchmark landscape and evaluation paradigms}
\label{app:benchmark_landscape}

The distinction of interest is whether patient evidence is supplied as context or acquired through actions, and which aspects of the resulting interaction are evaluated. These are overlapping design choices, not a progression from non-agentic to uniformly superior benchmarks.

\begin{table}[htbp]
\centering\small
\begin{tabularx}{\textwidth}{@{}p{.17\textwidth}XXX@{}}
\toprule
Paradigm & Benchmark input & Model behavior & Evaluation target \\
\midrule
Medical QA & Question and task-provided text or images; external knowledge access where permitted. & Infer an answer from available context, optionally retrieving external knowledge. & Primarily answer correctness. \\
Interactive medical tasks & Task, patient state or records, and tools or simulated encounters. & Acquire information and perform task-dependent decisions or actions. & Task-specific success; some benchmarks also assess reasoning or information acquisition. \\
\ours{} & Clinical question and a participant-specific longitudinal evidence environment. & Select tools, accumulate text and images, synthesize findings, and answer. & Exact-set accuracy; logged retrieval and completion metrics. Independent grounding assessment is an additional validation protocol. \\
\bottomrule
\end{tabularx}
\caption{Illustrative input--interaction--evaluation distinctions. The categories are not mutually exclusive, and existing interactive benchmarks need not prescribe a tool sequence.}
\label{tab:benchmark_paradigms}
\end{table}

\paragraph{Question-answering resources.}
\citet{jin2020medqa} and \citet{pal2022medmcqa} organize medical examination questions in MedQA and MedMCQA; the former also study document retrieval, so medical QA should not be equated with universally fixed knowledge access. \citet{jin2019pubmedqa} condition PubMedQA answers on biomedical abstracts. \citet{lau2018vqarad}, \citet{liu2021slake}, and \citet{butsanets2026radimagenetvqa} emphasize image-conditioned questions in VQA-RAD, SLAKE, and RadImageNet-VQA, while \citet{zuo2025medxpertqa} include challenging clinical text and multimodal questions in MedXpertQA. Difficulty and clinical content alone do not specify whether the model must select observations from a patient-indexed environment.

\paragraph{Interactive and longitudinal tasks.}
\citet{jiang2025medagentbench} provide patient-specific FHIR tasks in MedAgentBench, and \citet{schmidgall2024agentclinic} support information gathering through AgentClinic encounters. \citet{qiu2025medrbench} evaluate clinical reasoning in MedR-Bench. \citet{vasilev2025mtbbench} combine longitudinal oncology decisions with multimodal tools in MTBBench, while \citet{lu2026clinenv} study information acquisition and decisions across inpatient stages in ClinEnv. These are substantive precedents for autonomous medical interaction. \ours{} studies a complementary setting: fixed clinical-label questions paired with follow-up imaging evidence whose measurements, views, and generated reports are separately accessible. Recording an evidence path enables its analysis; neither logging nor answer correctness establishes that the path contains a valid medical justification.

\paragraph{Cohort data and benchmark structure.}
\citet{bourigault2025ukbob} introduce UKBOB for large-scale MRI segmentation supervision and evaluation of segmentation generalization. \ours{} contributes a different task structure: participant cohort, longitudinal question instances, taxonomy, reference labels, partitions, evaluation rules, and an executable evidence interface. A data resource can also support benchmarks; the distinction is the task and evaluation protocol, not whether the underlying participant data can be redistributed. Baseline configurations and required measurements are specified in \appref{app:evaluation}; controlled baseline results remain to be supplied.

\subsection{Medical agents and forms of adaptation}
\label{app:medical_agents}

An agent can change its decisions through new observations, retrieved experience, or parameter updates. These mechanisms should not be conflated. In particular, inference-time orchestration can support autonomous actions and recovery without optimizing the underlying model, while supervised tool training already learns a policy even without RL. \Suppref{tab:medical_workflows,tab:medical_policy_learning} describe the adaptation mechanisms reported in the cited papers, not a binary judgment of whether a system is a genuine agent. Training an external segmentation model is also distinct from training the agent that chooses to invoke it.

\begin{table}[htbp]
\centering\small
\begin{tabularx}{\textwidth}{@{}p{.20\textwidth}XX@{}}
\toprule
System & Evidence use and interaction & Reported adaptation mechanism \\
\midrule
MDAgents~\citep{kim2024mdagents} & Medical problem solving with complexity-dependent individual or collaborative consultation. & Inference-time selection of roles and collaboration structure. \\
MedRAX~\citep{fallahpour2025medrax} & Iterative chest X-ray analysis using specialist tools. & Tool orchestration without additional agent training. \\
MedAgentSim~\citep{almansoori2025medagentsim} & Doctor--patient dialogue and requested tests or imaging results. & Retrieval of successful encounters and reflections from corrected failures, combined with multi-agent reasoning. \\
MedChain-Agent~\citep{liu2025medchain} & Information gathering and sequential referral, examination, diagnosis, and treatment. & Iterative agent feedback and MedCase-RAG with an expanding case database. \\
ClinicalAgent~\citep{yan2025clinicallab} & Department routing and staged consultation with laboratory and imaging reports. & Performance-informed clinician assignment and collaborative synthesis. \\
MedAgent-Pro~\citep{wang2026medagentpro} & Guideline-derived plans, patient-specific tool execution, and quantitative clinical indicators. & Hierarchical reasoning with stepwise evidence checks. \\
RadAgents~\citep{zhang2026radagents} & Radiologist-style inspection, measurements, and cross-tool verification. & Workflow guidance, local re-planning, and visual retrieval for conflict resolution. \\
\bottomrule
\end{tabularx}
\caption{Workflow and context-based adaptation in medical agents. These mechanisms can change actions within or across cases without an agent-policy gradient update. The classification concerns the described agent adaptation, not the training history of every component.}
\label{tab:medical_workflows}
\end{table}

\begin{table}[htbp]
\centering\small
\begin{tabularx}{\textwidth}{@{}p{.20\textwidth}XX@{}}
\toprule
System & Parameter-learning mechanism & Relation to the present study \\
\midrule
MMedAgent~\citep{li2024mmedagent} & Instruction tuning of tool calls and answers across medical tasks, including composed tool use. & Establishes medical tool-policy learning through supervised examples. \\
MedVR~\citep{jiang2026medvr} & RL with uncertainty-guided visual exploration and rollout-consensus supervision. & Learns image inspection; correctness gates its auxiliary tool reward. \\
Ophiuchus~\citep{jiang2026ophiuchus} & Tool-oriented SFT, reflection fine-tuning, and RL over interleaved visual interactions. & Establishes learning beyond demonstrations for medical visual tools. \\
MACRO~\citep{fan2026macro} & Supervised initialization and GRPO encouraging discovered composite-tool use. & Combines parameter learning with experience memory and tool-set expansion. \\
CASE & Demonstration initialization followed by learner-directed RL with an insight-focused, detached self-distillation penalty. & Studies acquisition across longitudinal patient sources and synthesis feedback combined with outcome and rubric rewards before group normalization. \\
\bottomrule
\end{tabularx}
\caption{Learned medical tool policies. The comparison identifies differences in supervision and task structure, not evidence of comparative performance on a shared benchmark.}
\label{tab:medical_policy_learning}
\end{table}

\paragraph{Interactive benchmark precedents.}
\citet{liu2025medchain} organize 12,163 MedChain cases into five sequential stages, including medical-image examination; MedChain is neither a static-QA benchmark nor a text-only predecessor. \citet{chiu2025vivabench} begin VivaBench with limited case information and let the candidate request history, examinations, and investigations before diagnosis. \citet{yan2025clinicallab} combine a real-case benchmark spanning 24 departments and 150 diseases with ClinicalAgent; its imaging stage uses textual radiology reports. \citet{zhu2026medagentboard} compare collaboration with single-model and conventional approaches across medical QA and EHR prediction, finding that collaboration does not consistently dominate. These results motivate evaluating each system component rather than treating additional agents as intrinsically beneficial.

\paragraph{Scope of the methodological distinction.}
Clinical evidence gathering, multimodal grounding, compact tool-using models, and medical agentic RL all have precedents. MedVR also conditions an auxiliary tool reward on answer correctness, so outcome-linked feedback alone is not a distinguishing claim. The present design evaluates the learner's post-retrieval synthesis under ordinary and training-only privileged contexts on the same realized evidence history. Its detached insight penalty enters the joint correctness and rubric reward before group normalization, rather than prescribing a retrieved case, a teacher tool sequence, or a consensus image region. This specifies a testable design difference, not a demonstrated advantage. Matched ablations are required to establish its value beyond existing RL and distillation mechanisms. Cross-system comparisons additionally require compatible observation interfaces and explicit memory-freezing rules; a shared backbone alone does not equalize evidence access.

\paragraph{Broader learning foundations.}
\citet{yao2023react} interleave reasoning with environment actions in ReAct, \citet{schick2023toolformer} study learned API use in Toolformer, \citet{li2023llavamed} adapt multimodal instruction following to biomedicine in LLaVA-Med, and \citet{gu2024minillm} study reverse-KL distillation into compact language models in MiniLLM. These foundations motivate the learning components without making their combination in an agent sufficient evidence of novelty.

\section{Tool Environment and Multimodal Interaction}
\label{app:environment}

\paragraph{Role configurations.}
During recording, Anatomist and Radiologist bind their respective three-tool registries, while Consultant binds all ten tools. The deployed mixed-role RL job initializes the ten-tool union registry for every rollout and relies on each role prompt to specify the intended workflow. Consultant directly requests the underlying evidence sources; it does not launch independently trained specialist agents or receive an ensemble vote. Skill retrieval is an auxiliary memory mechanism, rather than a fourth patient-imaging modality.

\begin{table}[htbp]
\centering\small
\begin{tabularx}{\textwidth}{@{}p{.18\textwidth}p{.35\textwidth}X>{\centering\arraybackslash}p{.09\textwidth}@{}}
\toprule
Configuration & Tools & Returned evidence & Call budget \\
\midrule
Anatomist & \artifact{vista3d_segment}; \artifact{vista3d_compare_stations}; \artifact{vista3d_compare_visits} & Organ/station measurements; longitudinal changes; segmentation overlays. & 4 \\
Radiologist & \artifact{list_available_images}; \artifact{view_image}; \artifact{compare_visits} & Acquisition availability and metadata; selected or paired MRI views. & 6 \\
Consultant & All six above; \artifact{list_available_reports}; \artifact{get_report}; \artifact{recall_skills}; \artifact{write_skill} & Combined patient evidence, generated report claims, and reusable textual guidance. & 12 \\
\bottomrule
\end{tabularx}
\caption{Executable registries. Anatomist and Radiologist each have three tools; Consultant has ten. Tool availability does not prescribe the policy's action sequence.}
\label{tab:appendix_registry}
\end{table}

\paragraph{Offline evidence preparation.}
Evidence preparation is completed before RL so that policy rollouts do not wait for segmentation or report-generation inference. First, Vista3D produces whole-body Dixon segmentation masks and MedGemma generates report objects from MRI renderings, as detailed in \appref{app:data}. Second, cache construction derives organ volumes from the masks using \suppref{eq:appendix_volume}, computes station-level quantities and absolute or relative changes across visits, and stores the resulting text with rendered overlays. The imaging cache stores acquisition metadata, MRI views, and paired-visit comparisons. Report text has its ICD-10 diagnostic codes removed before caching so it cannot leak the reference answer, and reports are indexed with their source montages for retrieval. This separates the GPU requirements of evidence-generating models from policy training and amortizes preparation over repeated patient interactions.

\paragraph{Mock-tool execution.}
The recording and evaluation specialists read the participant's MRI and segmentation sources and render each requested observation (volume, comparison, view, or frame) on demand. For instance, \emph{cardiac short-axis, frame 14} or \emph{liver, baseline versus follow-up} resolves against the stored sources into a rendered view, an organ volume, or a longitudinal comparison. During RL, mock tools resolve each policy-generated request against precomputed observations rather than rerunning source-model inference, keyed by the participant, tool, and requested arguments; missing entries return visible error or unavailable-data observations. Vista3D and imaging adapters read cached text and rendered images; the report adapter locates pre-generated JSON through a manifest and formats it on request. Loading, resizing, and assembling images remain online operations. These adapters serve observations for the policy's current requests; they do not supply the next teacher action. SFT subsequently consumes recorded conversations, whereas RL acquires observations through fresh policy-generated calls.

The action semantics are fixed by the tool schemas: the policy selects the tool, the organ (resolved to a segmentation label through the Vista3D organ-index map), the sequence and sub-modality, the visit, and the frame or slice to view or compare. In recording and evaluation, the specialists render each requested view, overlay, or comparison from the participant's source volumes on demand, so distinct frame requests yield distinct observations; the RL replay adapter serves the same observations from precomputed stores keyed by the request, for efficiency. Consultant's image and volume tools follow the same request semantics in the LangGraph runtime. Tool schemas, field formatting, and missing-entry behavior are part of the environment specification.

\paragraph{Skill memory (Consultant).}
Beyond the imaging and report tools, the Consultant alone has a \emph{skill-memory} tool pair that carries reusable textual guidance across cases. Offline, \artifact{build_skill_store} assembles a frozen store from seed skills together with skills harvested and clustered from recorded teacher trajectories, keeping one reusable skill per organ and indexing it by organ and keywords. At inference the Consultant calls \artifact{recall_skills} with the current organ and keywords, which deterministically returns the top-$k$ matching skills and injects their formatted guidance into the context before it commits to an answer, functionally a second, experience-informed reading of the same case. Formally, let the frozen store be $\mathcal{S}=\{s_m\}_{m=1}^{M}$, each skill tagged with an organ $o_m$, a keyword set $W_m$, and a failure-reflection flag $f_m\in\{0,1\}$. For a query organ $o$ and keyword set $W$ taken from the current case, \artifact{recall_skills} scores
\begin{equation}
\mathrm{score}(s_m)=2\,\ind[o_m{=}o]+0.5\,\ind[o_m{=}\mathrm{general}]+0.5\,|W\cap W_m|+0.3\,f_m,
\label{eq:skill_recall}
\end{equation}
and returns the top-$k$ ($k{=}3$) skills with $\mathrm{score}>0$, ordered by score with ties broken by store index. Their formatted guidance $g$ is prepended to the context, so the Consultant answers from $\pi_{\vtheta}(\cdot\mid x,e,g)$ rather than $\pi_{\vtheta}(\cdot\mid x,e)$ for the evidence $e$ gathered so far, and $\mathcal{S}$ is left unchanged. \artifact{write_skill} runs only after the first-pass answer, in a post-hoc review stage that elicits a short success summary when the answer was correct and a failure reflection when it was wrong (the latter tagged \artifact{failure_reflection} and rewarded in retrieval scoring, \cref{eq:skill_recall}); in the frozen setting it merely acknowledges the request. Skill memory is therefore an auxiliary retrieval store, not a fourth imaging modality and not a gradient-bearing component.

Skill memory is frozen during RL but run-scoped at evaluation. In the RL replay adapter (\artifact{LIVE_SKILL=0}), \artifact{recall_skills} deterministically queries a static store and \artifact{write_skill} acknowledges requests without persistence, keeping the environment fixed (\cref{sec:onpolicy_environment}). Held-out evaluation instead sets \artifact{LIVE_SKILL=1}: within one run \artifact{write_skill} appends to the in-memory store, so later questions can recall skills written earlier in the same run, but the store is never written back to disk (live within a run, frozen across runs), and its base contents exclude test participants.

\paragraph{Organ-name resolution and segmentation indexing.}
The vista3d tools accept free-text organ names and resolve them in two fixed steps: a hardcoded index-to-label map over 32 organs (e.g.\ 1=liver, 115=heart, 151=left\_ventricle) with its generated inverse, and an alias table that folds colloquial variants (hepatic, spine, kidneys, gall bladder) into canonical names after lowercasing and whitespace normalization. Lookup tries the alias table, then the canonical map; an unresolvable name returns a visible ``unknown organ'' observation rather than raising, so the policy can retry with another name. Because a segmentation is a per-voxel integer label array, the organ mask is the equality test against the resolved index: voxel count times per-voxel volume gives the organ volume in millilitres, per-station counts locate the dominant slice, and the same equality mask renders the overlay. The policy never sees these format details; the complete candidate organ list for a participant (up to 20, enumerated from that participant's segmentation) is supplied in the prompt at recording and evaluation. In RL the mock tool reuses the identical name-translation step and then reads a precomputed cache keyed by participant, tool, and canonical name, so recording, evaluation, and training agree on resolution behavior.

\paragraph{Observation serialization.}
In LangGraph recording, assistant tool requests produce text tool messages followed by an image-bearing user message encoded as data URIs. Saved conversations retain image references for subsequent loading. In RL, the patched tool loop constructs a tool message with one placeholder for each returned image, followed by its text. The processor receives images in the same order and produces pixel values and grid descriptors. The Qwen3-VL forward encodes these pixels and uses masked scatter to replace the corresponding placeholder embeddings; intermediate visual features also enter the decoder through its DeepStack pathway~\citep{bai2025qwen3vl}. Newly rendered observation positions are appended with zero action mask, while the cumulative image list is retained for later multimodal forwards. The next assistant turn therefore conditions on accumulated text and visual evidence. Only assistant-generated positions contribute to the policy-gradient action likelihood.

For a processed image grid $(t,h,w)$ with spatial merge factor $m$, the number of visual embedding positions is $N_{\mathrm{vis}}=thw/m^2$. A 512-pixel square image with 16-pixel patches and $m=2$ gives 256 positions. Actual counts depend on processor geometry; the placeholder token is a position marker, not a discrete encoding of image content. The evidence cache stores observations, not learned visual embeddings: vision-encoder and merger features are recomputed in model forwards from the retained pixel inputs. Both OPSD conditions receive the same images. During SFT and actor optimization, assistant-token losses propagate through the visual features to update the visual modules; OPSD scoring supplies the detached reward described in \cref{sec:opsd,sec:reward}.

The supplied Vista3D and MedPanel adapters load at most six cached images, stack them vertically, and resize the composite to one $512\times512$ PIL image. Their per-call output is thus zero or one image, potentially containing multiple views. The report adapter selects at most two montage/overview images and preserves aspect ratio while limiting the longest edge to 512. Its current return values are base64 strings rather than PIL objects; the complete installed schema and processor are needed to establish successful decoding. This distinction matters when comparing rollout and evaluation visual exposure.

\paragraph{Budgets and final synthesis.}
The Consultant LangGraph tool node enforces a 12-call budget on the first pass. The specialist prompts request at most four calls for Radiologist and one to three for Anatomist, but the supplied specialist nodes do not implement the same cumulative hard cap. The canonical RL wrapper allows ten assistant turns and a 6,144-token continuation; its local tool-loop patch additionally applies a per-turn parallel-call limit and checks continuation length. These are different resources.

During retrieval, assistant messages can contain brief intermediate reasoning. The final message is requested to contain five phases: selection/planning, evidence analysis, comparison or verification, insight synthesis, and answer. Consultant specifically uses multi-source review and doubt/verification as its middle phases. This structured final synthesis summarizes evidence acquired during interaction; it is not a fixed externally annotated sequence of retrieval actions.

\paragraph{AgentLoop execution.}
The asynchronous engine begins in \textsc{Pending}, where it renders the prompt and tool schemas. \textsc{Generating} obtains one model response, marks its tokens as actions, and parses function calls. A parsed call enters \textsc{ProcessingTools}; the engine executes at most the configured parallel-call allowance, serializes each returned observation, updates the accumulated images, and resumes \textsc{Generating}. Absence of a call, exhaustion of assistant turns, or the continuation limit enters \textsc{Terminated}. The generic \textsc{Interacting} state can append an external user response, but is not used by our evidence tools. The implementation adds JSON-parser fallback, Qwen3-VL multimodal-position handling, one placeholder per returned image, and empty-image handling for text-only microbatches. Scoring failure handling is detailed in \appref{app:implementation}.

\paragraph{Rollout loop and evaluation harness.}
RL interaction is driven entirely by the project's fork of the \textsc{verl} AgentLoop: the rendered system and user messages plus tool schemas are sent to vLLM~\citep{kwon2023vllm}; assistant text is parsed for XML-form tool calls; the six- or ten-tool mock registry is instantiated per participant and executed (several calls may run in one turn); returned text and images are serialized back as tool messages, re-encoded, and fed to the next generation step, with tool tokens masked out of the policy gradient and images re-passed through the vision encoder. The loop ends when the policy emits no further call or a cap is reached (6{,}144 continuation tokens, 10 assistant turns, 2{,}048 tokens per tool response); a batch then supplies eight trajectories per question to group-relative optimization. Evaluation reuses the same interface contract through a different implementation: an OpenAI-compatible vLLM server whose server-side parser extracts tool calls, wrapped in a two-node LangGraph loop over the real tool implementations (vista3d and medpanel read the segmentation and rendering directories; reports read the cache). The harnesses therefore differ in two confined ways (server-side versus in-loop call parsing and real versus mock tool execution) while sharing tool names, prompts, and caches, so recorded trajectories feed RL unchanged and evaluation inputs and outputs remain commensurate with training.

\paragraph{Token classes of a rollout.}
\Cref{fig:trajectory_tokens} summarizes which parts of a rollout carry gradient. Gray context tokens (system prompt and question) are fixed and never trained; blue model-generated tokens (tool choice, arguments, the reasoning phases, and the final synthesis) are the only spans RL optimizes; orange tool-returned text enters the context as evidence only; and pink image placeholders mark returned images, which the vision encoder re-embeds at each generation step.

\begin{figure}[htbp]
\centering
\includegraphics[width=\linewidth]{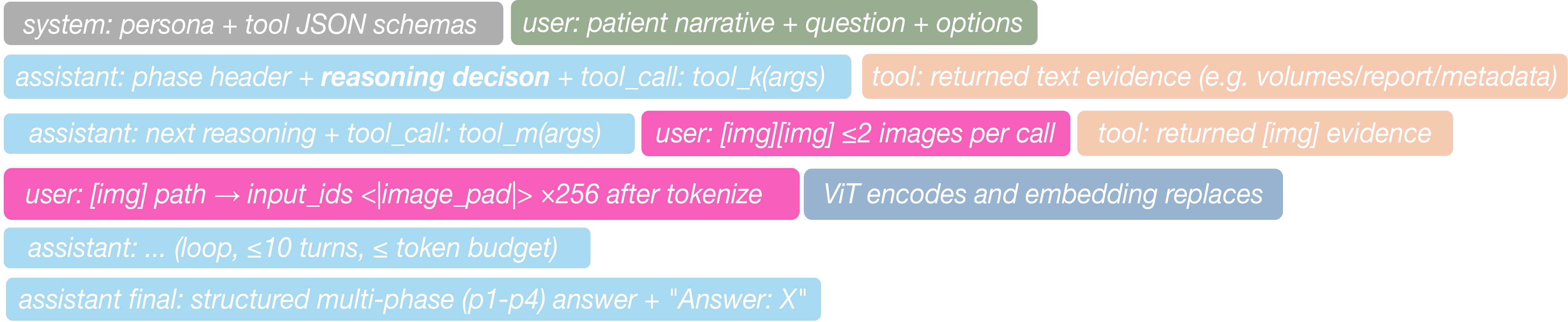}
\caption{\textbf{Token composition of a generic trajectory.} Gray, fixed context (system prompt and question); blue, model-generated tokens, the only spans RL trains (tool choice, arguments, reasoning, synthesis); orange, tool-returned text (evidence, input only); pink, image placeholders for returned images, re-passed through the vision encoder.}
\label{fig:trajectory_tokens}
\end{figure}

\subsection{Example trajectory}
\label{app:example_trajectory}

To make the interaction contract of \cref{sec:interaction} concrete, \cref{tab:example_trajectory} renders one complete Consultant rollout for participant 2306330 in the recorded trajectory format and in full: the system prompt, all five reasoning phases, all nine tool calls with their arguments and returned observations, and the per-condition verification are shown without elision. Tool rows read \texttt{name(arguments)} $\to$ observation, with the run's tool-call identifier and the participant's cached image paths in small type.

{\small
\setlength{\tabcolsep}{4pt}
\begin{longtable}{@{}p{0.14\textwidth}p{0.82\textwidth}@{}}
\caption{A complete Consultant Agent trajectory on a multi-select hepatic-diagnosis question for participant 2306330, rendered in the recorded trajectory format. The agent recalls a liver skill, cross-checks fat-sensitive and high-resolution imaging, longitudinal volumetry, and draft reports, verifies or rejects each candidate condition in P3, and commits to steatosis plus a benign lesion rather than malignancy.}
\label{tab:example_trajectory}\\
\toprule
\textbf{Step} & \textbf{Content} \\
\midrule
\endfirsthead
\multicolumn{2}{@{}l}{\emph{\cref{tab:example_trajectory} continued}}\\
\toprule
\textbf{Step} & \textbf{Content} \\
\midrule
\endhead
\midrule
\multicolumn{2}{r@{}}{\emph{continued on next page}}\\
\endfoot
\bottomrule
\endlastfoot
Question & \emph{Based on longitudinal medical imaging and clinical records, which hepatic conditions are supported by the available evidence? Select all that apply.} Options: (A)~hepatic steatosis (fatty liver disease); (B)~benign hepatic lesion; (C)~primary liver malignancy; (D)~liver cirrhosis; (E)~none of the above. \\
System & You are a clinical consultant agent, the final arbitration over heterogeneous clinical evidence rather than any single source. You command the full ten-tool Consultant registry of \cref{tab:appendix_registry}. To inspect imaging you may list available acquisitions (\texttt{list\_available\_images}), view a selected slice or montage (\texttt{view\_image}), and compare matched views across visits (\texttt{compare\_visits}); for quantitative structure you may segment an organ (\texttt{vista3d\_segment}) and compare volumes across stations (\texttt{vista3d\_compare\_stations}) or across visits (\texttt{vista3d\_compare\_visits}); for textual claims you may list the generated MedGemma reports (\texttt{list\_available\_reports}) and read one via \texttt{get\_report(organ, report\_type)} with \texttt{report\_type} in findings, impression, or followup for liver, pancreas, or wholebody, treating these draft reports as claims to verify because they may hallucinate; and for accumulated experience you may recall reusable skills (\texttt{recall\_skills}), recording new ones (\texttt{write\_skill}) only in the later review stage.\newline Required workflow: (1)~call \texttt{recall\_skills} for the implicated organ(s) first; (2)~review evidence from multiple sources (images, volumes, reports), using each for what it is best at; (3)~actively doubt---mark key claims [\textsc{verified}], [\textsc{suspicious}], or [\textsc{contradicted}] and verify every decision-relevant suspicious claim with a direct tool call; (4)~reflect---build the causal chain and explicitly consider what could mislead you here.\newline When done calling tools, the final message (no tool calls) must contain all five headers in order, under 2000 characters: Phase~1 Skill Recall \& Review Plan; Phase~2 Multi-source Review; Phase~3 Doubt \& Verification; Phase~4 Reflective Insight (the causal chain from verified evidence toward the answer, plus what could have misled and why it was rejected); Phase~5 Final Answer.\newline Rules: maximum 12 tool calls (hard-enforced), and once about 9 are used stop calling tools and report with the evidence at hand; prefer high-value calls (\texttt{recall\_skills} first, \texttt{vista3d\_compare\_visits} for change questions, \texttt{get\_report} for report evidence, \texttt{view\_image} only for targeted verification) and skip sources irrelevant to this question; never copy a report conclusion without verification when a quantitative tool can check it; the \texttt{Answer:} line in Phase~5 is mandatory and the answer is extracted only from it; do not call \texttt{write\_skill} at this stage (skill recording happens in a later review stage). \\
User (patient info) & \emph{Patient ID: 2306330.}  This female participant was born in 1955 and was 64 years of age at the time of the most recent assessment (Instance~3). Longitudinal follow-up data were available from two assessment visits: baseline assessment (Instance~2) was conducted on 2017-08-16 at age~61 years, and follow-up assessment (Instance~3) on 2019-12-16 at age~64 years, a follow-up interval of 2.3 years. Regarding housing tenure, the participant transitioned from own with a mortgage (Instance~0) to own outright (Instance~3), indicating a change in housing stability, and resided in a house or bungalow. Household composition at Instance~3 comprised 2 resident(s); the household reported ownership of three vehicle(s), a change from two at Instance~2. The reported average total household income before tax at Instance~3 was less than 18,000, compared to 52,000--100,000 at Instance~2, suggesting a potential decrease in household economic resources. Regarding occupational characteristics, the participant reported commuting distance of nan units from home to workplace (changed from 13 units at Instance~2) and employment duration of nan units in their primary occupation. This comprehensive sociodemographic profile provides essential context for interpreting clinical and imaging findings, as socioeconomic factors are well-established determinants of health outcomes and disease risk in population-based cohort studies. \\
P1 Plan & The case contains liver abnormalities including a reported focal lesion. I begin by recalling liver-specific skills, then select the sources best suited to each sub-question. The main clinical challenge is distinguishing an incidental or benign hepatic lesion from primary liver malignancy. I will not directly convert an imaging abnormality into a cancer diagnosis. Instead, I will characterize lesion morphology, evaluate longitudinal stability, quantify liver structural changes, and verify whether clinical evidence supports malignancy. \\
Tool call 1 & \texttt{recall\_skills(organ=liver; keywords=hepatic lesion characterization, hepatocellular carcinoma, hepatic steatosis, malignancy verification)} $\to$ Liver lesion assessment requires correlation between morphology, interval change, and clinical confirmation. Stable lesions without invasive characteristics are less suggestive of malignancy. Imaging suspicion alone should not be interpreted as confirmed cancer. {\scriptsize\texttt{call\_6b57153c6e84494a94af5fa8}} \\
Tool call 2 & \texttt{list\_available\_images()} $\to$ Available imaging (Visit~1/Visit~2 pairs): liver---IDEAL multiecho and IDEAL temporal; whole body---Dixon BH~17s~W (stations 4/8/12/20/24); pancreas---gradient-echo multiecho. {\scriptsize\texttt{call\_26553c9a231a453ea5b4c90}} \\
P2 Review & Because the question involves both diffuse liver disease and focal lesion characterization, I will first select liver-specific MRI sequences. Fat-sensitive sequences are required to evaluate steatotic change, while station-level liver slices (Dixon 8\_W) are required to assess lesion morphology and continuity. Longitudinal comparison is necessary to determine whether the lesion demonstrates progressive behavior. \\
Tool call 3 & \texttt{compare\_visits(modality=IDEAL\_multiecho; visit\_comparison=V1\_vs\_V2)} $\to$ Longitudinal IDEAL multiecho comparison (V1 vs V2 montages) shows diffuse fat-related signal increase and a persistent focal hepatic lesion without substantial interval enlargement or aggressive transformation. {\scriptsize\texttt{call\_3428f8072e794336b512cff6}} {\scriptsize\path{2306330/IDEAL__IDEAL_multiecho__montage_v1.png}, \path{2306330/IDEAL__IDEAL_multiecho__montage_v2.png}}\newline\begin{minipage}[t]{0.48\linewidth}\centering\includegraphics[width=\linewidth]{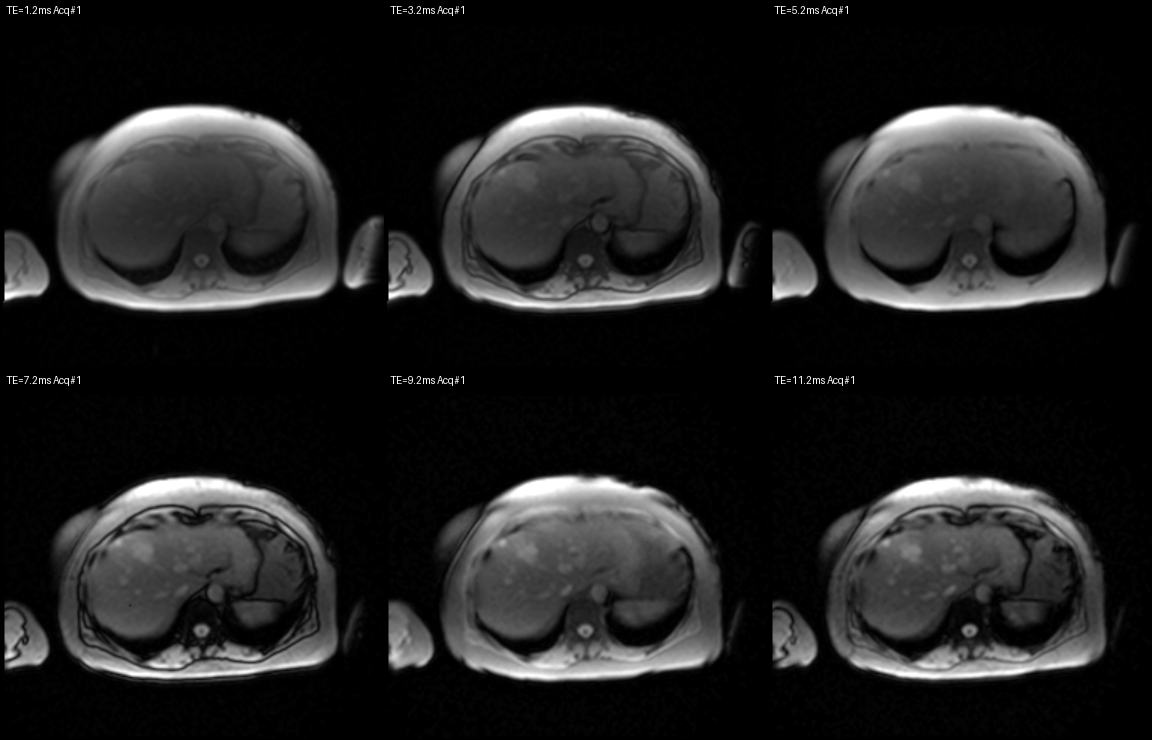}\\{\scriptsize\path{IDEAL__IDEAL_multiecho__montage_v1.png}}\end{minipage}\hspace{8pt}\begin{minipage}[t]{0.48\linewidth}\centering\includegraphics[width=\linewidth]{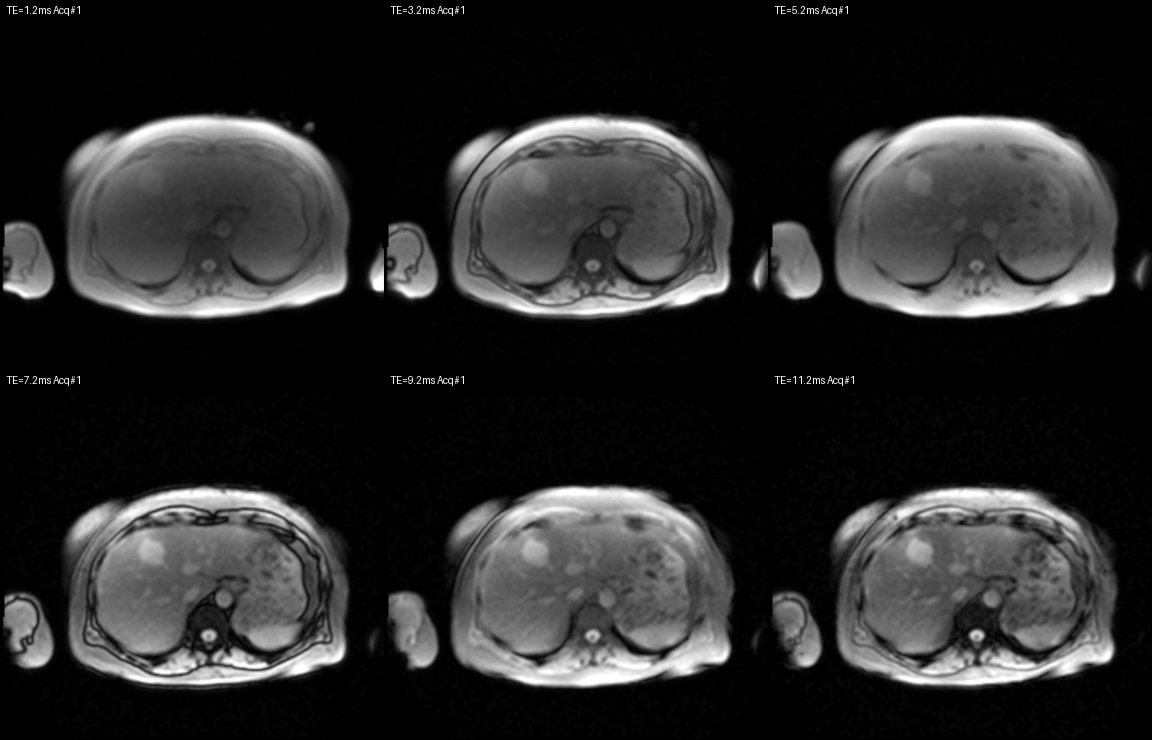}\\{\scriptsize\path{IDEAL__IDEAL_multiecho__montage_v2.png}}\end{minipage} \\
Tool call 4 & \texttt{view\_image(modality=dixon\_8\_W; visit=V1; slice\_idx=[10,22])} $\to$ Multi-slice review of Dixon 8\_W (V1) at slices~10 and~22 shows a well-circumscribed focal lesion at slice~22 with smooth contour; slice~10 shows no additional suspicious lesion, and there is no vascular invasion or infiltrative growth pattern. {\scriptsize\texttt{call\_c5ca7b264e914be793daf890}} {\scriptsize\path{2306330/dixon_8_W_v1_slice10.png}, \path{2306330/dixon_8_W_v1_slice22.png}}\newline\begin{minipage}[t]{0.30\linewidth}\centering\includegraphics[width=\linewidth]{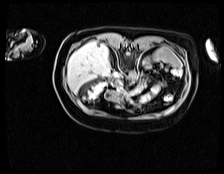}\\{\scriptsize\path{dixon_8_W_v1_slice10.png}}\end{minipage}\hspace{8pt}\begin{minipage}[t]{0.30\linewidth}\centering\includegraphics[width=\linewidth]{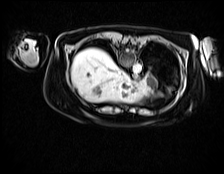}\\{\scriptsize\path{dixon_8_W_v1_slice22.png}}\end{minipage} \\
Tool call 5 & \texttt{compare\_visits(modality=dixon\_8\_W; slice\_idx=22; visit\_comparison=V1\_vs\_V2)} $\to$ Side-by-side Dixon 8\_W slice~22 (V1 vs V2) shows the focal lesion stable in size and morphology, with increased diffuse hepatic signal alteration consistent with steatosis progression rather than malignant growth. {\scriptsize\texttt{call\_8fa2d91c92b44aa6b7f0e31d}} {\scriptsize\path{2306330/dixon_8_W_compare_slice22.png}}\newline\includegraphics[width=0.55\linewidth]{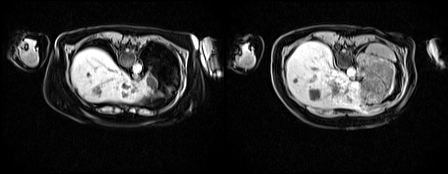}\newline{\scriptsize\path{dixon_8_W_compare_slice22.png}} \\
Tool call 6 & \texttt{vista3d\_compare\_visits(organ=liver; slice\_idx=22; visit\_comparison=V1\_vs\_V2))} $\to$ Longitudinal liver volumetry on Dixon 8\_W demonstrates an increase from $1150.8$\,ml (V1) to $1263.4$\,ml (V2), $+112.6$\,ml ($+9.8\%$); the V1/V2 segmentation overlays confirm hepatomegaly without independently indicating malignant progression. {\scriptsize\texttt{call\_95e653587c3e47938842c17a}} {\scriptsize\path{2306330/v1_liver_dixon_8_W_overlay.png}, \path{2306330/v2_liver_dixon_8_W_overlay.png}}\newline\begin{minipage}[t]{0.30\linewidth}\centering\includegraphics[width=\linewidth]{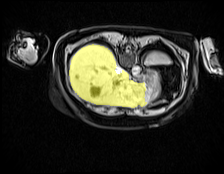}\\{\scriptsize\path{v1_liver_dixon_8_W_overlay.png}}\end{minipage}\hspace{8pt}\begin{minipage}[t]{0.30\linewidth}\centering\includegraphics[width=\linewidth]{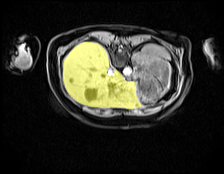}\\{\scriptsize\path{v2_liver_dixon_8_W_overlay.png}}\end{minipage} \\
Tool call 7 & \texttt{get\_report(organ=liver; report\_type=impression)} $\to$ [MedGemma impression $\mid$ liver $\mid$ IDEAL] Liver MRI impression reports diffuse hepatic steatotic change, hepatomegaly, and a focal low-density hepatic lesion requiring further characterization. The report does not provide definitive confirmation of primary liver malignancy. {\scriptsize\texttt{call\_d55d1c92060147e6b1cff24f}} \\
Tool call 8 & \texttt{get\_report(organ=wholebody; report\_type=findings)} $\to$ [MedGemma findings $\mid$ whole\_body $\mid$ Dixon] Clinical record review identifies no confirmed liver cancer diagnosis, oncological treatment history, or documented metastatic disease; the whole-body Dixon station-2 montage shows no additional suspicious lesion or metastatic distribution. {\scriptsize\texttt{call\_e4ce774d91074eeab48cc229}} {\scriptsize\path{2306330/Dixon__Dixon_BH_17s_W_station2__montage_7.png}}\newline\includegraphics[width=0.36\linewidth]{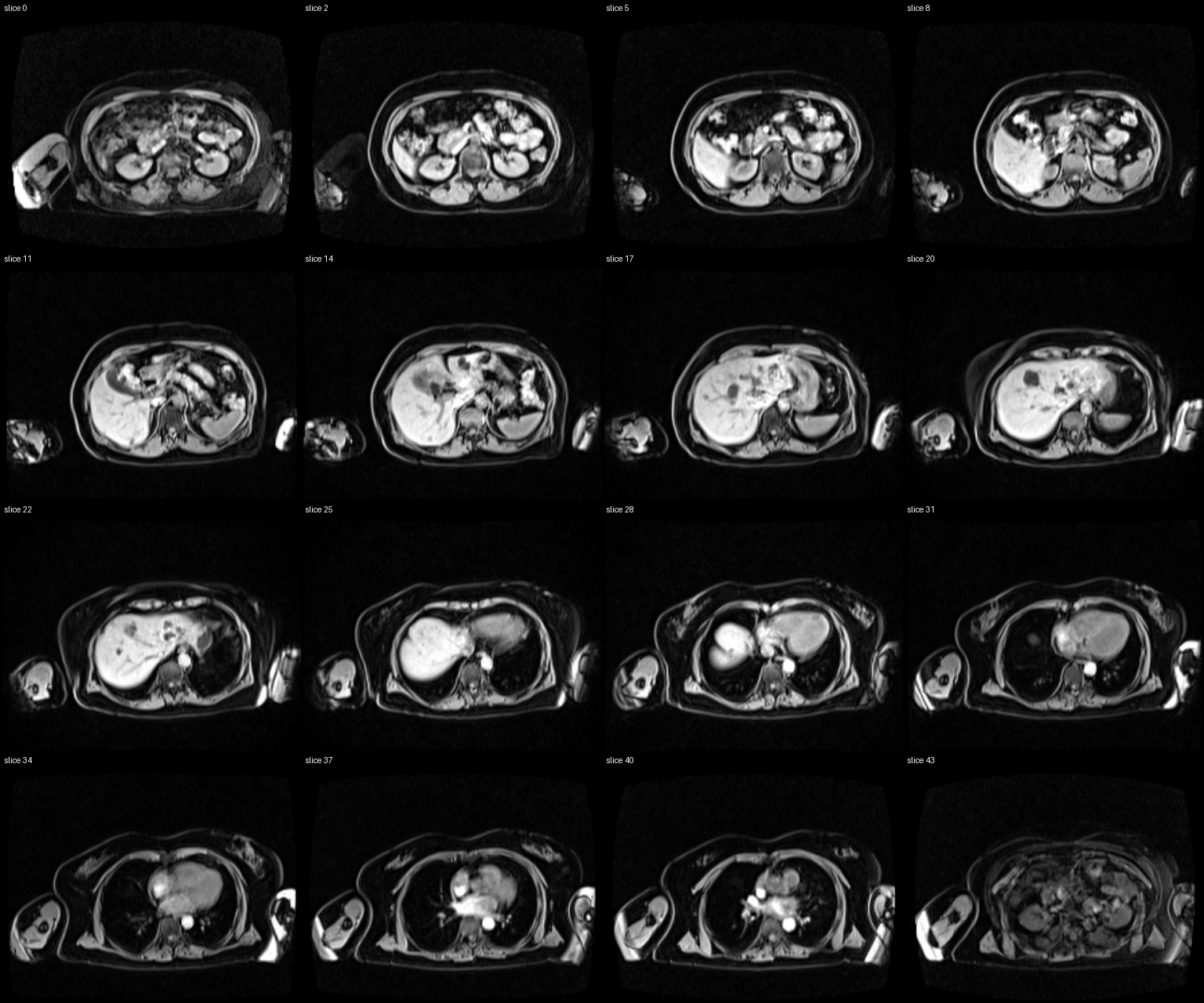}\newline{\scriptsize\path{Dixon__Dixon_BH_17s_W_station2__montage_7.png}} \\
P3 Verify & Hepatic steatosis: \textsc{verified}---diffuse hepatic signal abnormality on fat-sensitive MRI together with liver impression findings supports steatotic change.
\newline Benign hepatic lesion: \textsc{supported}---stable appearance across visits, smooth morphology, absence of invasive imaging characteristics.
\newline Primary liver malignancy: \textsc{not confirmed}---requires further characterization, but no documented cancer diagnosis, pathological confirmation, vascular invasion, or aggressive interval progression.
\newline Liver cirrhosis: \textsc{contradicted}---imaging does not demonstrate cirrhotic morphology or supporting clinical evidence.
\newline None of the above: \textsc{rejected}---multiple hepatic abnormalities are supported by independent evidence sources. \\
P4 Insight & The critical decision is not whether a liver lesion exists, but whether it represents malignant disease. Multi-source evidence shows two distinct hepatic findings: diffuse hepatic steatosis and a persistent focal lesion. The lesion remains morphologically stable across longitudinal imaging, without vascular invasion, infiltrative growth, or confirmed oncological history. Therefore, the evidence supports a benign hepatic lesion requiring surveillance rather than primary liver malignancy. My final diagnosis rests on evidence consistency across imaging, quantitative assessment, and clinical records rather than a single suspicious finding. \\
P5 Answer & \emph{Reasoning:} Based on the evidence verification and clinical insight in Phase~4, I conclude that the available evidence supports hepatic steatosis and a benign hepatic lesion. The focal hepatic abnormality lacks sufficient evidence for primary liver malignancy; therefore, cancer is not selected.\newline \emph{Answer:} A. Hepatic steatosis (fatty liver disease); B. Benign hepatic lesion. \\
Visual context & Each vision call's returned rendering is displayed directly beneath its row with its full filename; every image enters the context as placeholder-aligned visual tokens. \\
Reflection (review) & \emph{Summary:} the decision turned on separating a stable, well-circumscribed focal lesion without invasive features from diffuse steatosis; quantitative volumetry ($+9.8\%$) supported hepatomegaly but not malignant progression, and the whole-body record showed no confirmed cancer diagnosis. \emph{Reflection:} the draft liver impression's ``focal lesion requiring further characterization'' was the main overcall risk; treating hedged report language as suspicion rather than diagnosis, and confirming lesion stability across visits and slices, avoided upgrading to malignancy. This lesson is recorded as a reusable skill in the post-hoc review stage. \\
Tool call 9 (review) & \texttt{write\_skill}(\texttt{patient\_type}=hepatic lesion with steatosis; \texttt{guidance}=verify lesion stability across visits and invasive features before assigning malignancy, and treat hedged report language as suspicion not diagnosis; \texttt{pitfalls}=do not equate ``requiring further characterization'' with confirmed cancer, and do not let steatosis-related heterogeneity mimic malignancy; \texttt{organ}=liver; \texttt{keywords}=hepatic lesion, steatosis, malignancy verification) $\to$ Skill recorded for patient type ``hepatic lesion with steatosis''; it will be available via \texttt{recall\_skills} for similar future patients. {\scriptsize\texttt{call\_7a9c2e4f6b8d0a1c3e5f7b9d}} \\
\end{longtable}
}

The reference answer for this question is $\{A,B\}$ (hepatic steatosis and benign hepatic lesion); the P5 output above matches it. This reference is supplied for the reader and is never visible to the policy during the rollout.

\section{Training Configuration and Data Materialization}
\label{app:training}

\paragraph{Recorded demonstrations.}
Qwen3.8-Max~\citep{qwen2026qwen38max} is invoked through the role-specific LangGraph recorder with the question, patient narrative, options, tool schema, and participant identifier, but without the reference answer. Each assistant tool request is executed before the next model call; textual results and base64-encoded images enter the running context, while saved JSONL records replace image payloads with resolvable paths. The raw record preserves the system/user messages, every assistant tool call, tool responses, image references, and the terminal synthesis. A dense companion record stores parsed phases, predicted and reference answers, correctness, tool-image metadata, and errors. The pre-selection inventory contains 46,632 Anatomist, 50,401 Radiologist, and 50,401 Consultant episodes, for 147,434 recordings in total. Training membership is determined by participant exclusion, trajectory availability, and the screening rule in \cref{eq:rsft}; raw recording counts are not retained-training counts. For Consultant, any ground-truth verdict, post-answer review reflection, and skill-writing calls are appended only after the first-pass answer and are not part of the answer-blind acquisition path.

\paragraph{Trajectory split, test benchmark, and compute.}
The three agents answer the \emph{same} diagnosis-derived questions and differ only in trajectory and tool registry, so the inventory above holds one episode per question for the Radiologist and Consultant and fewer for the Anatomist. A participant-disjoint manifest bucket, keyed by an MD5 hash of the participant identifier, reserves the held-out test split ($\rho{=}0.1$) and assigns the remaining episodes to RSFT and agentic RL, a $40/50/10$ split in which $40\%$ of episodes train RSFT, $50\%$ form the agentic-RL pool from which $4$--$6$ questions per participant are sampled as RL prompts, and the final $10\%$ is held out for testing. The test set is thus $4{,}850$ questions from $455$ participants; answering it under all three agents yields $14{,}135$ agent--question evaluations, with the Radiologist and Consultant each covering all $4{,}850$ questions ($455$ participants) and the Anatomist $4{,}435$ questions ($418$ participants). The Anatomist shortfall arises because the remaining participants lack whole-body MRI at one or both visits, so Vista3D cannot segment their Dixon sequence; this is the coverage asymmetry shown in panels~1 and~5 of \cref{fig:benchmark_overview}. RSFT and agentic RL together consume roughly $2{,}000$ H100-80GB GPU-hours on a single $8\times$H100 node.

\paragraph{Offline evidence-linkage filtering.}
The rejection-sampling recipe scans existing records once. It requires at least one assistant tool call and a parseable Phase~4 insight in the final first-pass response. Kimi K2.6~\citep{moonshot2026kimik26} scores the written chain from evidence through insight to the Phase-5 answer on the fixed four-criterion rubric of \cref{tab:appendix_rubric}, which weights evidence identification (0.3), logical reasoning (0.3), answer support (0.2), and completeness (0.2), and retains trajectories whose aggregate score is at least 0.4. It produces a retained participant/question manifest; it does not regenerate rejected trajectories or impose an independent answer-correctness condition. These requirements select complete trajectories and do not define an SFT token mask. The supplied training code consumes such manifests, but the complete filter script, accepted manifests, and raw responses are not included in this snapshot. Candidate, retained, failure-fallback, and per-role counts therefore remain required run artifacts rather than quantities inferred from cache sizes.

\paragraph{Checkpoint handoff.}
The provided chain initializes a mixed-role SFT continuation from checkpoint 1515, fits one additional epoch using the extension manifests, and launches agentic RL from the final continuation model. If the final save is unavailable, the wrapper can select the latest intermediate checkpoint; this fallback must be recorded as a different initialization. All three configurations are mixed within training. The supplied chain does not train three separate policies in sequence.

\begin{table}[htbp]
\centering\small
\begin{tabularx}{\textwidth}{@{}p{.30\textwidth}XX@{}}
\toprule
Setting & RSFT  & Agentic RL defaults \\
\midrule
Training data and epochs & Mixed three-role; one epoch & Mixed three-role; one epoch \\
Learning rate & $3\times10^{-6}$ & $2\times10^{-6}$ \\
Batch & $8$ devices $\times2$ samples $\times2$ accumulation & $16$ prompts $\times8$ rollouts \\
Sequence budget & 8,192 tokens & 4,096 prompt; 6,144 continuation; 12,288 total \\
Assistant-turn budget & Recorded conversation & 10 \\
Image handling & Resize parameter 512 & Backend-specific cached images \\
Actor updates & ViT, visual merger, language decoder & ViT, visual merger, language decoder; FSDP \\
RL schedule / clipping & --- & Cosine; 5\% warmup; grad norm 1.0; clip radius $\eta{=}0.2$ (verl default); weight decay 0 \\
Rollout sampling & --- & Temperature 0.7; top-$p$ 0.9 \\
OPSD & --- & Coefficient 2.0; top-128 + residual bucket; chunk 4 \\
Other reward weights & --- & Format 0.01; judge 0.6; linkage $+0.5/-0.3$ \\
Group-relative linkage & --- & Top 2 of 8; score range at least 0.02 \\
Frozen-reference penalty & --- & Coefficient 0.05; \artifact{low_var_kl}; entropy coefficient 0 \\
vLLM engine / memory & --- & \artifact{max_num_seqs} 96; GPU utilization 0.55; 11,264 dynamic tokens per GPU; tool responses $\le$2,048; parameter + optimizer offload; save every 25 steps \\
\bottomrule
\end{tabularx}
\caption{RSFT and Agentic RL launcher defaults. Run-specific overrides and initialization checkpoints are recorded separately.}
\label{tab:appendix_training}
\end{table}

\begin{figure}[htbp]
\centering
\includegraphics[width=\linewidth]{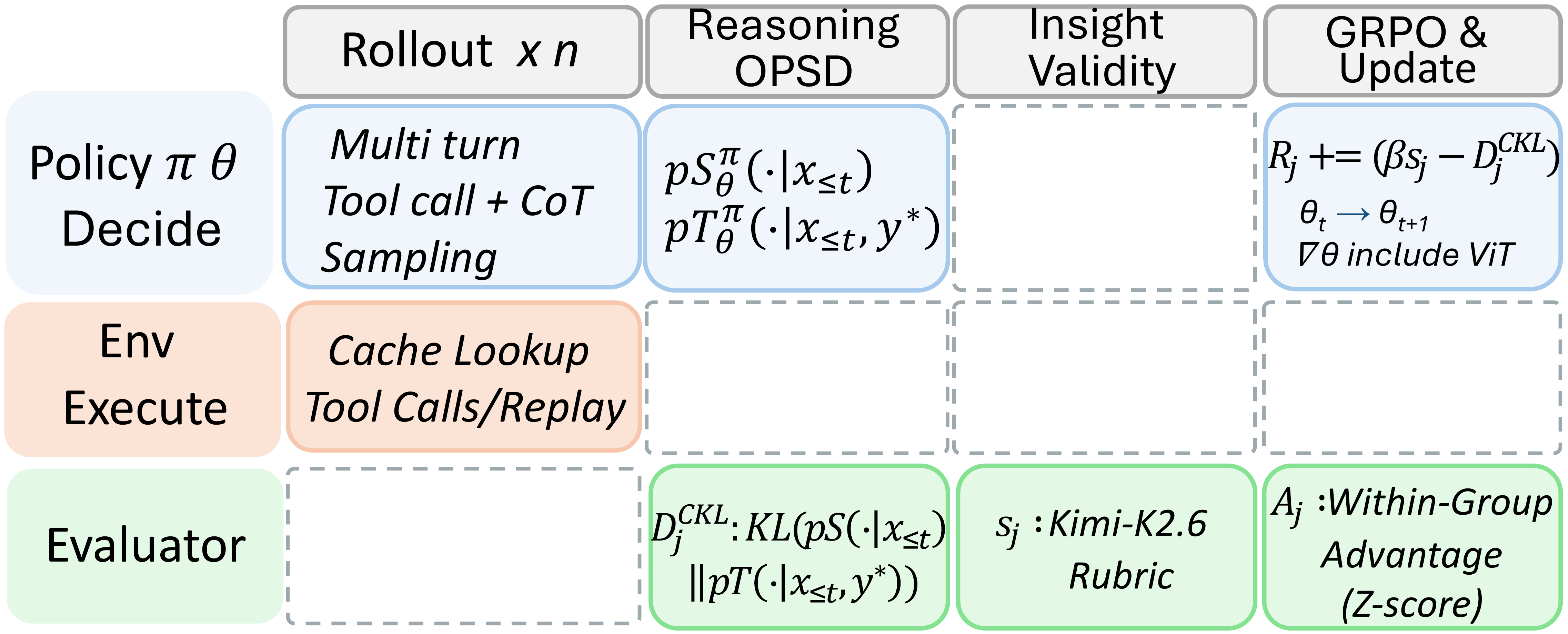}
\caption{\textbf{Weight lifecycle within one RL training step.} The weights $\vtheta_t$ produced by the previous step are the step's only active policy. vLLM samples rollouts from $\vtheta_t$; the two OPSD scoring forwards (ordinary and answer-privileged) run on the same $\vtheta_t$; the optimizer computes $\vtheta_{t+1}$ from the collected batch and syncs it back to vLLM for the next step. The retired weights $\vtheta_{t-1}$ leave their stored sampling log-probabilities (\texttt{old\_log\_probs}) as the denominator of the clipped ratio, while the reference policy $\pi_{\mathrm{ref}}=\vtheta_0$ stays frozen and serves only the $\beta\,\mathcal K_{\mathrm{ref}}$ penalty. Hyperparameters in \cref{tab:appendix_training}.}
\label{fig:rl_step}
\end{figure}

\paragraph{Weight lifecycle within one RL step.}
\Cref{fig:rl_step} traces the three weight copies through one step. The incoming weights $\vtheta_t$ are the step's only active policy, so rollouts sample from them, the paired OPSD scoring forwards read them, and the optimizer produces $\vtheta_{t+1}$ from the collected batch and syncs it back to vLLM before the next step begins. The retired weights $\vtheta_{t-1}$ survive only as stored sampling log-probabilities (\artifact{old_log_probs}), which form the denominator of the clipped importance ratio. The reference policy $\pi_{\mathrm{ref}}=\vtheta_0$ simply denotes the RSFT checkpoint at which RL starts. That checkpoint is never updated and never samples; it is kept in memory for one purpose, to score the $\beta\,\mathcal K_{\mathrm{ref}}$ penalty that keeps the moving policy from drifting too far from its initialization (\cref{tab:appendix_training}).

SFT materializes recorded image references and uses assistant spans as prediction targets. Tool observations and system/user text provide context; padding is masked. Neither stage uses parameter-efficient adapters or freezes the ViT in the full-model setting. The SFT loader constructs the complete vision--language model, and the RL optimizer receives all actor parameters; the framework's \artifact{freeze_vision_tower} flag, which would otherwise freeze the visual encoder (the ViT ``vision tower''), stays at its default \texttt{False}. Observation masking removes local prediction losses but preserves the gradient path from later assistant tokens to visual features. All variants in \cref{tab:learning_ablation} retain full-parameter optimization.

The current mixed serializer chooses either the three-tool Anatomist or the three-tool Radiologist schema from observed calls; it has no explicit ten-tool Consultant branch. Loading three trajectory sources therefore does not establish identical schema conditioning across SFT and RL. The manifest interface admits explicit participant/question pairs; an acceptance manifest and a cached training sequence are distinct artifacts. Cache counts must not be reinterpreted as quality-screen retention counts.

\paragraph{SFT and RL materialization details.}
After the participant-disjoint split, the RSFT stage retains 16{,}916 / 17{,}921 / 17{,}921 questions for the Anatomist, Radiologist, and Consultant roles. SFT renders each retained trajectory through the model chat template with images at their message positions and supervises assistant spans only (the tool-call segment and the terminal five-phase answer), masking system, user, and tool-returned tokens; span positions are recovered by incremental re-rendering and length differencing. Images are resized to $512\times512$ and stored by path, then pretokenized by 64 parallel workers into per-sample tensors (inputs, labels, mask, image); this pass yields 48{,}464 samples trained for one epoch at learning rate $
3\times10^{-6}$, producing the checkpoint that initializes RL. RL materializes one parquet row per training question (30{,}604 rows) carrying the data source, the rendered system and user messages, the reference answer, the agent name, and an extra-info bundle with the answer and options for scoring, the question text, the recorded teacher insight (used only as privilege, never shown to the policy), and a tool-parameter pack supplying each tool instance's participant identifier. Rollout prompts contain no images: every view and measurement is acquired on-policy through tool calls. Mock tools then replay, deterministically and without GPU work, the cached result the real tool would have returned for that participant, tool, and argument set.

\paragraph{Answer supervision and leakage controls.}
Neither stage trains on ground-truth answers. RSFT distills recorded teacher trajectories whose terminal answers are the teacher's own predictions (the reference is withheld during recording), so they act as \textbf{pseudo-labels} for action format and synthesis, not as ground truth. Agentic RL sees \textbf{no ground-truth answer in context}; the reference enters only the outcome reward after the first-pass answer is locked, so the policy learns from \textbf{reward signals alone}. OPSD's hint conditions only a detached scoring forward of a frozen snapshot through which gradients do not flow and which the updated policy never sees, so privileged scoring \textbf{cannot leak the answer into acquisition behavior}.

\paragraph{Why the Instruct checkpoint.}
We initialize from Qwen3-VL-8B-Instruct rather than a reasoning/thinking variant or a larger successor for three reasons. Training-start comparability: the Instruct variant avoids inheriting a checkpoint specialized for explicit chain-of-thought output. The Qwen3-VL technical report describes reinforcement-learning post-training for the model family, so this choice does not isolate CASE's effects from all vendor-side RL~\citep{bai2025qwen3vl}. Efficiency: at 8B parameters the full model (vision encoder, merger, decoder) plus optimizer state, rollouts, and the paired OPSD scoring forwards fit the single $8\times80$GB node, whereas a larger backbone would force sharding or a separate scoring cluster. Stack stability: Qwen3.5 uses a gated-delta-network architecture~\citep{qwen2026qwen35}; in our distributed rollout stack, its Triton/FlashInfer kernels compiled just in time at launch and intermittently deadlocked, whereas the Qwen3-VL kernel path compiled ahead of time and trained reliably.
\begin{figure}[htbp]
\centering
\includegraphics[width=0.92\linewidth]{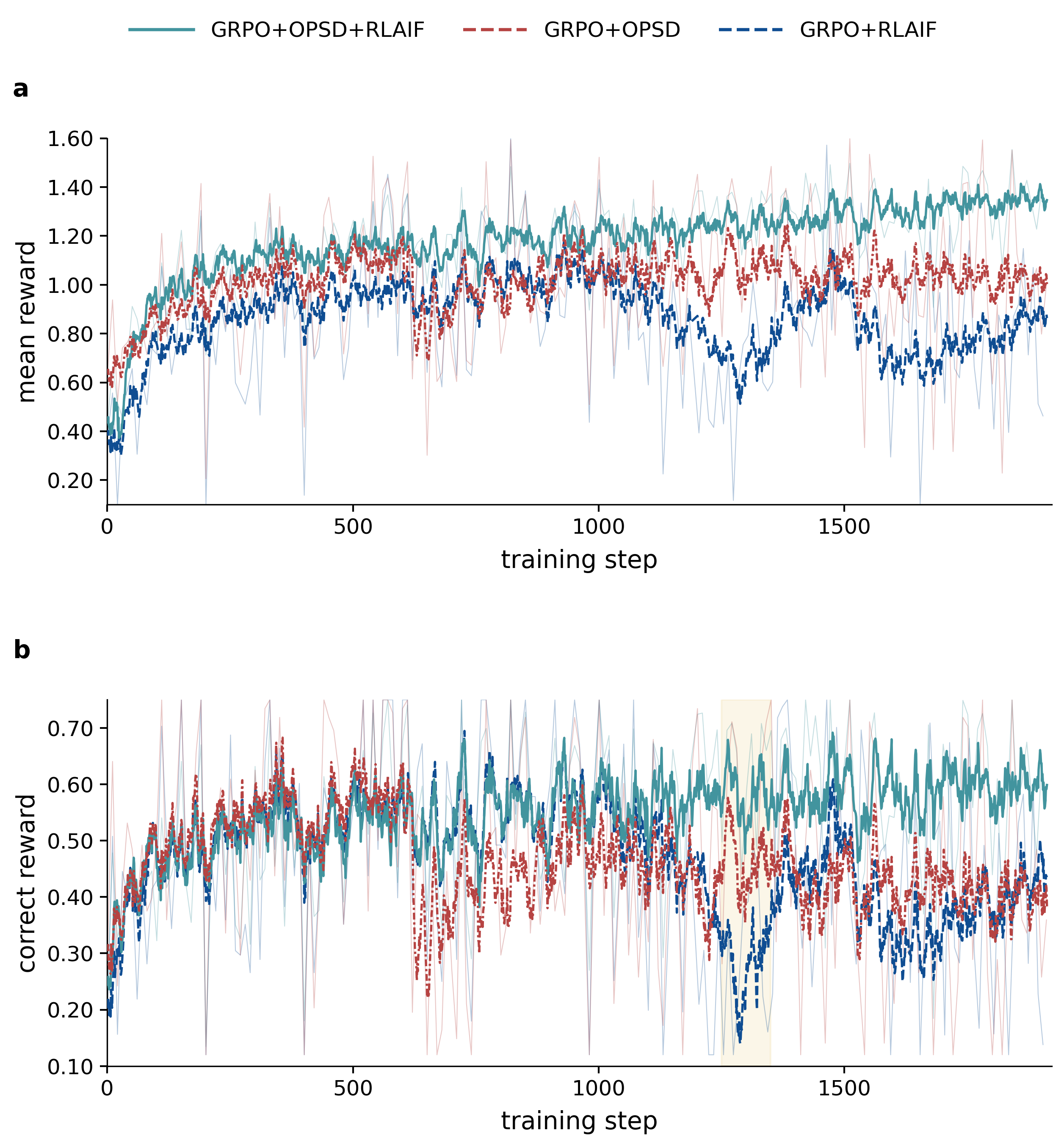}
\caption{Full-epoch training dynamics (steps 1--1{,}912) for the three RL variants with the line coding of \cref{fig:ablation_zoom}: (a)~mean return; (b)~correctness reward. The highlighted band marks the GRPO+RLAIF collapse; CASE-8B (full) is the steadiest and converges within one epoch, while GRPO+OPSD avoids collapse but, without the insight term, is prone to reward hacking and ends at lower accuracy.}
\label{fig:ablation_curves}
\end{figure}

\paragraph{Training dynamics and reward collapse.}
\Cref{fig:ablation_curves} extends the main-text zoom to the full epoch (steps 1--1{,}912) for the three RL variants, showing (a)~the mean return and (b)~the correctness reward. CASE-8B (full, teal) rises steadily with the smallest step-to-step variance and converges within the single epoch. GRPO+OPSD (red) does not collapse (the privileged brake suppresses the repetitive mode) but, lacking the rubric insight term, it is exposed to reward hacking: surface plausibility can be inflated without strengthening the evidence-to-answer logic, which is reflected in its lower endpoint accuracy. GRPO+RLAIF (blue) tracks the others early, then collapses in the highlighted band exactly where repetitive restatement overtakes commitment. The severity of this late degeneration also reflects the benchmark's difficulty: each episode is a long horizon in which the policy itself chooses which studies, modality, sequence, and frame to inspect, so an unresolved evidence path leaves no reliable basis on which to conclude, and restatement is the cheapest continuation.

\paragraph{A worked repetition case.}
To further verify that OPSD acts as a brake on repetition rather than as a length penalty, and to explain why the rubric-only variant collapses late in training, we inspect one repetition loop on participant 1588571, whose synthesis has degenerated into \emph{``\ldots these findings are not diagnostic of diabetes, these findings are not diagnostic of diabetes, \ldots''}. Conditioned on this same student prefix $\hat y_{<t}$, the ordinary distribution $p_S(\cdot\mid x,\hat y_{<t})$ and the answer-privileged distribution $p_T(\cdot\mid x,y^\star,\hat y_{<t})$ point in opposite directions:
\begin{equation}
p_S=\begin{cases}\texttt{these}&0.31\\\texttt{findings}&0.22\\\texttt{not}&0.16\\\texttt{diagnostic}&0.12\\\texttt{however}&0.05\\\texttt{therefore}&0.03\end{cases}
\qquad
p_T=\begin{cases}\texttt{therefore}&0.27\\\texttt{however}&0.18\\\texttt{given}&0.16\\\texttt{no}&0.13\\\texttt{select}&0.09\\\texttt{these}&0.02\end{cases}
\label{eq:repetition_case}
\end{equation}
The decisive property is not any individual token but the orientation of the two distributions. $p_S$ places high mass on previously repeated tokens, whereas $p_T$ places high mass on tokens that advance the clinical decision, so aligning the ordinary view with the privileged one transfers mass from loop-continuation toward commitment. This is precisely the brake the rubric-only objective lacks: a text-only rubric scores the written insight and cannot observe that the next tokens merely restate the prefix, and because a compact policy learns the surface form of an explanation before it reliably resolves the answer, under residual uncertainty the cheapest continuation is restatement; once correctness stalls, restatement compounds into the late-epoch collapse of \cref{fig:ablation_curves}. \Cref{tab:repetition_case} contrasts the two ablation variants on the same participant: without OPSD the synthesis restates verbatim and never commits, whereas the full model weighs the supporting context against the draft reports and derives the answer from its insight.

\begin{table}[htbp]
\centering\footnotesize
\setlength{\tabcolsep}{4pt}
\renewcommand{\arraystretch}{1.05}
\begin{tabular}{p{0.20\linewidth}p{0.75\linewidth}}
\toprule
Variant & Phase-4 synthesis for participant 1588571 \\
\midrule
GRPO + RLAIF (no OPSD) & The pancreatic imaging findings offer only indirect evidence relevant to diabetes, including fat deposition, atrophy, or secondary complications. \textbf{However, these findings are not diagnostic of diabetes, these findings are not diagnostic of diabetes, these findings are not diagnostic of diabetes\ldots} \\
\addlinespace[3pt]
CASE-8B (full) & The current MRI findings provide no direct basis for diagnosing diabetes, although pancreatic fat, atrophy, or associated complications may offer supporting context. In this case, the reports describe preserved anatomy and improving pancreatic fat content, without diabetes-related complications or a documented diagnosis in the clinical history. Therefore, the available evidence does not support selecting the diabetes option. \\
\bottomrule
\end{tabular}
\caption{Repetition behavior on participant 1588571 with and without OPSD. Without the privileged brake the synthesis degenerates into verbatim restatement and never commits; the full model states the supporting context, weighs the draft reports, and derives the answer strictly from its insight.}
\label{tab:repetition_case}
\end{table}

\section{Positioning within On-Policy Distillation and Agentic RL}
\label{app:distillation_landscape}

On-policy sampling, the source of teacher information, and the way feedback enters optimization are separate design choices. Student-generated rollouts do not themselves imply self-distillation: GKD, RLAD, TGPO, and OPDVR use external teachers in their studied formulations. For privileged self-distillation, shared weights do not make all update rules equivalent. \Suppref{tab:distillation_landscape} separates direct student supervision, local advantage modulation, and trajectory-level reward construction.

\begin{table}[htbp]
\centering\footnotesize
\setlength{\tabcolsep}{4pt}
\renewcommand{\arraystretch}{1.12}
\begin{tabularx}{\textwidth}{@{}p{.13\textwidth}p{.25\textwidth}Y@{}}
\toprule
Method & Teacher / privileged information & How feedback enters learning \\
\midrule
GKD~\citep{agarwal2024gkd} & External teacher on student-generated sequences & Distribution matching on learner-visited prefixes; compatible with joint RL fine-tuning. \\
RLAD~\citep{zhang2026rlad} & External teacher and old-student mixture & Mixture-anchored policy ratios make imitation selective according to rollout advantage. \\
TGPO~\citep{liu2026tgpo} & External teacher at student prefixes & Cross entropy on teacher-preferred next tokens supplements trajectory-level verifiable-reward optimization. \\
OPDVR~\citep{lin2026opdvr} & External teacher and outcome verifier & Outcome correctness gates the direction of sampled-token distillation rewards within a policy-gradient objective. \\
TRACE~\citep{wang2026trace} & Synchronized self-teacher with coarse diagnostic labels & Differentiable student distillation is routed to annotated reasoning spans; other positions retain GRPO. The default emphasizes forward KL on key spans of correct rollouts. \\
SSOPD~\citep{tan2026ssopd} & Self-generated correct completion & A forward-KL auxiliary loss supervises early prefixes of an unsuccessful completion; GRPO remains the outcome objective. \\
OCSD~\citep{yang2026ocsd} & Future observations in matched full/ablated replay & Observation-calibrated sampled-token scores modulate advantages at selected steps while preserving the trajectory-advantage sign. \\
SSPO~\citep{wu2026sspo} & Evidence anchors and incorrect-answer feedback & Step log-likelihood disagreement gives positive advantage multipliers on unsuccessful trajectories; correct trajectories retain GRPO. \\
Our formulation & Current policy with training-only answer / optional reference insight & Mean insight-region reverse KL is detached and subtracted from the joint correctness--rubric reward before group normalization; all assistant actions share the resulting trajectory advantage. \\
\bottomrule
\end{tabularx}
\caption{Mechanism comparison, not an empirical ranking. The formulations correspond to the versions cited. Direct distillation differentiates through student probabilities; privileged replay scores can instead be held fixed while optimizing the rollout policy.}
\label{tab:distillation_landscape}
\end{table}

\paragraph{Scope of selective supervision.}
TRACE selects annotated critical or error spans and supplies coarse diagnostic labels to its teacher; the locations gate its loss rather than being copied into the privileged prompt. Our insight region instead follows a structured response phase with an approximate implementation mask (\Cref{sec:opsd}). We neither identify causally decisive tokens nor directly differentiate an insight distillation loss. The comparison is between a routed process loss and a trajectory-ranking signal, not between selective and indiscriminate distillation.

\paragraph{Reward construction versus advantage modulation.}
\citet{yang2026ocsd} and \citet{wu2026sspo} preserve the sign of the inherited trajectory advantage in OCSD and SSPO while changing selected token or step weights. In our formulation, $R_{n,j}=\sg[B_j-\lambda_{\mathrm{CKL}}D^{\mathrm{CKL}}_{n,j}]$ is formed before group normalization (\Cref{eq:reward}), so the synthesis penalty can change the ordering of whole trajectories and hence their relative-advantage signs. All assistant actions within a trajectory inherit that same scalar. This can train the acquisition policy, but does not isolate which preceding retrieval caused a good or poor synthesis. \citet{tian2026pbsd} and \citet{zhang2026adrs} provide further relevant controls through turn-level PBSD modulation and token-level ADRS shaping.

\paragraph{Evaluating repetition control.}
Training-only privilege is intended to reduce unresolved synthesis and redundant restatement, not to reward short responses as such. A lower distributional penalty need not imply less repetition or a correct inference: a model could also ignore the privilege or rationalize a known answer. Controlled comparisons should therefore report repetition and loop frequency together with accuracy, evidence sensitivity, and token use. A full-system accuracy gain does not establish an OPSD-specific effect.

\section{Rubric-Based Feedback and Group-Relative Selection}
\label{app:judge}

RL uses batch rubric scoring as task-specific direct AI feedback~\citep{lee2024rlaif}, followed by within-group top-2 selection. Each scoring item contains the question, options, reference answer, predicted answer, and Phase-4 insight; images and the full tool log are excluded. The prompt truncates question and option text to 300 characters each and insight text to 800. The configured judge, \artifact{kimi-k2.6}, returns one aggregate score in $[0,1]$ and a brief justification. The four criteria in \suppref{tab:appendix_rubric} define its allocation, not separately measured subscores.

\begin{table}[htbp]
\centering\small
\begin{tabularx}{\textwidth}{@{}p{.23\textwidth}cX@{}}
\toprule
Criterion & Allocation & Assessment \\
\midrule
Evidence identification & $[0,0.3]$ & Specific findings rather than generic abnormality claims. \\
Logical reasoning & $[0,0.3]$ & A coherent connection between findings, interpretation, and answer. \\
Answer support & $[0,0.2]$ & Evidence that supports the selected conclusion. \\
Completeness & $[0,0.2]$ & The intermediate steps needed to justify the conclusion. \\
\bottomrule
\end{tabularx}
\caption{Fixed rubric for batch insight scoring. Group-relative selection changes the linkage adjustment, not these scoring criteria.}
\label{tab:appendix_rubric}
\end{table}

\paragraph{Outcome-linked top-2 selection.}
A batch of 16 prompts produces $16\times8=128$ rollouts. Linkage selection is performed separately within each question's eight-rollout group, not across the entire batch. The rule selects two rollouts per eligible group rather than a fixed percentile; it does not define an absolute standard of reasoning quality.

The batch wrapper scores insights longer than 20 characters and assigns zero to missing or shorter spans. For question group $x$, let $\mathcal U_x$ contain the indices of the two highest-scoring eligible insights. If fewer than two are eligible or their score range is below $0.02$, set $\mathcal U_x=\varnothing$. Ties follow the stable input order. With $c_j$ denoting exact answer correctness, selection determines the linkage adjustment as
\begin{equation}
 b_j=\ind[j\in\mathcal U_x],\qquad
 L_j=\begin{cases}
 +0.5, & b_j=1\ \text{and}\ c_j=1,\\
 -0.3, & b_j=1\ \text{and}\ c_j=0,\\
 0, & b_j=0.
 \end{cases}
 \label{eq:judge_gate}
\end{equation}
This adjustment rewards highly rated synthesis paired with a correct answer and penalizes highly rated explanations of an incorrect answer. Every rollout retains its continuous $0.6\,s_j$ term, including those outside the top two. All eight receive paired OPSD scoring and enter GRPO; selection changes rewards, not training membership. The API batch size of four (up to 40 concurrent scoring workers) controls request packing independently of these rollout groups.

\paragraph{Judge failures.}
The batch wrapper catches an outer judge exception by assigning $0.5$ to eligible insights and falling back to $b_j=\ind[s_j>0.4]$. Individual API or parsing failures can also return $0.5$ without activating this fallback, allowing those entries to participate in ranking. These cases differ from a valid group with nearly uniform scores, for which linkage is disabled. Failure flags and fallback frequencies are needed to distinguish these reward paths. The reference-insight lexical diagnostic is not included in the returned reward.

\paragraph{Judge-source robustness of the rubric signal.}
The rubric term is supplied by an external judge API and is therefore a black-box feedback signal; a natural concern is that observed gains merely track one scorer's idiosyncrasies, as discussed for LLM-as-a-judge evaluation~\citep{zheng2023judge}. \Cref{fig:judge_sources} re-runs the training-time diagnostics under three judge models (Kimi K2.6~\citep{moonshot2026kimik26}, blue solid; Qwen3.7-Plus~\citep{qwen2026qwen37plus}, red dashed; DeepSeek V4-Flash~\citep{deepseekai2026v4}, teal dashed). In (a)~the causal-KL between ordinary and answer-privileged synthesis decreases steadily over steps 0--1{,}912 (1  epoch) for every judge, indicating that synthesis becomes progressively less dependent on the privileged hint (the policy resolves the answer from its own gathered evidence), independently of which scorer ranks the insights. In (b)~the judged insight score rises throughout training for all three judges even though their absolute levels differ, reflecting genuine differences in scoring strictness and openness rather than divergent trends. Together these show a consistent upward trend across the three tested judges; agreement among model-based scorers does not by itself establish independently verified clinical grounding.

\begin{figure}[htbp]
\centering
\includegraphics[width=0.78\linewidth]{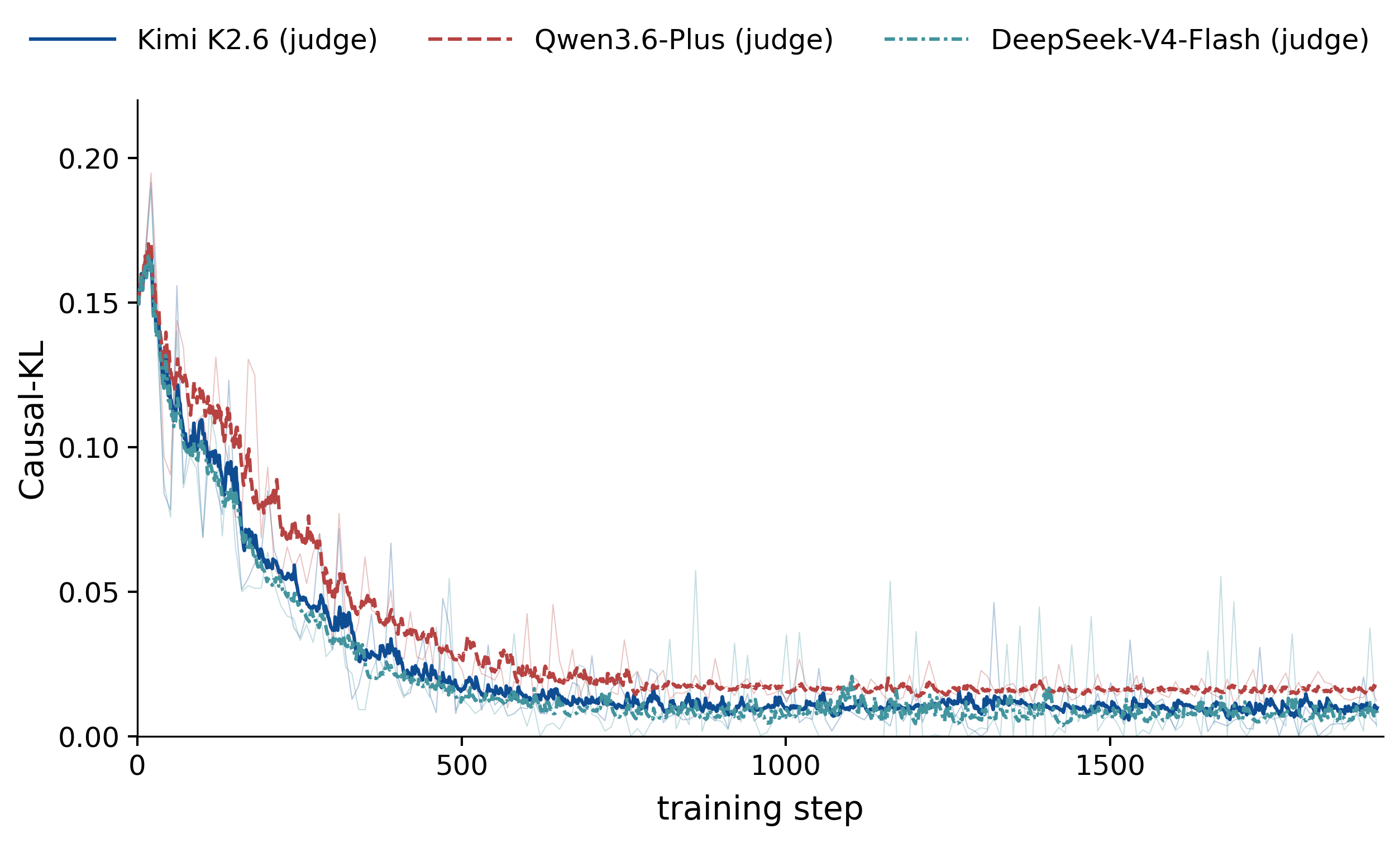}\\[2pt]
{\small (a) Causal-KL between ordinary and answer-privileged synthesis, steps 0--1{,}912.}\\[8pt]
\includegraphics[width=0.78\linewidth]{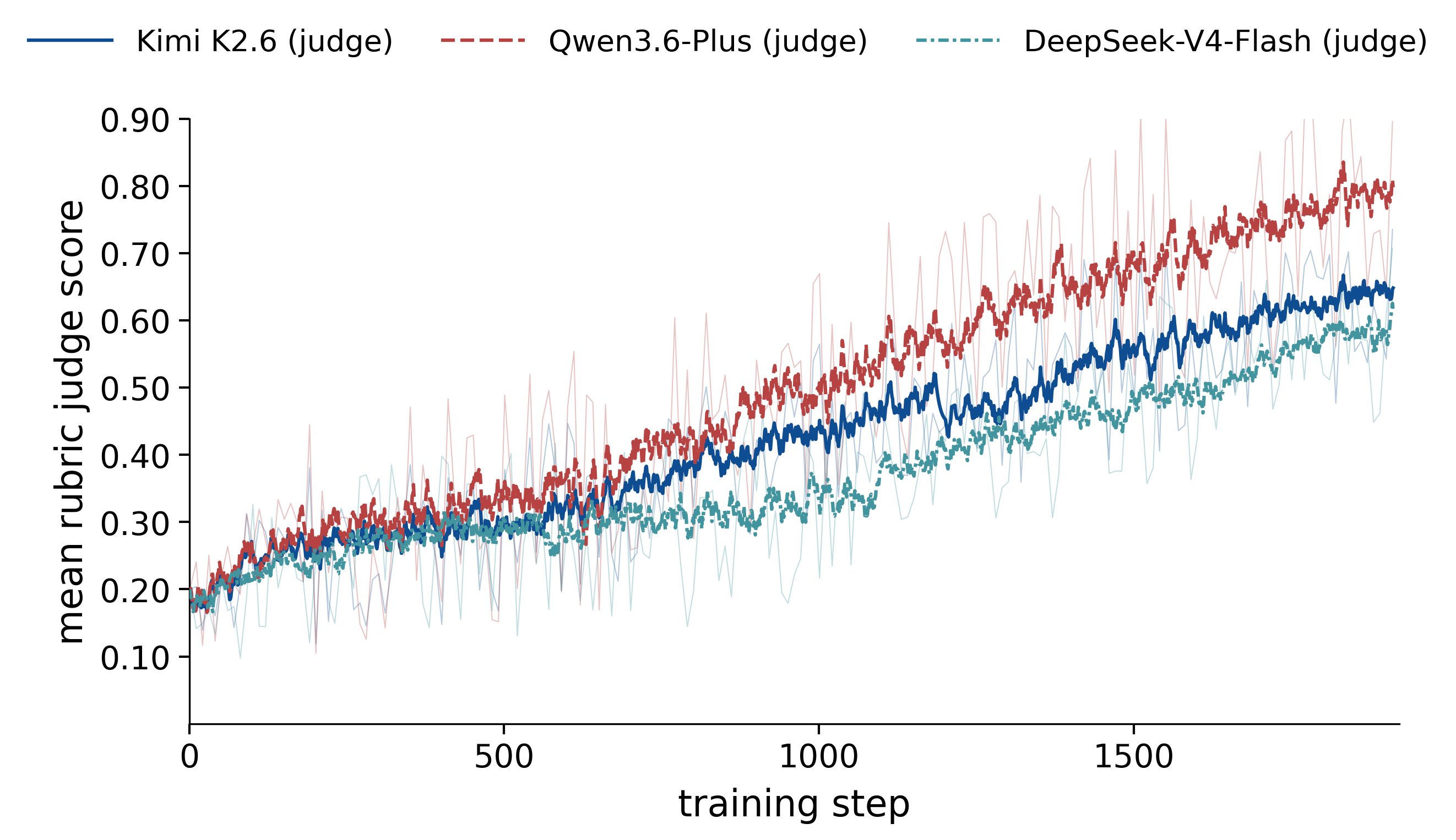}\\[2pt]
{\small (b) Judged insight score over the same steps.}
\caption{Judge-source robustness of the training-time rubric signal under three judge APIs (Kimi K2.6, blue solid; Qwen3.7-Plus, red dashed; DeepSeek V4-Flash, teal dashed). (a)~Causal-KL decreases steadily for every judge; (b)~the judged insight score rises for all three despite different absolute strictness, showing a consistent upward judged-score trend across the three tested sources.}
\label{fig:judge_sources}
\end{figure}

\paragraph{Evaluation scope.}
Top-2 selection is confined to the training reward. Held-out insight evaluation uses fixed rubric scores without group-relative selection, as specified in \appref{app:judge_evaluation}.

\section{Controlled Evaluation and Measurement Definitions}
\label{app:evaluation}
\label{app:judge_evaluation}

\paragraph{Fixed comparison unit.}
A versioned role-specific manifest fixes participant/question identifiers, task--format strata, options, labels, and evidence availability before inference. All methods within a panel share prompts, tools, stopping rules, and parsing. The supplied evaluator uses temperature zero, 6,144 tokens per response, and a 16,384-token local context limit; the Anatomist permits 4 executed tool calls, the Radiologist 6, and the Consultant 12. Resolved model/API versions, inference-effort settings, image limits, and denominators accompany each result. Matched interfaces do not imply identical provider-side compute.

\paragraph{Test-time information boundary.}
Score only the first answer, before reference feedback or the post-answer review reflection. Fix the skill-memory sources with test participants and their feedback excluded; the store is live within a run but frozen across runs (\appref{app:environment}), and an explicitly empty store defines the cold condition. Feedback-adapted evaluation rounds remain separate from held-out results.

\paragraph{Insight-score fairness.}
The complementary insight score is produced by a single judge whose API version and prompt are pinned and reported, so scores stay comparable over time even as provider models are updated. This judge is held out from the RL reward model, so the policy is never trained toward the evaluator, and all methods in a panel are scored in one batch under the same rubric, temperature, and option ordering. To confirm that conclusions do not hinge on one judge, we re-score a subset with at least one additional judge from a different provider and report rank correlation and any order changes.

\paragraph{Exact matching and aggregation.}
Let $\mathcal D$ contain $N$ scheduled questions, $\mathcal D_t$ contain $N_t$ questions in stratum $t$, and $\mathcal D_p$ contain $N_p$ questions from participant $p$, with $P$ participants. For predicted and reference option sets $\widehat Y_i,Y_i^\star$, define $a_i=V_i\ind[\widehat Y_i=Y_i^\star]$, where $V_i=1$ only for a completed first pass with a valid, unambiguous answer. Then
\begin{align}
 A_t&=\frac{1}{N_t}\sum_{i\in\mathcal D_t}a_i,
 & A_{\mathrm{micro}}&=\frac{1}{N}\sum_{i\in\mathcal D}a_i
 =\sum_{t=1}^{10}\frac{N_t}{N}A_t,\nonumber\\
 A_{\mathrm{macro}}&=\frac{1}{10}\sum_{t=1}^{10}A_t,
 & A_{\mathrm{macro,P}}&=\frac{1}{P}\sum_{p=1}^{P}
 \frac{1}{N_p}\sum_{i\in\mathcal D_p}a_i.
 \label{eq:appendix_strict_accuracy}
\end{align}
All accuracies are percentages. Micro and Macro are the reported overall metrics; participant-macro is an optional sensitivity analysis. For domain strata $\mathcal T$, pool counts as $\sum_{t\in\mathcal T}N_tA_t/\sum_{t\in\mathcal T}N_t$. The mapping in \suppref{tab:appendix_types} places hypertension within cardiovascular medicine; Macro still averages ten strata, not six domains.

Canonicalization ignores option order and duplicates but rejects out-of-set letters and multiple selections on single-select items. Multi-select requires complete set equality. Missing records, terminal failures, and invalid answers score zero; recoverable tool errors alone do not invalidate a final answer. Archived extractors recognize A--I and may default to A after failure, so final scoring must reparse the original response against the actual option set, including J where applicable.

Secondary analyses report binary class counts, a training-majority baseline, and balanced accuracy (mean class recall), together with multi-select accuracy by reference-set cardinality.

\paragraph{Uncertainty and relative gains.}
Use 10,000 paired participant-cluster bootstrap replicates, retaining each sampled participant's questions and identical resampling indices across methods; the 2.5th and 97.5th percentiles give 95\% intervals. Every reported quantity is recomputed within each replicate rather than combined from the two endpoint intervals. These intervals measure test-sample uncertainty conditional on a checkpoint; variability across independent training seeds is reported separately.

\paragraph{Per-role clinical accuracy.}
\Cref{tab:perrole_accuracy} gives the full per-role breakdown of the role-averaged \cref{tab:main_accuracy}, with the same ten strata, model grouping, and matched per-panel conditions.

\begin{table}[htbp]
\centering
\footnotesize
\setlength{\tabcolsep}{3pt}
\renewcommand{\arraystretch}{1.1}
\resizebox{\linewidth}{!}{%
\begin{tabular}{l*{12}{c}}
\toprule
\raisebox{0.5\baselineskip}{Domain} & \multicolumn{1}{c}{\raisebox{0.5\baselineskip}{Oncology}} & \multicolumn{1}{c}{\shortstack{Endocrine/\\metabolic}} & \multicolumn{3}{c}{\raisebox{0.5\baselineskip}{Cardiovascular}} & \multicolumn{2}{c}{\shortstack{Respiratory/\\allergy}} & \multicolumn{2}{c}{\raisebox{0.5\baselineskip}{Multisystem}} & \multicolumn{1}{c}{\shortstack{Medical\\history}} & \multicolumn{2}{c}{\raisebox{0.5\baselineskip}{Overall (\%) $\uparrow$}} \\
\cmidrule(lr){2-2}\cmidrule(lr){3-3}\cmidrule(lr){4-6}\cmidrule(lr){7-8}\cmidrule(lr){9-10}\cmidrule(lr){11-11}\cmidrule(lr){12-13}
Method / Question & \shortstack{Cancer\\(B)} & \shortstack{Diabetes\\(B)} & \shortstack{Condition\\(S)} & \shortstack{Condition\\set (M)} & \shortstack{Hyper-\\tension\\(B)} & \shortstack{Condition\\(S)} & \shortstack{Condition\\set (M)} & \shortstack{ICD-10\\(S)} & \shortstack{Disease\\systems\\(M)} & \shortstack{History\\(B)} & Micro & Macro \\
\midrule
\multicolumn{13}{@{}l}{\textit{(a) Anatomist Agent}} \\
Qwen3-VL-8B-Instruct & 67.6 & 27.6 & 78.7 & 69.9 & 36.5 & 43.4 & 38.5 & 8.8 & 5.9 & 45.3 & 39.5 & 42.2 \\
\addlinespace[2pt]
Lingshu-32B & \textbf{89.4} & 82.9 & 98.5 & \textbf{97.8} & 72.4 & 65.5 & 68.7 & 17.7 & 0.6 & 28.4 & 59.2 & 62.2 \\
Lingshu-7B & 88.0 & \textbf{95.2} & 54.3 & 35.2 & 78.0 & 54.5 & 35.4 & 13.6 & 2.8 & 44.6 & 47.1 & 50.2 \\
MedGemma-27B & 41.2 & 25.5 & 19.2 & 46.9 & 23.7 & 13.0 & 14.6 & 7.9 & 2.5 & 27.9 & 21.0 & 22.2 \\
MedGemma-1.5-4B & 19.2 & 21.6 & 20.9 & 38.0 & 18.1 & 9.1 & 9.1 & 9.2 & 0.6 & 27.0 & 16.6 & 17.3 \\
\addlinespace[2pt]
Kimi K2.6 & 76.0 & 64.7 & 68.1 & 85.4 & 59.6 & 77.5 & 63.4 & 50.2 & 3.7 & 60.2 & 61.1 & 60.9 \\
Kimi K3 & 86.1 & 89.4 & 87.8 & 92.8 & 67.1 & 87.7 & 65.1 & 48.6 & 3.4 & 59.3 & 68.3 & 68.7 \\
Qwen3.7-Plus & 86.4 & \textbf{95.2} & 91.0 & 95.6 & 78.6 & 67.5 & 68.4 & 21.0 & 1.6 & 42.3 & 62.1 & 64.8 \\
Qwen3.8-Max & 88.0 & 74.0 & 96.1 & 96.9 & 68.2 & 77.7 & 69.4 & 45.0 & 1.6 & 46.8 & 65.8 & 66.4 \\
Gemini-3.5-Flash & 69.3 & 73.8 & 66.2 & 72.7 & 54.6 & 75.0 & 51.0 & 33.7 & 1.6 & 47.6 & 53.7 & 54.5 \\
GPT-5.4-0305 & 71.9 & 86.5 & 74.9 & 82.1 & \textbf{79.7} & 80.9 & 61.2 & 40.6 & 2.8 & \textbf{68.8} & 63.6 & 64.9 \\
Claude Opus 4.8 & 74.3 & 79.6 & 81.5 & 85.2 & 61.3 & 84.5 & 67.7 & 64.0 & 4.4 & 57.7 & 67.4 & 66.0 \\
\addlinespace[2pt]
\textbf{CASE (8B, ours)} & \textbf{89.4} & \textbf{95.2} & \textbf{99.5} & \textbf{97.8} & \textbf{79.7} & \textbf{94.5} & \textbf{73.0} & \textbf{67.0} & \textbf{10.3} & 62.4 & \textbf{77.5} & \textbf{76.9} \\
\bottomrule
\end{tabular}}
\vspace{4pt}
\resizebox{\linewidth}{!}{%
\begin{tabular}{l*{12}{c}}
\toprule
\raisebox{0.5\baselineskip}{Domain} & \multicolumn{1}{c}{\raisebox{0.5\baselineskip}{Oncology}} & \multicolumn{1}{c}{\shortstack{Endocrine/\\metabolic}} & \multicolumn{3}{c}{\raisebox{0.5\baselineskip}{Cardiovascular}} & \multicolumn{2}{c}{\shortstack{Respiratory/\\allergy}} & \multicolumn{2}{c}{\raisebox{0.5\baselineskip}{Multisystem}} & \multicolumn{1}{c}{\shortstack{Medical\\history}} & \multicolumn{2}{c}{\raisebox{0.5\baselineskip}{Overall (\%) $\uparrow$}} \\
\cmidrule(lr){2-2}\cmidrule(lr){3-3}\cmidrule(lr){4-6}\cmidrule(lr){7-8}\cmidrule(lr){9-10}\cmidrule(lr){11-11}\cmidrule(lr){12-13}
Method / Question & \shortstack{Cancer\\(B)} & \shortstack{Diabetes\\(B)} & \shortstack{Condition\\(S)} & \shortstack{Condition\\set (M)} & \shortstack{Hyper-\\tension\\(B)} & \shortstack{Condition\\(S)} & \shortstack{Condition\\set (M)} & \shortstack{ICD-10\\(S)} & \shortstack{Disease\\systems\\(M)} & \shortstack{History\\(B)} & Micro & Macro \\
\midrule
\multicolumn{13}{@{}l}{\textit{(b) Radiologist Agent}} \\
Qwen3-VL-8B-Instruct & 60.5 & 38.4 & 70.1 & 65.0 & 47.6 & 56.0 & 49.6 & 33.4 & 3.8 & 39.9 & 46.0 & 46.4 \\
\addlinespace[2pt]
Lingshu-32B & 85.4 & 72.4 & 25.4 & 42.6 & 43.2 & 64.9 & 58.7 & 42.4 & 2.8 & 51.7 & 49.4 & 49.0 \\
Lingshu-7B & 20.1 & 66.4 & 4.0 & 10.5 & 41.2 & 21.4 & 24.8 & 34.7 & 2.0 & 44.2 & 27.9 & 26.9 \\
MedGemma-27B & 11.0 & 7.9 & 16.3 & 4.2 & 28.1 & 23.7 & 27.5 & 39.5 & 4.3 & 61.1 & 24.0 & 22.4 \\
MedGemma-1.5-4B & 65.8 & 56.7 & 52.2 & 74.5 & 58.8 & 58.4 & 60.0 & 34.7 & 4.8 & 49.4 & 50.7 & 51.5 \\
\addlinespace[2pt]
Kimi K2.6 & 42.4 & 35.5 & 33.5 & 32.5 & 38.6 & 34.5 & 37.1 & 42.6 & 5.1 & 58.8 & 37.1 & 36.1 \\
Kimi K3 & 85.2 & 91.6 & 84.6 & 87.7 & 73.1 & 90.2 & 67.5 & 60.0 & 6.3 & 44.2 & 69.8 & 69.0 \\
Qwen3.7-Plus & 87.0 & 74.8 & 80.4 & 84.4 & 64.7 & 86.7 & 65.7 & 55.5 & 8.0 & 52.2 & 66.3 & 65.9 \\
Qwen3.8-Max & 51.2 & 31.1 & 21.0 & 41.3 & 22.0 & 44.1 & 38.0 & 57.5 & 6.3 & \textbf{62.4} & 40.2 & 37.5 \\
Gemini-3.5-Flash & 70.9 & 77.9 & 65.8 & 71.0 & 61.4 & 79.6 & 54.7 & 42.9 & 2.8 & 36.8 & 56.4 & 56.4 \\
GPT-5.4-0305 & 72.0 & 89.8 & 73.2 & 78.5 & \textbf{87.7} & 84.0 & 64.2 & 50.7 & 5.4 & 51.9 & 65.4 & 65.7 \\
Claude Opus 4.8 & 72.0 & 79.9 & 76.8 & 78.9 & 65.2 & 85.2 & 68.6 & \textbf{77.4} & 8.0 & 42.2 & 68.3 & 65.4 \\
\addlinespace[2pt]
\textbf{CASE (8B, ours)} & \textbf{88.1} & \textbf{93.2} & \textbf{89.5} & \textbf{90.1} & 78.8 & \textbf{94.0} & \textbf{71.0} & 68.3 & \textbf{11.4} & 53.7 & \textbf{74.8} & \textbf{73.8} \\
\bottomrule
\end{tabular}}
\vspace{4pt}
\resizebox{\linewidth}{!}{%
\begin{tabular}{l*{12}{c}}
\toprule
\raisebox{0.5\baselineskip}{Domain} & \multicolumn{1}{c}{\raisebox{0.5\baselineskip}{Oncology}} & \multicolumn{1}{c}{\shortstack{Endocrine/\\metabolic}} & \multicolumn{3}{c}{\raisebox{0.5\baselineskip}{Cardiovascular}} & \multicolumn{2}{c}{\shortstack{Respiratory/\\allergy}} & \multicolumn{2}{c}{\raisebox{0.5\baselineskip}{Multisystem}} & \multicolumn{1}{c}{\shortstack{Medical\\history}} & \multicolumn{2}{c}{\raisebox{0.5\baselineskip}{Overall (\%) $\uparrow$}} \\
\cmidrule(lr){2-2}\cmidrule(lr){3-3}\cmidrule(lr){4-6}\cmidrule(lr){7-8}\cmidrule(lr){9-10}\cmidrule(lr){11-11}\cmidrule(lr){12-13}
Method / Question & \shortstack{Cancer\\(B)} & \shortstack{Diabetes\\(B)} & \shortstack{Condition\\(S)} & \shortstack{Condition\\set (M)} & \shortstack{Hyper-\\tension\\(B)} & \shortstack{Condition\\(S)} & \shortstack{Condition\\set (M)} & \shortstack{ICD-10\\(S)} & \shortstack{Disease\\systems\\(M)} & \shortstack{History\\(B)} & Micro & Macro \\
\midrule
\multicolumn{13}{@{}l}{\textit{(c) Consultant Agent}} \\
Qwen3-VL-8B-Instruct & 75.0 & 72.4 & 70.0 & 75.1 & 31.1 & 58.2 & 59.9 & 38.5 & 7.6 & 38.2 & 52.5 & 52.6 \\
\addlinespace[2pt]
Lingshu-32B & 80.1 & 55.8 & 41.1 & 23.5 & 30.9 & 46.2 & 27.5 & 45.6 & 5.4 & 41.4 & 41.2 & 39.8 \\
Lingshu-7B & 88.1 & 87.6 & 87.5 & 75.2 & \textbf{78.3} & 76.3 & 55.6 & 37.1 & 5.1 & 39.6 & 61.6 & 63.0 \\
MedGemma-27B & 74.4 & 70.2 & 18.8 & 1.8 & 39.6 & 40.5 & 30.5 & 39.8 & 4.8 & 57.8 & 38.5 & 37.8 \\
MedGemma-1.5-4B & 80.1 & 77.0 & 39.3 & 82.9 & 68.3 & 54.5 & 62.9 & 36.7 & 4.8 & 46.3 & 54.3 & 55.3 \\
\addlinespace[2pt]
Kimi K2.6 & 26.9 & 24.3 & 23.9 & 21.5 & 27.4 & 31.4 & 28.1 & 39.8 & 5.1 & 61.4 & 30.3 & 29.0 \\
Kimi K3 & \textbf{89.2} & \textbf{94.7} & 83.9 & 93.4 & 52.7 & 95.0 & 68.1 & 69.2 & 9.4 & 50.4 & 72.4 & 70.6 \\
Qwen3.7-Plus & 87.6 & 92.5 & 97.3 & \textbf{96.5} & 74.7 & 89.4 & 67.5 & 58.8 & 8.8 & 50.1 & 72.6 & 72.3 \\
Qwen3.8-Max & 88.3 & 94.0 & \textbf{97.5} & 95.8 & 25.1 & \textbf{95.4} & 71.0 & 71.0 & 8.8 & \textbf{62.9} & 73.1 & 71.0 \\
Gemini-3.5-Flash & 74.8 & 81.5 & 65.8 & 76.3 & 45.0 & 84.6 & 55.6 & 49.9 & 4.6 & 42.2 & 58.8 & 58.0 \\
GPT-5.4-0305 & 75.1 & 92.1 & 72.1 & 83.1 & 62.9 & 88.1 & 64.4 & 58.2 & 8.0 & 58.8 & 66.9 & 66.3 \\
Claude Opus 4.8 & 74.0 & 81.0 & 74.8 & 82.4 & 46.0 & 87.9 & 67.9 & \textbf{87.6} & \textbf{11.7} & 47.1 & 70.1 & 66.0 \\
\addlinespace[2pt]
\textbf{CASE (8B, ours)} & 85.9 & \textbf{94.7} & 92.9 & 89.7 & 68.5 & 94.4 & \textbf{71.4} & 73.4 & 11.1 & \textbf{62.9} & \textbf{76.0} & \textbf{74.5} \\
\bottomrule
\end{tabular}}
\caption{Per-role clinical accuracy: the full breakdown of the role-averaged \cref{tab:main_accuracy}. Within each panel all methods share prompts, patient evidence, execution limits, and parsing rules; models follow the base, medical, and general grouping of \cref{tab:main_accuracy}, followed by CASE. Columns are the ten task--format strata (B, S, M) under six clinical domains, with Micro and Macro as in \cref{eq:metrics}. Missing or failed episodes count as incorrect.}
\label{tab:perrole_accuracy}
\end{table}

\paragraph{Paired significance of the CASE advantage.}
\Cref{tab:main_accuracy} reports CASE ahead of every baseline on a single fixed test sample. To verify that this margin is a robust effect rather than an artifact of which participants were drawn, we recompute each gap with a paired participant-cluster bootstrap (10{,}000 replicates per role, resampling the same participants and questions for the full policy and the comparator) and report, per role, the point estimate with its 95\% percentile interval in \cref{tab:paired_comparisons}. Its three columns are computed within a role from first-pass exact answer-set accuracy over that role's scheduled questions (\cref{eq:metrics}): \emph{Micro} is a policy's own such accuracy, and writing $A_f$ for CASE-full's Micro and $A_b$ for a comparator's, \emph{Full $-$ method} is the gap $A_f-A_b$ in percentage points while \emph{Relative gain} is that gap as a percentage of the comparator, $100\,(A_f/A_b-1)$ percent. An interval that excludes zero indicates the advantage survives test-sample resampling at the given checkpoint.

\begin{table}[htbp]
\centering\footnotesize
\setlength{\tabcolsep}{4pt}
\renewcommand{\arraystretch}{1.0}
\begin{tabular}{@{}lccc@{}}
\toprule
Method & \shortstack{Micro (\%)\\$\pm$95\% CI} & \shortstack{Full $-$ method (pp)\\$\pm$95\% CI} & \shortstack{Relative gain (\%)\\$\pm$95\% CI} \\
\midrule
\multicolumn{4}{@{}l}{\textit{(a) Anatomist Agent}} \\
\textbf{CASE (full)} & 77.5 $\pm$ 4.3 & --- & --- \\
CASE (RSFT only) & 64.9 $\pm$ 3.9 & 12.6 $\pm$ 1.8 & 19.4 $\pm$ 3.8 \\
Qwen3-VL-8B-Instruct & 39.5 $\pm$ 3.4 & 38.0 $\pm$ 2.3 & 96.2 $\pm$ 13.5 \\
\addlinespace[2pt]
Qwen3.8-Max & 65.8 $\pm$ 4.0 & 11.7 $\pm$ 2.1 & 17.8 $\pm$ 3.8 \\
GPT-5.4-0305 & 63.6 $\pm$ 4.0 & 13.9 $\pm$ 2.0 & 21.9 $\pm$ 3.7 \\
Claude Opus 4.8 & 67.4 $\pm$ 3.7 & 10.1 $\pm$ 1.5 & 15.0 $\pm$ 2.9 \\
\addlinespace[2pt]
\multicolumn{4}{@{}l}{\textit{(b) Radiologist Agent}} \\
\textbf{CASE (full)} & 74.8 $\pm$ 3.2 & --- & --- \\
CASE (RSFT only) & 67.9 $\pm$ 4.3 & 6.9 $\pm$ 2.5 & 10.2 $\pm$ 2.0 \\
Qwen3-VL-8B-Instruct & 46.0 $\pm$ 4.1 & 28.8 $\pm$ 1.9 & 62.6 $\pm$ 7.2 \\
\addlinespace[2pt]
Qwen3.8-Max & 40.2 $\pm$ 3.3 & 34.6 $\pm$ 1.5 & 86.1 $\pm$ 11.7 \\
GPT-5.4-0305 & 65.4 $\pm$ 3.0 & 9.4 $\pm$ 1.6 & 14.4 $\pm$ 3.0 \\
Claude Opus 4.8 & 68.3 $\pm$ 4.4 & 6.5 $\pm$ 1.6 & 9.5 $\pm$ 2.1 \\
\addlinespace[2pt]
\multicolumn{4}{@{}l}{\textit{(c) Consultant Agent}} \\
\textbf{CASE (full)} & 76.0 $\pm$ 4.1 & --- & --- \\
CASE (RSFT only) & 64.4 $\pm$ 3.2 & 11.6 $\pm$ 1.6 & 18.0 $\pm$ 3.4 \\
Qwen3-VL-8B-Instruct & 52.5 $\pm$ 3.6 & 23.5 $\pm$ 2.4 & 44.8 $\pm$ 5.1 \\
\addlinespace[2pt]
Qwen3.8-Max & 73.1 $\pm$ 3.5 & 2.9 $\pm$ 1.5 & 4.0 $\pm$ 1.4 \\
GPT-5.4-0305 & 66.9 $\pm$ 3.5 & 9.1 $\pm$ 1.7 & 13.6 $\pm$ 2.6 \\
Claude Opus 4.8 & 70.1 $\pm$ 3.3 & 5.9 $\pm$ 2.1 & 8.4 $\pm$ 1.9 \\
\bottomrule
\end{tabular}
\caption{Role-specific paired gains of \textbf{CASE (full)} over each comparator: Micro accuracy, the difference in percentage points (pp), and the relative gain, each computed from the corresponding role manifest. Every cell is the point estimate $\pm$ the half-width of the 95\% paired participant-cluster bootstrap percentile interval (differences and relative gains recomputed within replicates from the same panel manifest); ``---'' marks the reference policy, and the pp and relative-gain columns are CASE-full minus the comparator, so positive values favor CASE.}
\label{tab:paired_comparisons}
\end{table}

\paragraph{Training ablations.}
The four RL configurations in \cref{tab:learning_ablation} enable neither addition, rubric feedback only, OPSD only, or both; RSFT provides the pre-RL reference. Every RL row starts from the same RSFT checkpoint and retains correctness, format reward, and the frozen-reference regularizer. The rubric condition adds both the continuous judge score and top-2 correctness linkage. Let $A_{uv}$ be the accuracy of the $2\times2$ cell that enables rubric feedback for $u\in\{0,1\}$ and OPSD for $v\in\{0,1\}$,(e.g., $A_{00}$, $A_{10}$, $A_{01}$, and $A_{11}$ denote the accuracies of GRPO, GRPO+RLAIF, GRPO+OPSD, and GRPO+RLAIF+OPSD (CASE-8B), respectively.) so $A_{00}$ adds neither component, $A_{10}$ and $A_{01}$ add one each, and $A_{11}$ adds both. The interaction contrast $A_{11}-A_{10}-A_{01}+A_{00}=(A_{11}-A_{01})-(A_{10}-A_{00})$ compares the accuracy gain from rubric feedback when OPSD is already present against the gain when OPSD is absent. It vanishes if the two components contribute additively and is nonzero only when they interact, so it measures their synergy in accuracy units without presuming that their benefits simply add. Training participants, rollout count, update schedule, full-parameter optimization, and tool budgets remain matched; trajectories remain policy-selected. Three independent seeds are the target, with the completed count and compute reported explicitly.

For a reported metric $M$, the main-table average is
$\overline M_{\mathrm{role}}=\frac13(M_{\mathrm{anat}}+M_{\mathrm{rad}}+M_{\mathrm{cons}})$.
Each $M_r$ is computed first within role $r$; trajectories are never pooled across roles. A variant missing any role is not assigned a three-role mean. Accuracy uses every scheduled question. Insight quality uses a common judge-scored question subset within each role, as defined in \appref{app:judge_evaluation}; \suppref{tab:role_ablation} reports both individual and shared judge coverage. Coverage is a diagnostic, not a quality score: it falls when episodes are unparseable or the judge cannot score them. We therefore expect the rubric-only variant (GRPO+RLAIF, without OPSD) to show the lowest coverage, because the rubric rewards elaboration while nothing brakes the mid-to-late-RL repetitive restatement that pushes Phase-4 synthesis past the length limit, truncating the insight and sometimes the answer; OPSD keeps synthesis concise, so GRPO+OPSD and CASE-8B retain high coverage. Scoring insight quality on the shared subset removes the survivorship bias that unequal coverage would otherwise introduce.

\begin{table}[htbp]
\centering\footnotesize
\setlength{\tabcolsep}{4pt}
\renewcommand{\arraystretch}{1.0}
\begin{tabular}{@{}lccc@{}}
\toprule
Training variant & \shortstack{Micro (\%)\\$\uparrow$} & \shortstack{Insight\\quality $\uparrow$} & \shortstack{Judge coverage\\(\%)} \\
\midrule
\multicolumn{4}{@{}l}{\textit{(a) Anatomist Agent}\quad Shared judge coverage: 84.1\%} \\
RSFT only & 64.9 & 0.45 & 94.2 \\
GRPO (correctness + format) & 73.0 & 0.48 & 90.6 \\
GRPO + RLAIF & 74.3 & 0.65 & 85.3 \\
GRPO + OPSD & 77.1 & 0.59 & 93.1 \\
CASE-8B & 77.5 & 0.70 & 95.0 \\
\addlinespace[2pt]
\multicolumn{4}{@{}l}{\textit{(b) Radiologist Agent}\quad Shared judge coverage: 83.5\%} \\
RSFT only & 67.9 & 0.43 & 93.5 \\
GRPO (correctness + format) & 69.0 & 0.47 & 89.8 \\
GRPO + RLAIF & 70.3 & 0.63 & 84.6 \\
GRPO + OPSD & 74.8 & 0.57 & 92.4 \\
CASE-8B & 74.8 & 0.68 & 94.6 \\
\addlinespace[2pt]
\multicolumn{4}{@{}l}{\textit{(c) Consultant Agent}\quad Shared judge coverage: 79.2\%} \\
RSFT only & 64.4 & 0.41 & 92.8 \\
GRPO (correctness + format) & 65.9 & 0.46 & 88.1 \\
GRPO + RLAIF & 67.2 & 0.61 & 80.4 \\
GRPO + OPSD & 69.4 & 0.55 & 91.7 \\
CASE-8B & 76.0 & 0.66 & 94.2 \\
\addlinespace[2pt]
CASE-8B + skill memory & 77.9 & 0.71 & 94.6 \\
\bottomrule
\end{tabular}
\caption{Role-specific ablations underlying \cref{tab:learning_ablation}. Micro uses the complete role-specific manifest. Insight quality is the fixed judge's score in $[0,1]$ of the logical chain from the Phase-4 key insight to the Phase-5 answer, averaged over the questions judged for all variants (the shared subset). Judge coverage is each variant's judge-scored fraction and shared coverage is their intersection. The last Consultant row adds inference-time skill-memory retrieval (\appref{app:environment}), a Consultant-only augmentation rather than a training variant.}
\label{tab:role_ablation}
\end{table}

Further controls can compare an SDPO-style gradient-bearing distillation loss~\citep{hubotter2026sdpo} with the detached reward used here, and turn- or token-level feedback from PBSD and ADRS~\citep{tian2026pbsd,zhang2026adrs}. Any multimodal adaptation must preserve the defining update rule. These are supplementary comparisons, not results implied by the blank tables.

Teacher-conditioning diagnostics remove the reference rationale, use a length-matched neutral hint, or shuffle the answer for scoring only. A teacher-evolution control freezes only the privileged branch while the ordinary branch tracks the actor; freezing both would confound teacher evolution with stale student scores. Exact assistant-span alignment and full-vocabulary scoring on a fixed subset test the archived span estimator and top-$128$ approximation.

\paragraph{Evidence controls.}
\Suppref{tab:evidence_controls} separates answer priors, supplied context, and autonomous acquisition. Question-only retains the question and options but no patient evidence. Clinical-context-only adds the initial permitted narrative without tools. A fixed packet is selected by a prespecified rule without test labels or model-specific retrieval and respects the same context ceiling. Adaptive acquisition uses the full environment. Removing a source also removes its derivatives: the earlier-visit-only condition, for example, excludes follow-up findings from reports, measurements, and narratives as well as images. The same test manifest is retained, with accuracy drops interpreted as dependence on that information, not causal identification of a particular finding.

\begin{table}[htbp]
\centering\footnotesize
\setlength{\tabcolsep}{4pt}
\renewcommand{\arraystretch}{1.0}
\begin{tabular}{@{}lccccc@{}}
\toprule
Evidence condition & \shortstack{RSFT\\micro} & \shortstack{Full\\micro} & \shortstack{Full\\macro} & \shortstack{Full drop\\(pp)} & \shortstack{Full\\calls} \\
\midrule
Question only & 35.8 & 38.6 & 36.9 & 37.5 & 0 \\
Initial clinical context & 48.4 & 52.7 & 50.9 & 23.4 & 0 \\
Fixed evidence packet & 62.2 & 66.0 & 64.4 & 10.1 & 0 \\
Adaptive acquisition, all sources & 65.7 & 76.1 & 75.1 & --- & 5.3 \\
Adaptive, without MRI views & 60.9 & 65.9 & 65.1 & 10.2 & 4.7 \\
Adaptive, earlier visit only & 59.8 & 64.6 & 62.9 & 11.5 & 3.9 \\
Adaptive, without reports & 62.6 & 71.7 & 70.4 & 4.4 & 3.7 \\
\bottomrule
\end{tabular}
\caption{Evidence sensitivity under a fixed test manifest. Accuracies are percentages; drop is full-policy micro accuracy with all sources minus that in the stated condition, in percentage points. Zero calls is an imposed no-tool condition, not a measured result. Packet construction is accounted for separately from online calls.}
\label{tab:evidence_controls}
\end{table}

MRI-view removal preserves nonvisual measurements and reports, testing direct visual access rather than removal of all imaging-derived information. Report removal deletes report-derived content from the initial prompt and other adapters as well as disabling the reporting tool. For visually answerable subsets, report an independently specified clinical answerability criterion; a history-only target need not be sensitive to images. Synthetic evidence substitutions require revalidated targets rather than automatic reuse of the original diagnosis. Participant-disjoint performance and source-removal tests support in-cohort generalization and evidence dependence, not unrestricted clinical transfer.

\paragraph{Controlled insight evaluation.}
Fix the external judge's model version, evaluation prompt, rubric weights, decoding parameters, extraction rule, and input limits before comparing policies. Each anonymized first-pass insight is scored independently against the same question, options, and reference answer, together with that policy's prediction. Randomize evaluation order and retain both correct and incorrect answers. The four rubric criteria in \suppref{tab:appendix_rubric} remain fixed; neither the training top-2 gate nor candidate-relative ranking enters evaluation. The external judge is distinct from the evolving policy used for OPSD scoring.

For variant $m$ and role $r$, let $\mathcal D_r$ be the scheduled question set and $\mathcal V_{m,r}\subseteq\mathcal D_r$ those with a parseable first-pass answer and a successfully judged insight. Define the shared subset $\mathcal E_r=\bigcap_m\mathcal V_{m,r}$. With individual judge scores $s^{\mathrm{eval}}_{m,r,i}\in[0,1]$, report
\begin{equation}
 S_{m,r}=\frac{1}{|\mathcal E_r|}\sum_{i\in\mathcal E_r}s^{\mathrm{eval}}_{m,r,i},
 \qquad
 \overline S_m=\frac13\sum_{r\in\{\mathrm{anat},\mathrm{rad},\mathrm{cons}\}}S_{m,r}.
 \label{eq:judge_evaluation}
\end{equation}
Thus the main-table insight score is an arithmetic mean within each role followed by an equal mean across roles, evaluated on matched questions. Individual coverage is $|\mathcal V_{m,r}|/|\mathcal D_r|$ and shared coverage is $|\mathcal E_r|/|\mathcal D_r|$; an empty shared subset has no reported mean. This conditional quality score does not replace accuracy on the complete manifest. Report truncation and judge-failure rates, apply the same retry policy to every variant, and do not substitute a neutral score for a failed request. Participant-cluster bootstrap intervals use the shared subset and paired resampling.

A fixed judge controls the evaluation procedure, not all evaluator bias. Reusing the training judge also leaves possible adaptation to that evaluator. Higher scores indicate rubric-assessed argument quality, not independently verified image grounding. Blinded expert review or a judge not used in training should check score calibration, unsupported certainty, and explanations of incorrect answers on a prespecified subset.

\paragraph{Distributional and repetition diagnostics.}
Causal-KL is the mean of \cref{eq:bucket} over successfully scored trajectories with valid insight spans. Each checkpoint supplies both distributions, so this diagnostic has no fixed external reference and is not a cross-model clinical-quality metric. A decrease can also reflect reduced sensitivity to the hint rather than improved reasoning. Scoring failures and missing spans are reported as coverage, never recoded as zero divergence. Hold the span rule, scoring temperature, and privilege fields fixed, and add privilege only for offline scoring after the first-pass answer is locked. In addition to policy-generated histories, score a fixed held-out history bank with identical tokens, images, and privileges across checkpoints to separate changes in scoring behavior from changes in acquired evidence. Swapping RSFT- and RL-collected evidence prefixes before generating a fresh synthesis further distinguishes acquisition quality from synthesis quality. These analyses do not replace end-to-end accuracy.

For repetition, we use an 8-gram instance of the repeated-$n$-gram degeneration diagnostic of \citet{welleck-etal-2020-consistency}: tokenize Phase-4 text by lowercased whitespace-separated words. With $L_i\geq8$ words and $U_{8,i}$ distinct contiguous eight-grams, define
\begin{equation}
 \operatorname{Rep}_{8,i}=1-\frac{U_{8,i}}{L_i-7}.
 \label{eq:repetition_metric}
\end{equation}
Report its trajectory mean as a percentage with eligible coverage and median insight length; shorter or missing insights are not assigned zero repetition. Only assistant Phase-4 prose is scored; metadata and tool observations are excluded. Lexical repetition is a diagnostic, so gains require preserved accuracy and valid-answer coverage, not merely fewer generated words. A reduction in discourse-marker repetition remains a mechanistic hypothesis unless supported by phrase-level analysis.

\paragraph{Interaction cost and failure accounting.}
Count tool requests, executed calls, tool errors, duplicate calls, and budget-triggered stops separately. A duplicate repeats the tool name and canonical arguments within one trajectory. \Suppref{tab:interaction_cost} reports per-scheduled-question resource means, including work incurred by failed runs, and median/95th-percentile end-to-end latency. A complete execution ledger supplies the resource denominator; missing telemetry requires an explicit coverage rate, not zero imputation. Image blocks count returned images, not slices displayed within a montage; visual positions count the processor's actual merged embeddings. Repeatedly encoding an image is charged each time it is processed. Generated assistant tokens cover the entire first-pass interaction, not only the final answer. Vendor-inaccessible token or visual counts are marked not reported rather than estimated from image count.

\begin{table}[htbp]
\centering\footnotesize
\setlength{\tabcolsep}{4pt}
\renewcommand{\arraystretch}{1.0}
\begin{tabular}{@{}lcccccc@{}}
\toprule
Policy & Calls & \shortstack{Image\\blocks} & \shortstack{Visual\\positions} & \shortstack{Assistant\\tokens} & \shortstack{Latency (s)\\median / p95} & \shortstack{Failure\\(\%)} \\
\midrule
\textbf{CASE-8B (full)} & 5.1 & 8.4 & 3612 & 1452 & 34.2 / 68.5 & 2.6 \\
CASE-8B (RSFT only) & 4.6 & 7.2 & 3087 & 1248 & 31.5 / 61.2 & 6.8 \\
Qwen3-VL-8B-Instruct & 2.9 & 3.6 & 1514 & 782 & 22.4 / 49.1 & 21.4 \\
\addlinespace[2pt]
Qwen3.8-Max & 5.3 & 8.6 & 3704 & 1603 & 43.6 / 86.2 & 4.9 \\
GPT-5.4-0305 & 4.3 & 7.0 & 3012 & 1324 & 46.1 / 92.4 & 5.7 \\
Claude Opus 4.8 & 4.1 & 6.8 & 2896 & 1281 & 44.3 / 89.0 & 5.2 \\
\bottomrule
\end{tabular}
\caption{First-pass interaction cost per scheduled question on the same test bank as \cref{tab:main_accuracy}. Failure is $100(1-N^{-1}\sum_iV_i)$ (unparseable or tool-error terminations), not the clinical error rate; resource means include work from failed runs and distinguish unavailable instrumentation from zero use. Read with \cref{tab:main_accuracy}: the full policy reaches the best accuracy at the lowest failure rate and lower latency than the frontier APIs, so autonomous acquisition is paid for in tool calls and images rather than brute-force length.}
\label{tab:interaction_cost}
\end{table}

\section{Historical Pilot Measurement}
\label{app:historical}

The archived Consultant pilot reports 3,752 correct answers among 4,850 questions (77.36\%). Of 220 detected extraction failures, 49 were counted correct through fallback to option A. Treating all detected failures as incorrect yields $3703/4850=76.35\%$. These are alternative scoring conventions for the same predictions. The pilot reports initially cold skill memory, no first-pass retrieval matches, and no reuse of newly written skills. Raw predictions, recall logs, and the resolved configuration are unavailable for independent verification; this aggregate is therefore excluded from controlled comparisons and component-effect claims. Test-feedback adaptation rounds remain outside the held-out protocol.

\section{Implementation Details and Reproducibility}
\label{app:implementation}

\paragraph{Visual and protocol alignment.}
The current tool loop reads the singular \artifact{image} field and inserts one placeholder per returned image. Report adapters supply encoded strings, whereas image and anatomy adapters supply PIL objects. End-to-end decoding and placeholder--feature alignment therefore require validation across both formats. Cache-key granularity, role-specific budgets, prompt/schema differences, and reference-insight coverage are documented in \appref{app:environment} and \appref{app:training}; changes to these settings define a new protocol version rather than retroactively correcting an earlier evaluation.

\paragraph{Scoring masks and inserted privilege.}
The \artifact{find_insight_positions} function locates the first \artifact{Phase 4} marker and the following \artifact{Phase 5}, then maps their relative character offsets to estimated nonpadding token indices. If Phase 4 is absent, it selects the middle 30\%--70\% of the response. This approximate scoring mask is not intersected with the assistant-action mask and can include positions outside the intended insight. Exact token-span alignment would change the scoring implementation; scoring and policy-gradient masks are therefore distinct reproducibility artifacts.

The privileged input inserts the answer hint before the complete continuation. Optional reference insights are truncated to 500 characters. Option lookup uses the complete answer string, so a composite multi-select label may yield no option text. The helper rebuilds position IDs from the cumulative attention mask and broadcasts them over multiple position axes; correct image-aware alignment requires runtime validation in addition to matching response-token rows.

\paragraph{Policy objective.}
Over assistant tokens $t\in\mathcal A_j$, with likelihood ratio $\rho_{n,j,t}(\vtheta)=\pi_{\vtheta}(u_{j,t}\mid g_{j,t})/\pi_{\bar{\vtheta}_n}(u_{j,t}\mid g_{j,t})$, clipping radius $\eta=0.2$ (the launcher leaves the verl default), and advantage $A_j$ from \cref{eq:advantage}, the trainer minimizes the token-averaged PPO-style clipped surrogate~\citep{schulman2017ppo} $\min\{\rho_{n,j,t}A_j,\clip(\rho_{n,j,t},1\pm\eta)A_j\}$ plus a $0.05\,\mathcal K_{\mathrm{ref}}$ reference term. This term is a low-variance KL penalty to the frozen SFT policy $\pi_{\mathrm{ref}}=\vtheta_0$ on unprivileged contexts (\cref{fig:rl_step}), distinct from OPSD, whose self-teacher is refreshed and whose score enters the reward rather than the loss.

\paragraph{Reward injection and failure handling.}
The reverse-KL penalty uses top-$K$ truncation ($K{=}128$) with coefficient $2.0$. The worker scores ordinary and privileged inputs in matched chunks with identical multimodal inputs, averages the divergence over the estimated insight mask, and returns one CPU scalar per trajectory. The trainer places $2D_j$ in the final slot of an otherwise zero penalty tensor and subtracts it from token-level rewards before advantage estimation. It neither broadcasts the penalty across Phase-4 tokens nor differentiates through scoring logits. Causal-KL exceptions skip the penalty for the update; image mismatches or out-of-memory failures within a scoring chunk return zero for affected trajectories. Failure counts are needed to distinguish these fallbacks from successful low-divergence scores.

\paragraph{Why $K=128$ buckets suffice.}
Let $p_{\vtheta}(\cdot\mid c_t)$ be the next-token distribution over $\mathcal V$ ($|\mathcal V|=151{,}936$), sorted $p_{(1)}\ge p_{(2)}\ge\cdots$, with top-$k$ coverage $C(k)=\sum_{i\le k}p_{(i)}$. Averaged over $N=768$ positions ($12$ prompts $\times$ $64$ tokens sampled from the base Qwen3-VL-8B-Instruct, recording normalized top-$2000$ log-probabilities offline so $\bar C(k)$ is exact for $k\le2000$), coverage saturates early: $\bar C(64)=0.969$, $\bar C(128)=0.979$, $\bar C(256)=0.986$, $\bar C(2000)=0.996$ (\cref{fig:bucket128}). The top-$128$ set thus holds $\approx98\%$ of the mass, a $16\times$ larger set adds only $\approx1.7$ points, and the remaining ${\approx}151.8$k entries contribute $<0.4\%$ in total, so truncating the divergence there is numerically near-lossless. The saving is in memory at the \emph{divergence} step, not in the forward. Each step scores $16$ prompts $\times$ $G{=}8$ rollouts $=128$ trajectories; for each, the ordinary view $[x;u_j]$, comprising realized tool calls, observations, and chain-of-thought (CoT), and the answer-privileged view $[x;z_j^\star;u_j]$ pass through the same frozen snapshot in a forward-only scoring pass with no sampling. Both forwards necessarily emit full-vocabulary logits, but the causal-KL is then formed only on the Phase-4 span, token by token, over the top-$128$ logits plus a lumped tail (\cref{eq:bucket}). Restricting the comparison to $129$ entries per position means the divergence never retains full-vocabulary log-probability tensors for the two views (only the top-$128$ logits and a per-position log-normalizer are kept), so this dominant KL intermediate shrinks by ${\sim}10^3\times$ (from $|\mathcal V|$ to $129$ per position per view) across the $128$ trajectories. That is what keeps the paired OPSD scoring inside the same $8\times80$GB budget as the policy, optimizer state, and rollouts, with no separate scoring cluster.

\begin{figure}[htbp]
\centering
\includegraphics[width=0.92\linewidth]{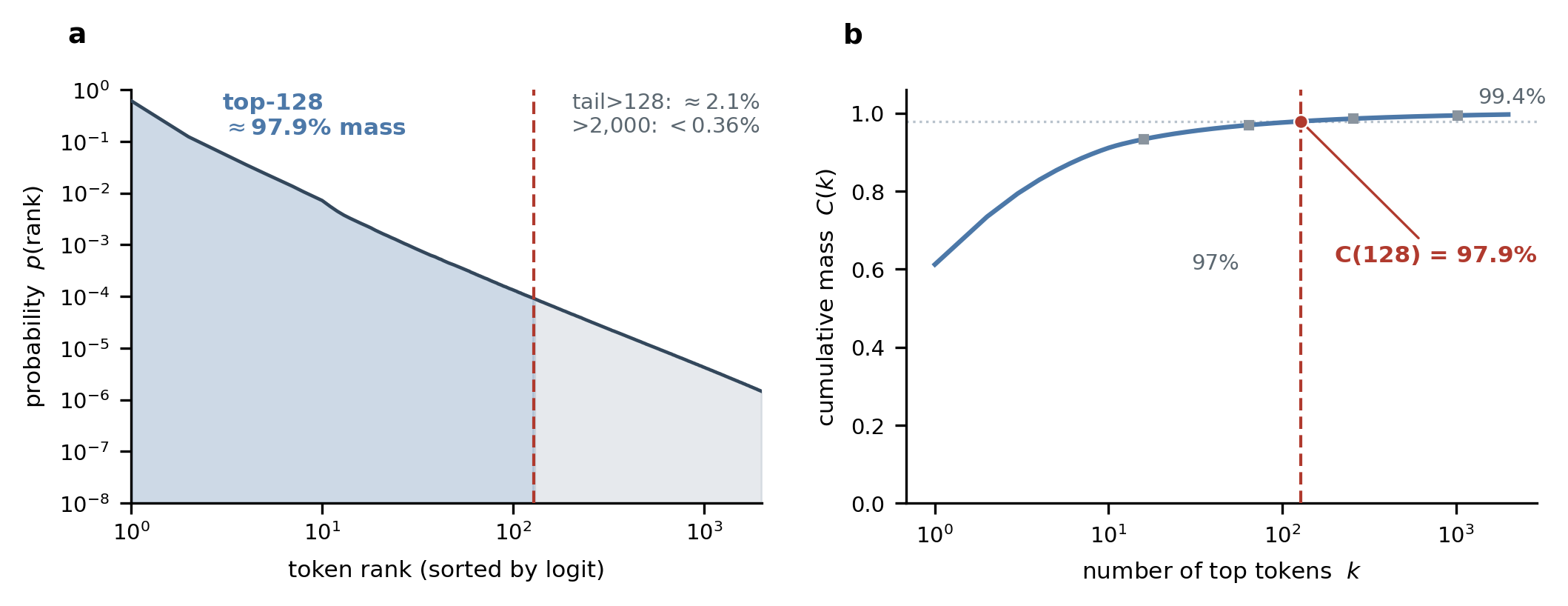}
\caption{\textbf{The top-$128$ logits carry ${\approx}98\%$ of next-token mass in Qwen3-VL-8B-Instruct.} (a) Mean sorted next-token distribution over $768$ rollout positions ($12$ prompts $\times$ $64$ tokens), log--log: ranks $\le128$ (blue) hold $97.9\%$, ranks $129$--$2000$ add $2.1\%$, and ranks $>2000$ add $<0.4\%$. (b) Cumulative coverage $C(k)$ saturates by $k{=}128$ ($C(128){=}97.9\%$; $C(64){=}96.9\%$, $C(1024){=}99.4\%$), so $K{=}128$ buckets suffice for the causal-KL term.}
\label{fig:bucket128}
\end{figure}


\section{Mathematical Properties and Scope of Privileged Self-Consistency}
\label{app:math}

We characterize vocabulary coarsening, detached reward optimization, and conditional bounds on repetitive continuations. At update $n$, both scoring distributions use the fixed snapshot $\bar{\vtheta}_n$:
$q_{n,j,t}(\cdot)=\pi_{\bar{\vtheta}_n}(\cdot\mid g_{j,t})$ and
$p_{n,j,t}(\cdot)=\pi_{\bar{\vtheta}_n}(\cdot\mid g^+_{j,t})$.
The history $g_{j,t}$ is the serialized prefix preceding $u_{j,t}$, including previously acquired multimodal observations; $g^+_{j,t}$ inserts the training privilege after the initial prompt, as in \cref{sec:opsd}. Below, $z_j$ abbreviates $z_j^\star$, $D_j=D^{\mathrm{CKL}}_{n,j}$, $\lambda=\lambda_{\mathrm{CKL}}=2$, and $R_j=R_{n,j}$; we suppress $n$ where unambiguous.
Unless stated otherwise, logarithms are natural and probabilities are evaluated before numerical flooring. None of the results below assumes that a low divergence establishes clinical correctness.

\subsection{Scoring contexts and update sequence}
\label{app:scoring}

Write the tokenization of prompt $x$ as $c_j$, the serialized policy continuation as $u_j$, and the tokenized privilege as $z_j$. The two inputs are $[c_j;u_j]$ and $[c_j;z_j;u_j]$, with identical tool calls, tool-result text, retrieved image inputs, and reasoning. Let $\bm Z^{\mathrm S}_j$ and $\bm Z^{\mathrm T}_j$ be their full-vocabulary logit matrices. Suppressing padding, define prompt length $P_j=|c_j|$ and hint length $H_j=|z_j|$. Using one-based indexing, the logit vectors predicting continuation token $u_{j,t}$ are
\begin{equation}
 \bm\ell^{\mathrm S}_{j,t}=\bm Z^{\mathrm S}_j[P_j+t-1,:],\qquad
 \bm\ell^{\mathrm T}_{j,t}=\bm Z^{\mathrm T}_j[P_j+H_j+t-1,:].
 \label{eq:app-alignment}
\end{equation}
The offset aligns prediction targets, not absolute input columns. Causal attention restricts each row to its preceding history even when the complete sequence is passed to the model at once. The teacher hint is training-only earlier context; future response tokens and future tool images are not legitimate earlier context. Padding, image order, and multimodal position metadata must be handled consistently in the actual model call.

\begin{table}[htbp]
\centering\small
\begin{tabularx}{\textwidth}{@{}lXXX@{}}
\toprule
Forward & Context & Parameters & Role in learning \\
\midrule
Ordinary scoring & Student history & Current snapshot, fixed & Produces $q$; no scoring gradient. \\
Privileged scoring & Same history plus $z_j$ & Same snapshot, fixed & Produces $p$; no scoring gradient. \\
Actor & Ordinary history & Trainable policy & Optimizes sampled assistant actions. \\
Reference & Ordinary history & Frozen SFT policy & Supplies the separate reference penalty. \\
\bottomrule
\end{tabularx}
\caption{Distinct forward roles in the specified algorithm. The external text judge is neither the OPSD teacher nor the reference policy.}
\label{tab:app-forwards}
\end{table}

\begin{algorithm}[ht]
\caption{Agentic RL with batch top-2 feedback and privileged synthesis scoring}
\label{alg:app-update}
\begin{algorithmic}[1]
\STATE Fix scoring and rollout snapshot $\bar{\vtheta}_n$; draw training prompts.
\FOR{each prompt $x$ and its training privilege $z$}
\STATE Sample $G$ interactions without $z$, choosing tools from accumulated evidence.
\STATE Retain each serialized continuation, image inputs, and assistant-action mask.
\STATE Score all eligible insights; form the group-relative gates $b_{1:G}$ using \suppref{eq:judge_gate}.
\FOR{every rollout $j=1,\ldots,G$, irrespective of correctness or judge rank}
\STATE Compute answer, format, judge, and gated linkage rewards $B_j$.
\STATE Score both ordinary and privileged inputs without gradients; align next-token rows.
\STATE Compute $D_j=D^{\mathrm{CKL}}_{n,j}$ on the approximate insight mask using student top-$128$ plus tail.
\STATE Subtract $\lambda_{\mathrm{CKL}}D_j$ in the final reward slot; set $R_j=\sg[B_j-\lambda_{\mathrm{CKL}}D_j]$.
\ENDFOR
\STATE Normalize all $G$ returns jointly to obtain the group-relative advantages.
\ENDFOR
\STATE Update vision encoder, merger, and decoder using the clipped objective and frozen-SFT reference penalty in \cref{sec:reward}.
\STATE Refresh the rollout/scoring snapshot from the updated policy for the next iteration.
\end{algorithmic}
\end{algorithm}

\subsection{Full normalization and the exact coarsening residual}

Fix one scored prefix and suppress $n,j,t$. For effective logit vectors $\bm\ell^{\mathrm S},\bm\ell^{\mathrm T}$, write $\ell^{\mathrm S}(v),\ell^{\mathrm T}(v)$ for the scalar vocabulary entries, including any scoring-temperature scaling. The full-vocabulary distributions are
\begin{equation}
q(v)=\frac{e^{\ell^{\mathrm S}(v)}}{\sum_{w\in\mathcal V}e^{\ell^{\mathrm S}(w)}},
\qquad
p(v)=\frac{e^{\ell^{\mathrm T}(v)}}{\sum_{w\in\mathcal V}e^{\ell^{\mathrm T}(w)}}.
\label{eq:app-full-normalization}
\end{equation}
Let $S=\TopK(q,K)$ with $K<|\mathcal V|$, $U=\mathcal V\setminus S$, and
$q_U=\sum_{v\in U}q(v)$, $p_U=\sum_{v\in U}p(v)$.
The partition $\mathcal P=\{\{v\}:v\in S\}\cup\{U\}$ induces
$\widetilde q=((q(v))_{v\in S},q_U)$ and
$\widetilde p=((p(v))_{v\in S},p_U)$.
Thus
\begin{equation}
d_K(q,p)=\KL(\widetilde q\Vert\widetilde p)
=\sum_{v\in S}q(v)\log\frac{q(v)}{p(v)}
 +q_U\log\frac{q_U}{p_U}.
\label{eq:app-bucket}
\end{equation}
The selected probabilities retain the full normalizers in
\suppref{eq:app-full-normalization}; they are not renormalized over $S$.

\begin{proposition}[Exact residual under vocabulary coarsening]
\label{prop:coarsening}
For strictly positive $q,p$ on the finite vocabulary,
\begin{equation}
\KL(q\Vert p)-d_K(q,p)
=q_U\KL\!\left(q(\cdot\mid U)\Vert p(\cdot\mid U)\right)\geq0.
\label{eq:app-residual}
\end{equation}
Equality holds precisely when the two conditional tail distributions agree.
\end{proposition}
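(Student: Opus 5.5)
The plan is to expand $\KL(q\Vert p)$ directly over the partition $\mathcal P$ and compare it term by term with \suppref{eq:app-bucket}. The sum over $\mathcal V$ splits into a sum over $S$ and a sum over $U$. The $S$-part coincides exactly with the singleton terms of $d_K(q,p)$, because the selected probabilities keep their full-vocabulary normalizers from \suppref{eq:app-full-normalization} and are not renormalized. Subtracting therefore leaves
\[
\KL(q\Vert p)-d_K(q,p)=\sum_{v\in U}q(v)\log\frac{q(v)}{p(v)}-q_U\log\frac{q_U}{p_U}.
\]

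Next I define the conditional tails $q(v\mid U)=q(v)/q_U$ and $p(v\mid U)=p(v)/p_U$ for $v\in U$. Strict positivity, together with $K<|\mathcal V|$ (so $U\neq\varnothing$), guarantees $q_U,p_U>0$, so these are well defined. Substituting $q(v)=q_Uq(v\mid U)$ and $p(v)=p_Up(v\mid U)$ into the tail sum gives
\[
\sum_{v\in U}q(v)\log\frac{q(v)}{p(v)}=q_U\sum_{v\in U}q(v\mid U)\log\frac{q(v\mid U)}{p(v\mid U)}+\Bigl(\sum_{v\in U}q(v)\Bigr)\log\frac{q_U}{p_U}.
\]
The last term equals $q_U\log(q_U/p_U)$ and cancels the residual-bucket term. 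What remains is exactly $q_U\KL(q(\cdot\mid U)\Vert p(\cdot\mid U))$, which is the chain rule for KL divergence applied to the coarsening map.

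For the inequality and the equality case, I invoke Gibbs' inequality on the finite set $U$. It says $\KL(q(\cdot\mid U)\Vert p(\cdot\mid U))\ge0$, with equality iff the two conditional distributions agree. Since $q_U>0$, the product $q_U\KL(\cdot)$ vanishes exactly when the KL term does, which gives the stated equality condition.

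The only delicate point is bookkeeping rather than analysis. One must confirm that the singleton terms really cancel, which relies on the non-renormalization convention. One must also handle the edge cases: $q_U>0$ holds by positivity, and if $|U|=1$ the conditionals are trivially equal, so the residual is zero, consistent with the statement.
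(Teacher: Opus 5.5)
Your proof is correct and follows the same route as the paper. Both substitute $q(v)=q_U\,q(v\mid U)$ and $p(v)=p_U\,p(v\mid U)$ into the tail sum, cancel the resulting $q_U\log(q_U/p_U)$ against the residual bucket while the singleton terms on $S$ match, and take nonnegativity and the equality case from Gibbs' inequality. You also state explicitly that $q_U>0$ (from strict positivity and $U\neq\varnothing$) is what turns the equality condition into an ``if and only if''; the paper leaves this implicit.
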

\begin{proof}
For $v\in U$, substitute $q(v)=q_Uq(v\mid U)$ and
$p(v)=p_Up(v\mid U)$ into the tail contribution to $\KL(q\Vert p)$:
\begin{equation*}
\sum_{v\in U}q(v)\log\frac{q(v)}{p(v)}
=q_U\log\frac{q_U}{p_U}
 +q_U\sum_{v\in U}q(v\mid U)
       \log\frac{q(v\mid U)}{p(v\mid U)}.
\end{equation*}
Adding the unchanged terms on $S$ proves the identity; nonnegativity and the equality condition follow from those of KL.
\end{proof}

Let $m^{\mathrm I}_{j,t}=\ind[t\in\mathcal I_j]$ mark the estimated insight region, $M_j=\max\{1,\sum_t m^{\mathrm I}_{j,t}\}$, and
$D_j^{\mathrm{full}}=M_j^{-1}\sum_t m^{\mathrm I}_{j,t}\KL(q_{n,j,t}\Vert p_{n,j,t})$.
Applying \suppref{prop:coarsening} at each prefix gives
\begin{equation}
D_j^{\mathrm{full}}-D_j
=\frac{1}{M_j}\sum_t m^{\mathrm I}_{j,t}q_{U,j,t}
 \KL\!\left(q_{n,j,t}(\cdot\mid U_{j,t})
 \Vert p_{n,j,t}(\cdot\mid U_{j,t})\right).
\label{eq:app-trajectory-residual}
\end{equation}
Consequently, $0\leq D_j\leq D_j^{\mathrm{full}}$, but a small $D_j$ does not imply a small full divergence. Small tail mass alone is insufficient without control of the conditional tail divergence. The implementation floors tail masses at $10^{-8}$; the exact identity concerns the unfloored distributions. Coarsening reduces the explicit comparison to $K+1$ events, while both vocabulary normalizers and model projections remain full-vocabulary operations.

\subsection{Detached rewards and the finite-batch policy gradient}
\label{app:gradient}

For a group of $G$ sampled interactions, let
$R_j=\sg[B_j-\lambda D_j]$,
$\bar R=G^{-1}\sum_jR_j$, and
$A_j=(R_j-\bar R)/(s_R+\epsilon)$, where $s_R$ is the group reward standard deviation. All rewards, advantages, scoring parameters, and sampled histories are held fixed during the actor update. Centering gives the exact identity
\begin{equation}
A_j=\frac{(B_j-\bar B)-\lambda(D_j-\bar D)}{s_R+\epsilon},
\qquad
\bar B=\frac1G\sum_iB_i,\quad \bar D=\frac1G\sum_iD_i.
\label{eq:app-centered-advantage}
\end{equation}
The denominator normalizes the combined reward; the expression is not a sum of independently normalized advantages.

Let $m^{\mathrm A}_{j,t}=\ind[t\in\mathcal A_j]$ mark assistant actions, including tool-call tokens, and let
$w_{j,t}\geq0$ be the fixed weights implementing the actor's reduction, with
$\sum_{j,t}w_{j,t}m^{\mathrm A}_{j,t}=1$.
Define $\rho_{j,t}(\vtheta)=
\pi_{\vtheta}(u_{j,t}\mid g_{j,t})/
\pi_{\bar{\vtheta}_n}(u_{j,t}\mid g_{j,t})$ on these actions.
Writing the reference-policy regularizer as $\mathcal K_{\mathrm{ref}}$ with coefficient $\beta_{\mathrm{ref}}=0.05$, the clipped finite-batch surrogate is
\begin{align}
\widehat{\mathcal J}_n(\vtheta)
={}&\sum_{j,t}w_{j,t}m^{\mathrm A}_{j,t}
\min\!\left\{\rho_{j,t}(\vtheta)A_j,
\clip(\rho_{j,t}(\vtheta),1-\eta,1+\eta)A_j\right\}
\nonumber\\
&-\beta_{\mathrm{ref}}\mathcal K_{\mathrm{ref}}(\vtheta).
\label{eq:app-surrogate}
\end{align}
At $\vtheta=\bar{\vtheta}_n$, the likelihood ratio is one and clipping is locally inactive. Therefore
\begin{equation}
\left.\nabla_{\vtheta}\widehat{\mathcal J}_n\right|_{\bar{\vtheta}_n}
=\sum_{j,t}w_{j,t}m^{\mathrm A}_{j,t}A_j
 \nabla_{\vtheta}\log\pi_{\vtheta}(u_{j,t}\mid g_{j,t})\big|_{\bar{\vtheta}_n}
-\beta_{\mathrm{ref}}\nabla_{\vtheta}\mathcal K_{\mathrm{ref}}(\bar{\vtheta}_n).
\label{eq:app-surrogate-gradient}
\end{equation}
Thus an insight-region score changes the coefficient of every assistant-action gradient in its trajectory, including earlier evidence-acquisition actions. Tool observations and image placeholders are not sampled actions; their representations can nevertheless receive gradients through subsequent assistant predictions. A shared trajectory advantage does not identify which individual tool call caused the reward.

For comparison, directly differentiating reverse KL at a fixed prefix against a fixed teacher $p$ would yield
\begin{equation}
\nabla_{\vtheta}\KL(q_{\vtheta}\Vert p)
=\sum_{v\in\mathcal V}q_{\vtheta}(v)
 \log\frac{q_{\vtheta}(v)}{p(v)}\,
 \nabla_{\vtheta}\log q_{\vtheta}(v).
\label{eq:app-direct-gradient}
\end{equation}
The additional constant from differentiating $q\log q$ cancels because
$\sum_v\nabla_{\vtheta} q_{\vtheta}(v)=0$.
No term of the form \suppref{eq:app-direct-gradient} is introduced by the detached OPSD reward in \suppref{eq:app-surrogate-gradient}. The two procedures differ in their first-order objectives.

If all $D_j$ in a group coincide, subtracting $\lambda D_j$ translates every reward equally: it leaves both $s_R$ and all advantages unchanged. More generally,
$R_i-R_j=(B_i-B_j)-\lambda(D_i-D_j)$.
For equal base rewards, the smaller divergence receives the larger reward, but this ranking is not a guarantee that subsequent conditional divergences decrease.
\Suppref{eq:app-surrogate-gradient} is a derivative of the stated finite-batch surrogate, not a claim of an unbiased trajectory-policy-gradient estimator when rollout uses nucleus truncation. Detaching and periodically refreshing the score does not establish convergence to a fixed objective.

\subsection{A conditional interpretation of repetition}
\label{app:repetition}

At a fixed prefix, let $E\subseteq\mathcal V$ contain next tokens that extend a specified repetition pattern. This event may depend on the prefix. Write
$\operatorname{kl}(a\Vert b)=a\log(a/b)+(1-a)\log((1-a)/(1-b))$ for binary KL.

\begin{proposition}[Control of a specified continuation event]
\label{prop:repetition-event}
For any $E$,
\begin{equation}
\operatorname{kl}(q(E)\Vert p(E))\leq\KL(q\Vert p),
\qquad
q(E)\leq p(E)+\sqrt{\KL(q\Vert p)/2}.
\label{eq:app-repeat-full}
\end{equation}
If $E$ is a union of cells of $\mathcal P$, both occurrences of the full KL may instead be replaced by $d_K(q,p)$.
\end{proposition}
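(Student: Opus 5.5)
The plan is to obtain all three claims from one tool, the data-processing inequality for KL under a deterministic coarsening, followed by Pinsker's inequality on a two-point space.

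\textbf{Binary bound.} Consider the two-cell partition $\{E,\mathcal V\setminus E\}$. The induced distributions are the Bernoulli laws $(q(E),1-q(E))$ and $(p(E),1-p(E))$, and their KL divergence is exactly $\operatorname{kl}(q(E)\Vert p(E))$. I would prove the needed inequality directly with the log-sum inequality, to stay self-contained. Applying it to the terms indexed by $E$ gives
\begin{equation*}
\sum_{v\in E}q(v)\log\frac{q(v)}{p(v)}\geq q(E)\log\frac{q(E)}{p(E)},
\end{equation*}
and the same inequality holds on the complement. Adding the two yields $\operatorname{kl}(q(E)\Vert p(E))\leq\KL(q\Vert p)$. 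Strict positivity of $q$ and $p$, as in \suppref{prop:coarsening}, keeps every logarithm finite. The degenerate cases $q(E)\in\{0,1\}$ are handled by the usual convention $0\log 0=0$.

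\textbf{Additive bound.} Apply Pinsker's inequality to the Bernoulli pair: $\operatorname{kl}(a\Vert b)\geq 2(a-b)^2$. This is the two-point case of Pinsker, and it can be checked by fixing $a$ and differentiating in $b$. Combining it with the binary bound gives
\begin{equation*}
|q(E)-p(E)|\leq\sqrt{\KL(q\Vert p)/2},
\end{equation*}
and dropping the absolute value gives the stated one-sided inequality.

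\textbf{Coarsened version.} Suppose $E$ is a union of cells of $\mathcal P$. Then the partition $\{E,\mathcal V\setminus E\}$ is a further coarsening of $\mathcal P$. By construction, $\widetilde q$ and $\widetilde p$ assign to $E$ the masses $q(E)$ and $p(E)$: either $E$ contains $U$ and picks up $q_U$ and $p_U$, or it consists only of singletons from $S$. So I would run the same log-sum argument on the $K+1$ cells of $\mathcal P$ rather than on $\mathcal V$. This gives
\begin{equation*}
\operatorname{kl}(q(E)\Vert p(E))\leq\KL(\widetilde q\Vert\widetilde p)=d_K(q,p),
\end{equation*}
and Pinsker then gives the additive form with $d_K$. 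As a consistency check, \suppref{prop:coarsening} gives $d_K\leq\KL$, so the coarsened bound is the sharper one.

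\textbf{Main obstacle.} None of the steps is deep. The only care needed is in the coarsened case: I must confirm that the bucket masses equal the true event probabilities. This relies on the fact noted after \suppref{eq:app-bucket}, namely that the top-$K$ probabilities are not renormalized. If they were renormalized over $S$, the cell masses would no longer be $q(E)$ and $p(E)$, and the coarsened claim would fail.
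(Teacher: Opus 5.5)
Your proposal is correct and follows the paper's proof essentially step for step: the log-sum inequality on $E$ and its complement, binary Pinsker, and then the same argument rerun on $\widetilde q,\widetilde p$. For that last step you correctly note that the masses of $\widetilde q,\widetilde p$ on $E$ coincide with $q(E),p(E)$ because the top-$K$ probabilities are not renormalized.
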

\begin{proof}
Applying the log-sum inequality separately to $E$ and its complement gives the first bound. Binary Pinsker gives
$|q(E)-p(E)|\leq\sqrt{\operatorname{kl}(q(E)\Vert p(E))/2}$.
If $E$ is a union of partition cells, the same argument applies to
$\widetilde q,\widetilde p$, whose masses on $E$ equal those of $q,p$.
\end{proof}

The result is conditional: if privilege reduces the teacher's repetition-event mass to at most $\alpha$, and the relevant divergence is at most $d$, then the student's mass is at most $\alpha+\sqrt{d/2}$. Neither premise follows merely from supplying an answer. If $E$ includes only part of the tail, bucket KL need not control it; probability can move within the tail without changing the score. Likewise, an average over the approximate mask $m^{\mathrm I}$ does not imply a uniform bound at every synthesis position. These statements do not guarantee fewer repeated sequences, lower entropy, or improved accuracy after an RL update. The proposed explanation---that answer privilege can resolve uncertainty underlying repetitive continuation---therefore requires matched empirical tests.

\begin{remark}[Mechanism of the repetition contrast]
\label{rem:repetition-mechanism}
Consider a Phase-4 prefix that has already stated the decisive finding and the answer, then continues to restate it. At the next position, the ordinary view $q$ retains mass on connectives that prolong restatement, whereas the answer-privileged view $p$---conditioned on a resolved conclusion---shifts mass toward the answer label and end-of-sequence. The event $E$ of \cref{prop:repetition-event} is then precisely this repetitive-continuation mass, with $q(E)\gg p(E)$, so the reverse divergence $\KL(q\Vert p)$ is large exactly where the policy spins without committing. Because the detached penalty of \cref{eq:reward} makes a lower $D^{\mathrm{CKL}}_{n,j}$ a higher return, the group-relative gradient favors rollouts whose ordinary view already resembles the privileged one, reducing $q(E)$ over training without prescribing the retained content. Privilege does not supply the conclusion; it makes an already-reachable conclusion the locally preferred continuation, which is the premise $p(E)\le\alpha$ that \cref{prop:repetition-event} requires. We verify this behavior through the Rep-8 diagnostic of \cref{eq:repetition_metric} and the late-training completion behavior of the ablation arms in \cref{tab:learning_ablation}.
\end{remark}

\subsection{Support, degeneracy, and the evolving self-teacher}

Reverse KL does not permit unrestricted escape from teacher support: if
$q(v)>0$ and $p(v)=0$, the full $\KL(q\Vert p)$ is infinite. With finite softmax logits all probabilities are positive, but assigning student mass to a low-probability teacher event still incurs a cost. Coarsening constrains selected events and total tail mass, while allowing discrepancies inside the tail. Its flexibility concerns alternative supported continuations, not guaranteed superiority to the teacher.

Low divergence can also be uninformative. A model that ignores the privilege can satisfy $q=p$ while producing an incorrect or unsupported explanation; bucket equality additionally allows unequal full distributions within the tail. Answer correctness and rubric feedback supply complementary signals, but their combination does not eliminate all such failures or certify image grounding. Refreshing both conditional distributions with $\bar{\vtheta}_n$ makes the self-teacher evolve with the policy. It neither establishes that privilege makes that teacher more accurate at every update nor implies monotonic improvement, a capacity increase, or eventual performance beyond the demonstration policy. Moreover, the two prompted conditionals are not assumed to be compatible marginals of one Bayesian model, so the score is not identified with conditional mutual information. Broader conditioning and generalization pitfalls in on-policy distillation are discussed by \citet{zhu2026manyfaces}.

\end{document}